\newcommand{\ubar}[1]{\underaccent{\bar}{#1}}
\newcommand{\sync}{SE-Sync}
\newcommand{\syncSpace}{\sync\xspace}
\newcommand{\titlestring}{\sync:  A Certifiably Correct Algorithm for Synchronization over the Special Euclidean Group}
\newcommand{\authorstring}{David M.\ Rosen, Luca Carlone,  Afonso S.\ Bandeira, and John J.\ Leonard}
\let\OldStatex\Statex
\renewcommand{\Statex}[1][0]{%
  \setlength\@tempdima{\algorithmicindent}%
  \OldStatex\hskip\dimexpr#1\@tempdima\relax}
\algnewcommand\algorithmicinput{\textbf{Input:}}
\algnewcommand\Input{\item[\algorithmicinput]}
\algnewcommand\algorithmicoutput{\textbf{Output:}}
\algnewcommand\Output{\item[\algorithmicoutput]}
\newcommand{\R}{\mathbb{R}}
\newcommand{\N}{\mathbb{N}}
\DeclareMathOperator{\Sym}{Sym}
\DeclareMathOperator{\tr}{tr}
\DeclareMathOperator{\Span}{span}
\DeclareMathOperator{\rank}{rank}
\DeclareMathOperator{\Diag}{Diag}
\DeclareMathOperator \vect{vec}
\def \ones{\mathbbm{1}}
\def \transpose{^\mathsf{T}}
\def \inv{^{-1}}
\def \tinv{^{-\mathsf{T}}}
\def \pinv {^\dagger}
\DeclareMathOperator{\image}{image}
\DeclareMathOperator{\Stiefel}{St}
\DeclareMathOperator{\DirectionalDerivative}{D}
\DeclareMathOperator{\grad}{grad}
\DeclareMathOperator{\Hess}{Hess}
\DeclareMathOperator{\proj}{Proj}
\DeclareMathOperator{\Orthogonal}{O}
\DeclareMathOperator{\SO}{SO}
\DeclareMathOperator{\SE}{SE}
\DeclareMathOperator*{\argmax}{argmax}
\DeclareMathOperator*{\argmin}{argmin}
\newcommand{\st}{\textnormal{s.t.\; }}
\def \Lag {\mathcal{L}}
\DeclareMathOperator{\SD}{SD}
\DeclareMathOperator{\Uniform}{\mathcal{U}}
\DeclareMathOperator{\Gaussian}{\mathcal{N}}  
\DeclareMathOperator{\Langevin}{Langevin}  
\DeclareMathOperator{\vonMises}{vM}
\def \Nodes {\mathcal{V}}  
\def \Edges {\mathcal{E}}  
\newcommand{\directed}[1]{\vec{#1}}  
\def \dEdges{\directed{\Edges}}  
\def \edge{\lbrace i,j \rbrace}  
\def \dedge{(i,j)}  
\def \incEdges{\delta}
\def \inEdges{\delta^{+}}
\def \outEdges{\delta^{-}}
\def\Lap{L}  
\def\incMat{A}  
\def\redIncMat{\ubar{A}}  
\def \OrthoProj{\pi}
\def \OrthoProjMatrix{\Pi}
\newtheorem{thm}{Theorem}
\newtheorem{lem}[thm]{Lemma}
\newtheorem{prop}[thm]{Proposition}
\theoremstyle{definition}
\newtheorem{problem}{Problem}
\def \CSAIL{Computer Science and Artificial Intelligence Laboratory}
\def \LIDS{Laboratory for Information and Decision Systems}
\def \MathDep{Department of Mathematics}
\def \MIT{Massachusetts Institute of Technology}
\def \MITaddr{Cambridge, MA 02139, USA}
\def \CourInst{Courant Institute of Mathematical Sciences}
\def \CDS{Center for Data Science}
\def \NYU{New York University}
\def \NYUaddr{New York, NY 10012, USA}
\title{\titlestring\footnote{This report is an extended version of a paper presented at the 12\textsuperscript{th} International Workshop on the Algorithmic Foundations of Robotics \cite{Rosen2016Certifiably}. }}
\date{}
\author{David M.\ Rosen\thanks{Corresponding author.  Email: \href{mailto:dmrosen@mit.edu}{dmrosen@mit.edu}}}
\affil{\CSAIL, \MIT, \MITaddr}
\author{Luca Carlone}
\affil{\LIDS, \MIT, \MITaddr}
\author{Afonso S.\ Bandeira}
\affil{\MathDep\ and \CDS, \CourInst, \NYU, \NYUaddr}
\author[1]{John J.\ Leonard}
\DeclareMathOperator{\BDiag}{BlockDiag}
\DeclareMathOperator{\SymBlockDiag}{SymBlockDiag}
\DeclareMathOperator{\SBD}{SBD}  
\def \pose{x}
\def \tran{t}
\def \rot{R}
\newcommand{\true}[1]{\ubar{#1}}
\newcommand{\noisy}[1]{\tilde{#1}}
\newcommand{\est}[1]{\hat{#1}}
\def \tpose{\true{\pose}}
\def \ttran{\true{\tran}}
\def \trot{\true{\rot}}
\def \optsym{*}
\def \topt{\tran^\optsym}
\def \Ropt{\rot^\optsym}
\def \poseopt{\pose^\optsym}
\def \Lopt{\Lambda^\optsym}
\def \Yopt{Y^\optsym}
\def \Zopt{Z^\optsym}
\def \npose{\noisy{\pose}}
\def \ntran{\noisy{\tran}}
\def \nrot{\noisy{\rot}}
\def \tranNoise{\tran^\epsilon}
\def \rotNoise{\rot^\epsilon}
\def \RotEst{\est{\rot}}
\def \TranEst{\est{\tran}}
\def \PoseEst{\est{\pose}}
\def \MLEtext{\textnormal{MLE}}
\def \PosesMLE{\PoseEst_{\MLEtext}}
\def \MLEval{p_{\textnormal{MLE}}^*}  
\def \OMLEval{p_\textnormal{O}^*}  
\def \SDPval{p_{\textnormal{SDP}}^*}  
\def \SDPLRval{p_{\textnormal{SDPLR}}^*}
\def \rotsym{\rho}  
\def \transym{\tau}  
\def \rotdeg{d^{\rotsym}} 
\def \TranW{W^{\transym}}  
\def \LapTranW{\Lap(\TranW)}  
\def \RotW{W^{\rotsym}}  
\def \LapRotW{\Lap(\RotW)}  
\def \MeasRotGraph{\noisy{G}^\rotsym}  
\def \TrueRotGraph{\true{G}^\rotsym}   
\def \MeasRotConLap{\Lap(\MeasRotGraph)}  
\def \TrueRotConLap{\Lap(\TrueRotGraph)}  
\def \nCrossTerms{\noisy{V}}  
\def \nOuterProducts{\noisy{\Sigma}}  
\def \QPFormMatrix{M}
\def \tQ{\true{Q}}
\def \nQ{\noisy{Q}}  
\def \dQ{\Delta Q}
\def \dR{\Delta \rot}
\def \nQtran{\noisy{Q}^\transym}  
\def \tQtran{\true{Q}^\transym}  
\def \tranPrecisions{\Omega}  
\def \nT{\noisy{T}}  
\def \tT{\true{T}}
\def \manifold{\mathcal{M}}
\DeclareMathOperator{\Sorbit}{\mathcal{S}}
\DeclareMathOperator{\Oorbit}{\mathcal{O}}
\def \Oorbdist{d_{\mathcal{O}}}
\def \Sorbdist{d_{\mathcal{S}}}
\def \exactnessBound{\beta}
\def \certMat{C}
\begin{document}

\maketitle

\begin{abstract}
Many important geometric estimation problems naturally take the form of \emph{synchronization over the special Euclidean group}: estimate the values of a set of unknown group elements $\pose_1, \dotsc, \pose_n \in \SE(d)$ given noisy measurements of a subset of their pairwise relative transforms $\pose_i\inv \pose_j$.  Examples of this class include the foundational problems of pose-graph simultaneous localization and mapping (SLAM) (in robotics), camera motion estimation (in computer vision), and sensor network localization (in distributed sensing), among others.  This inference problem is typically formulated as a nonconvex maximum-likelihood estimation that is computationally hard to solve in general.  Nevertheless, in this paper we present an algorithm that is able to efficiently recover \emph{certifiably globally optimal} solutions of the special Euclidean synchronization problem in a non-adversarial noise regime.  The crux of our approach is the development of a semidefinite relaxation of the maximum-likelihood estimation whose minimizer provides an \emph{exact} MLE so long as the magnitude of the noise corrupting the available measurements falls below a certain critical threshold; furthermore, whenever exactness obtains, it is possible to \emph{verify} this fact \emph{a posteriori}, thereby \emph{certifying} the optimality of the recovered estimate.  We develop a specialized optimization scheme for solving large-scale instances of this semidefinite relaxation by exploiting its low-rank, geometric, and graph-theoretic structure to reduce it to an equivalent optimization problem defined on a low-dimensional Riemannian manifold, and  then design a Riemannian truncated-Newton trust-region method to solve this reduction efficiently.  Finally, we combine this fast optimization approach with a simple rounding procedure to produce our algorithm, \emph{\sync}.  Experimental evaluation on a variety of simulated and real-world pose-graph SLAM datasets shows that \syncSpace is capable of recovering certifiably globally optimal solutions when the available measurements are corrupted by noise up to an order of magnitude greater than that typically encountered in robotics and computer vision applications, and  does so more than an order of magnitude faster than the Gauss-Newton-based approach that forms the basis of current state-of-the-art techniques.

\end{abstract}

\newpage
\tableofcontents
 \newpage

\section{Introduction}

\subsection{Motivation}

Many important geometric estimation problems naturally take the form of \emph{synchronization over the special Euclidean group}: estimate the values of a collection of unknown group elements $\pose_1, \dotsc, \pose_n \in \SE(d)$ given noisy observations of a subset of their pairwise relative transforms $\pose_i \inv \pose_j$.\footnote{More generally, \emph{synchronization over a group $G$} is the problem of estimating a set of unknown group elements $g_1, \dotsc, g_n \in G$ given noisy observations of a subset of their pairwise relative transforms $g_i \inv g_j$ \cite{Singer2011Angular}.  The nomenclature originates with the   prototypical example of this class of problems: synchronization of clocks over a communications network \cite{Karp2003Optimal,Giridhar2006Distributed} (corresponding to synchronization over the additive group $\R$). }  For example, the foundational problems of pose-graph simultaneous localization and mapping (SLAM) \cite{Lu1997Globally,Grisetti2010Tutorial} (in robotics), camera motion and/or orientation estimation  \cite{Arrigoni2015Spectral,Hartley2013Rotation,Tron2016Survey} (in computer vision), and sensor network localization \cite{Peters2015Sensor} (in distributed sensing) all belong to this class, and closely-related formulations arise in applications as diverse as artifact reconstruction \cite{Brown2008System} (in archaeology) and molecular structure determination \cite{Singer2011Three,Bandeira2015Nonunique} (in chemistry).  Consequently, the development of \emph{fast} and \emph{reliable} methods to solve the special Euclidean synchronization problem is of considerable practical import.

\subsection{Related work}

In general, group synchronization problems are formulated as instances of maximum-likelihood estimation under an assumed probability distribution for the measurement noise.  This formulation is attractive from a theoretical standpoint due to the powerful analytical framework and strong performance guarantees that maximum-likelihood estimation affords \cite{Ferguson1996Course}.  However, this formal rigor often comes at the expense of computational tractability, as it is frequently the case that the optimization problem underlying an instance of maximum-likelihood estimation is nonconvex, and therefore computationally hard to solve in general.  Unfortunately, this turns out to be the case for the special Euclidean synchronization problem in particular, due to the nonconvexity of $\SE(d)$ itself.

 Given the fundamental computational hardness of nonconvex optimization, prior work on special Euclidean synchronization (and related estimation problems) has predominantly focused on the development of \emph{approximate} algorithms that can efficiently compute high-quality estimates in practice.   These approximate algorithms can be broadly categorized into two classes.  
 
 The first class consists of algorithms that are based upon the (heuristic) application of fast \emph{local} search techniques to identify promising estimates.  This approach is particularly attractive for robotics applications, as the high computational speed of first- and second-order smooth nonlinear programming methods \cite{Nocedal2006Numerical}, together with their ability to exploit the measurement sparsity that typically occurs in naturalistic problem instances \cite{Dellaert2006Square}, enables these techniques to scale effectively to large problems while maintaining real-time operation with limited computational resources. Indeed, there now exist a variety of mature algorithms and software libraries implementing this approach that are able to process special Euclidean synchronization problems involving tens to hundreds of thousands of latent states in real time using only a single thread on a commodity processor \cite{Dellaert2010Subgraph,Olson2006Fast,Grisetti2009Nonlinear,Kuemmerle2011g20,Kaess2012iSAM2ijrr,Rosen2014RISE,Lourakis2009SBA,Konolige2010Sparse}.  However, the restriction to \emph{local} search leaves these methods vulnerable to convergence to significantly suboptimal local minima, even for relatively low levels of measurement noise \cite{Carlone2015Lagrangian}.  Furthermore, even when these techniques \emph{do} succeed in recovering a globally optimal solution, they provide no means of \emph{verifying} that this is so.
 
As an alternative to local search, the second class of algorithms employs \emph{convex relaxation}: in this approach, one modifies the original estimation problem so as to obtain a convex approximation that can be (\emph{globally}) solved efficiently.  Recent work has proposed a wide variety of convex relaxations for special Euclidean synchronization and related estimation problems, including linear \cite{Carlone2015Initialization,Martinec2007Robust}, spectral \cite{Singer2011Angular,Arrigoni2015Spectral,Cucuringu2012Sensor,Bandeira2013Cheeger} and semidefinite \cite{Singer2011Angular,Rosen2015Approximate,Wang2013Exact,Ozyesil2015Stable} formulations, among others.  The advantage of these techniques is that the convex surrogates they employ generally capture the global structure of the original problem well enough that their solutions lie near high-quality regions of the search space for the original estimation problem.  However, as these surrogates are typically obtained from the original problem by relaxing constraints, their minimizers are generally infeasible for the original estimation problem, and therefore must be (potentially suboptimally) reprojected onto the original problem's feasible set.\footnote{This reprojection operation is often referred to as \emph{rounding}.}

Motivated by these considerations, in our previous work \cite{Carlone2015Lagrangian,Carlone2016Planar} we considered the following simpler but related \emph{verification problem}:  given a candidate solution $\PoseEst = (\PoseEst_1, \dotsc, \PoseEst_n) \in \SE(d)^n$ for a special Euclidean synchronization problem (obtained, for example, using a fast approximate algorithm), provide an upper bound on $\PoseEst$'s suboptimality.  In the course of that investigation, we employed Lagrangian duality to develop a convex relaxation of the special Euclidean synchronization problem that we observed is frequently  \emph{tight}.\footnote{A relaxation is said to be \emph{tight} if it attains the same optimal value as the problem from which it is derived.}  Furthermore, we showed that whenever tightness obtains, there is a simple (linear) relation between primal-dual pairs $(\pose^*, \lambda^*)$ of optimal solutions for the synchronization problem and its relaxation that enables the recovery of one from the other.  Together, these observations enabled us to develop an algorithm for \emph{computationally certifying} the correctness (global optimality) of a candidate solution $\PoseEst$ by constructing from it the corresponding \emph{dual certificate} $\hat{\lambda}$.  Moreover, we observed that any candidate solution $\PoseEst$ whose optimality could be \emph{certified} using this verification procedure could also be \emph{directly computed} by solving the Lagrangian dual problem; this established (at least in principle), the existence of an algorithm that is capable of recovering \emph{certifiably globally optimal} solutions of special Euclidean synchronization problems by means of convex programming.

However, the Lagrangian relaxation developed in our verification procedure turned out to be a \emph{semidefinite program}  \cite{Vandenberghe1996Semidefinite}, and while it is always possible to solve such problems in polynomial time using interior-point methods, in practice the high computational cost of these techniques prevents them from scaling effectively to problems in which the dimension of the decision variable is greater than a few thousand \cite{Todd2001Semidefinite}.  Unfortunately, the semidefinite relaxations corresponding to real-world instances of $\SE(d)$ synchronization arising in (for example) robotics and computer vision applications are typically one to two orders of magnitude larger than this maximum effective problem size, and are therefore well beyond the reach of these general-purpose techniques.\footnote{This includes the most commonly-used semidefinite programming software libraries, including SDPA \cite{Yamashita2003Implementation}, SeDuMi \cite{Sturm1999SeDuMi}, SDPT3 \cite{Toh1999SDPT3}, CSDP \cite{Borchers1999CSDP}, and DSDP \cite{Benson2000Solving}.}  It was thus not clear on the basis of our prior work \cite{Carlone2015Lagrangian,Carlone2016Planar} alone whether this convex relaxation approach could be implemented as a practically-effective algorithm for $\SE(d)$ synchronization.

\subsection{Contribution}

The main contribution of the present paper is to show that the semidefinite relaxation approach proposed in our prior work  \cite{Carlone2015Lagrangian,Carlone2016Planar} can indeed be realized as a practical algorithm that is capable of efficiently recovering \emph{certifiably globally optimal} solutions of large-scale special Euclidean synchronization problems in a non-adversarial (but operationally relevant) noise regime.  More specifically, our contribution consists of the following elements:

\begin{itemize}
 \item We develop improved formulations of the special Euclidean synchronization problem (Problem \ref{Simplified_maximum_likelihood_estimation_for_SE3_synchronization}) and its semidefinite relaxation (Problem \ref{dual_semidefinite_relaxation_for_SE3_synchronization_problem}) that both simplify and generalize the versions originally presented in \cite{Carlone2015Lagrangian,Carlone2016Planar}.
 
 
 \item With the aid of these improved formulations, we prove the existence of a non-adversarial noise regime within which minimizers of the semidefinite relaxation Problem \ref{dual_semidefinite_relaxation_for_SE3_synchronization_problem} provide \emph{exact}, \emph{globally optimal} solutions of the $\SE(d)$ synchronization problem (Proposition \ref{A_sufficient_condition_for_exact_recovery_prop}).
 
 \item Finally, we develop a specialized  optimization scheme that is capable of efficiently solving large-scale instances of the semidefinite relaxation in practice (Section \ref{optimization_approach_subsection}).  Combining this fast optimization approach with a simple rounding method (Algorithm \ref{Rounding_algorithm}) produces \emph{\syncSpace} (Algorithm \ref{SE_sync_algorithm}), our proposed algorithm for synchronization over the special Euclidean group.
\end{itemize}

\syncSpace is thus a \emph{certifiably correct} algorithm \cite{Bandeira2016Probably}, meaning that it is capable of efficiently solving a generally-intractable problem within a restricted operational regime, and of \emph{computationally certifying} the correctness of the solutions that it recovers.  Intuitively, algorithms belonging to this class give up the ability to solve \emph{every} instance of a problem in order to achieve efficient computation on a subset that contains a large fraction of those instances \emph{actually encountered in practice}.  In the case of our algorithm, experimental evaluation on a  variety of simulated and real-world datasets drawn from the motivating application of pose-graph SLAM (Section \ref{Experimental_results_section}) shows that \syncSpace is capable of recovering certifiably globally optimal solutions when the available measurements are corrupted by noise up to an order of magnitude greater than that typically encountered in robotics and computer vision applications, and does so more than an order of magnitude faster than the Gauss-Newton-based approach that forms the basis of current state-of-the-art techniques \cite{Kuemmerle2011g20,Kaess2012iSAM2ijrr,Rosen2014RISE}.

%
\section{Problem formulation}
\label{Problem_formulation_and_main_results_section}

\subsection{Notation and mathematical preliminaries}
\label{Notation_subsection}

\textbf{Miscellaneous sets:}  The symbols $\N$ and $\R$ denote the nonnegative integers and the real numbers, respectively, and we write $[n] \triangleq \lbrace 1, \dotsc, n \rbrace$ for $n > 0$ as a convenient shorthand notation for sets of indexing integers.  We use $\lvert S \rvert$ to denote the cardinality of a set $S$.

\textbf{Differential geometry and Lie groups:}  We will encounter several smooth manifolds and Lie groups over the course of this paper, and will often make use of both the \emph{intrinsic} and \emph{extrinsic} formulations of the same manifold as convenience dictates; our notation will generally be consistent with that of \cite{Warner1983Manifolds} in the former case and \cite{Guillemin1974Differential} in the latter.  When considering an extrinsic realization $\manifold \subseteq \R^d$ of a manifold $\manifold$ as an embedded submanifold of a Euclidean space and a function $f \colon \R^d \to \R$, it will occasionally be important for us to distinguish the notions of $f$ considered as a function on $\R^d$ and $f$ considered as a function on the submanifold $\manifold$; in these cases, to avoid confusion we will always reserve $\nabla f$ and $\nabla^2 f$ for the gradient and Hessian of $f$ with respect to the usual metric on $\R^d$, and write $\grad f$ and $\Hess f$ to refer to the Riemannian gradient and Hessian of $f$ considered as a function on $\manifold$ (equipped with the metric inherited from its embedding) \cite{Boothby2003Riemannian,Kobayashi1996Foundations}.

We let $\Orthogonal(d)$, $\SO(d)$, and $\SE(d)$ denote the orthogonal, special orthogonal, and special Euclidean groups in dimension $d$, respectively.  For computational convenience we will often identify the (abstract) Lie groups $\Orthogonal(d)$ and $\SO(d)$ with their realizations as the matrix groups:
\begin{subequations}
 \label{orthogonal_and_special_orthogonal_matrix_groups}
 \begin{equation}
 \label{orthogonal_matrix_group_definition}
 \Orthogonal(d) \cong \lbrace R \in \R^{d \times d} \mid R\transpose R = R R\transpose = I_d \rbrace
 \end{equation}
\begin{equation}
\label{special_orthogonal_matrix_group_definition}
 \SO(d) \cong \lbrace R \in \R^{d \times d} \mid R\transpose R = R R\transpose = I_d, \; \det(R) = +1 \rbrace,
\end{equation}
\end{subequations}
and $\SE(d)$ with the semidirect product $\R^d \rtimes \SO(d)$ with group operations:
\pagebreak
\begin{subequations}
\label{SE3_group_operations}
\begin{equation}
\label{SE3_multiplication_rule}
(\tran_1, \rot_1) \cdot (\tran_2, \rot_2) = (\tran_1 +\rot_1 \tran_2, \rot_1 \rot_2).
\end{equation}
\begin{equation}
 \label{SE3_inversion_rule}
 (\tran, \rot)^{-1} = (-\rot^{-1} \tran, \rot^{-1}).
\end{equation}
\end{subequations}
The set of orthonormal $k$-frames in $\R^n$ ($k \le n$):
\begin{equation}
 \label{Stiefel_manifold_definition}
 \Stiefel(k, n) \triangleq \left \lbrace Y \in \R^{n \times k} \mid Y\transpose Y = I_k \right \rbrace
\end{equation}
is also a smooth compact matrix manifold, called the \emph{Stiefel manifold}, and we equip $\Stiefel(k,n)$ with the Riemannian metric induced by its embedding into $\R^{n \times k}$ \cite[Sec.\ 3.3.2]{Absil2009Optimization}.

\textbf{Linear algebra:}  In addition to the matrix groups defined above, we write $\Sym(d)$ for the set of real $d \times d$ symmetric matrices; $A \succeq 0$ and $A \succ 0$ indicate that $A \in \Sym(d)$ is positive semidefinite and positive definite, respectively.  For general matrices $A$ and $B$, $A \otimes B$ indicates the Kronecker (matrix tensor) product, $A\pinv$ the Moore-Penrose pseudoinverse, and $\vect(A)$ the vectorization operator that concatenates the columns of $A$ \cite[Sec.\ 4.2]{Horn1991Topics}.  We write $e_i \in \R^d$ and $E_{ij} \in \R^{m \times n}$ for the $i$th unit coordinate vector and $(i,j)$-th unit coordinate matrix, respectively, and $\ones_d \in \R^d$ for the all-$1$'s vector.  Finally, $\lVert \cdot \rVert_2$, $\lVert \cdot \rVert_F$, and $\lVert \cdot \rVert_*$ denote the spectral, Frobenius, and nuclear matrix norms, respectively.  

We will also frequently consider various $(d \times d)$-block-structured matrices, and it will be useful to have specialized operators for them.  To that end, given square matrices $A_i \in \R^{d \times d}$ for $i \in [n]$, we let $\Diag(A_1, \dotsc, A_n)$ denote their matrix direct sum. Conversely, given a $(d \times d)$-block-structured matrix $M\in \R^{dn \times dn}$ with $ij$-block $M_{ij} \in \R^{d \times d}$, we let $\BDiag_d(M)$ denote the linear operator that extracts $M$'s $(d\times d)$-block diagonal:
\begin{equation}
 \label{BDiag_operator_definition}
 \BDiag_d(M) \triangleq \Diag(M_{11}, \dotsc, M_{nn}) = 
 \begin{pmatrix}
  M_{11}  \\
  
  & \ddots \\
  & & M_{nn}
 \end{pmatrix}
\end{equation}
and $\SymBlockDiag_d$ its corresponding symmetrized form:
\begin{equation}
 \label{SymBlockDiag_operator_definition}
 \SymBlockDiag_d(M) \triangleq \frac{1}{2} \BDiag_d\left(M + M\transpose\right).
\end{equation}
Finally, we let $\SBD(d, n)$ denote the set of symmetric $(d \times d)$-block-diagonal matrices in $\R^{dn \times dn}$:
\begin{equation}
 \SBD(d, n) \triangleq \lbrace \Diag(S_1, \dotsc, S_n) \mid S_1, \dotsc, S_n \in \Sym(d) \rbrace.
\end{equation}

\textbf{Graph theory:} An \emph{undirected} \emph{graph} is a pair $G = (\Nodes, \Edges)$, where $\Nodes$ is a finite set and $\Edges$ is a set of unordered pairs $\edge$ with $i, j \in \Nodes$ and $i \ne j$.  Elements of $\Nodes$ are called  \emph{vertices} or \emph{nodes}, and elements of $\Edges$ are called \emph{edges}.  An edge $e = \edge \in \Edges$ is said to be \emph{incident} to the vertices $i$ and $j$;  $i$ and $j$ are called the \emph{endpoints} of $e$.  We write $\incEdges(v)$ for the set of edges incident to a vertex $v$.

A \emph{directed} \emph{graph} is a pair $\directed{G} = (\Nodes, \dEdges)$, where $\Nodes$ is a finite set and $\dEdges \subset \Nodes \times \Nodes$ is a set of \emph{ordered} pairs $\dedge$ with $i \ne j$.\footnote{Note that our definitions of directed and undirected graphs exclude loops and parallel edges.  While all of our results can be straightforwardly generalized to admit parallel edges (and indeed our experimental implementation of \syncSpace supports them), we have adopted this restriction in order to simply the following  presentation.} As before, elements of $\Nodes$ are called \emph{vertices} or \emph{nodes}, and elements of $\dEdges$ are called (\emph{directed}) \emph{edges} or \emph{arcs}.  Vertices $i$ and $j$ are called the \emph{tail} and \emph{head} of the directed edge $e = \dedge$, respectively; $e$ is said to \emph{leave} $i$ and \emph{enter} $j$ (we also say that $e$ is \emph{incident} to $i$ and $j$ and that $i$ and $j$ are $e$'s \emph{endpoints}, as in the case of undirected graphs).  We let $t, h \colon \dEdges \to \Nodes$ denote the functions mapping each edge to its tail and head, respectively, so that $t(e) = i$ and $h(e) = j$ for all $e = \dedge \in \dEdges$.   Finally, we again let $\incEdges(v)$ denote the set of directed edges incident to $v$, and $\outEdges(v)$ and $\inEdges(v)$ denote the sets of edges leaving and entering vertex $v$, respectively.

Given an undirected graph $G = (\Nodes, \Edges)$, we can construct a directed graph $\directed{G} = (\Nodes, \dEdges)$ from it by arbitrarily ordering the elements of each pair $\edge \in \Edges$; the graph $\directed{G}$ so obtained is called an \emph{orientation} of $G$.


A \emph{weighted graph} is a triplet $G = (\Nodes, \Edges, w)$ comprised of a graph $(\Nodes, \Edges)$ and a weight function $w \colon \Edges \to \R$ defined on the edges of this graph; since $\Edges$ is finite, we can alternatively specify the weight function $w$ by simply listing its values $\lbrace w_{e} \rbrace_{e \in \Edges}$ on each edge.  Any unweighted graph can be interpreted as a weighted graph equipped with the \emph{uniform weight function} $w \equiv 1$.

We can associate to a directed graph $\directed{G} = (\Nodes, \dEdges)$ with $n = \lvert \Nodes \rvert$ and $m = \lvert \dEdges \vert$ the \emph{incidence matrix} $\incMat(\directed{G}) \in \R^{n \times m}$ whose rows and columns are indexed by $i\in \Nodes$ and $e \in \dEdges$, respectively, and whose elements are determined by:
\begin{equation}
 \label{incidence_matrix_definition}
 \incMat(\directed{G})_{ie} = 
 \begin{cases}
+1, & e\in \inEdges(i) \\
-1, & e \in \outEdges(i), \\
0, &\textnormal{otherwise}.
 \end{cases}
\end{equation} Similarly, we can associate to an undirected graph $G$ an \emph{oriented incidence matrix} $\incMat(G)$ obtained as the incidence matrix of any of its orientations $\directed{G}$.  We obtain a \emph{reduced} (\emph{oriented}) \emph{incidence matrix} $\redIncMat(G)$ by removing the final row from the (oriented) incidence matrix $\incMat(G)$. 

Finally, we can associate to a weighted undirected graph $G = (\Nodes, \Edges, w)$ with $n = \lvert \Nodes \rvert$ the \emph{Laplacian matrix} $\Lap(G) \in \Sym(n)$ whose rows and columns are indexed by $i \in \Nodes$, and whose elements are determined by:
\begin{equation}
\label{Laplacian_matrix_definition}
\Lap(G)_{ij} = 
\begin{cases}
\sum_{e \in \delta(i)} w(e), & i = j, \\
-w(e), & i \ne j \textnormal{ and } e = \edge \in \Edges, \\
0, & \textnormal{otherwise}.
\end{cases}
\end{equation}
A straightforward computation shows that the Laplacian of a weighted graph $G$ and the incidence matrix of one of its orientations $\directed{G}$ are related by:
\begin{equation}
\label{Laplacian_in_terms_of_incidence_matrix}
 \Lap(G) = \incMat(\directed{G}) W \incMat(\directed{G})\transpose,
\end{equation}
where $W \triangleq \Diag(w(e_1), \dotsc, w(e_m))$ is the diagonal matrix containing the weights of $G$'s edges.

\textbf{Probability and statistics:}  We write $\Gaussian(\mu, \Sigma)$ for the multivariate Gaussian distribution with mean $\mu \in \R^d$ and covariance matrix $0 \preceq \Sigma \in \Sym(d)$, and $\Langevin(M, \kappa)$ for the isotropic Langevin distribution on $\SO(d)$ with mode $M \in \SO(d)$ and concentration parameter $\kappa \ge 0$ (cf.\ Appendix \ref{Isotropic_Langevin_distribution_appendix}).  With reference to a hidden parameter $X$ whose value we wish to infer, we will write $\true{X}$ for its true (latent) value, $\noisy{X}$ to denote a noisy observation of $\true{X}$, and $\est{X}$ to denote an estimate of $\true{X}$.

\subsection{The special Euclidean synchronization problem}
\label{Problem_formulation_subsection}
The $\SE(d)$ synchronization problem consists of estimating the values of a set of $n$ unknown group elements $\pose_1, \dotsc, \pose_n  \in\SE(d)$ given noisy measurements of $m$ of their pairwise relative transforms $\pose_{ij} \triangleq  \pose_i^{-1} \pose_j$ ($i \ne j$).  We model the set of available measurements using an undirected graph $G = (\Nodes, \Edges)$ in which the nodes $i \in \Nodes$ are in one-to-one correspondence with the unknown states $x_i$ and the edges $\edge \in \Edges$ are in one-to-one correspondence with the set of available measurements, and we assume without loss of generality that $G$ is connected.\footnote{If $G$ is not connected, then the problem of estimating the unknown states $x_1, \dotsc, x_n$ decomposes into a set of independent estimation problems that are in one-to-one correspondence with the connected components of $G$; thus, the general case is always reducible to the case of connected graphs.}   We let $\directed{G} =(\Nodes, \dEdges)$ be the directed graph obtained from $G$ by fixing an orientation, and assume that a noisy measurement $\npose_{ij}$ of the relative transform $\pose_{ij}$  is obtained 
by sampling from the following probabilistic generative model:
\begin{equation}
\label{probabilistic_generative_model_for_noisy_observations}
\begin{aligned}
 \ntran_{ij} &= \ttran_{ij} + \tranNoise_{ij},  &  \tranNoise_{ij} &\sim \Gaussian\left(0, \tau_{ij}^{-1} I_d\right), \\
 \nrot_{ij} &= \trot_{ij} \rotNoise_{ij}, & \quad \rotNoise_{ij} &\sim \Langevin\left(I_d, \kappa_{ij}\right),
\end{aligned} \quad \quad \forall \dedge \in \dEdges
\end{equation}
where $\tpose_{ij} = (\ttran_{ij}, \trot_{ij})$ is the true (latent) value of $\pose_{ij}$.\footnote{We use a directed graph to model the measurements $\npose_{ij}$ sampled from \eqref{probabilistic_generative_model_for_noisy_observations} because the distribution of the noise corrupting the latent values $\tpose_{ij}$ is not invariant under $\SE(d)$'s group inverse operation, as can be seen by composing \eqref{probabilistic_generative_model_for_noisy_observations} with \eqref{SE3_inversion_rule}.  Consequently, we must keep track of which state $x_i$ was the ``base frame'' for each measurement. }  Finally, we define $\npose_{ji} \triangleq \npose_{ij}^{-1}$, $\kappa_{ji} \triangleq \kappa_{ij}$, $\tau_{ji} \triangleq \tau_{ij}$, and $\nrot_{ji} \triangleq \nrot_{ij}\transpose$ for all $\dedge \in \dEdges$.

Given a set of noisy measurements $\npose_{ij}$ sampled from the generative model \eqref{probabilistic_generative_model_for_noisy_observations}, a straightforward computation shows that a maximum-likelihood estimate $\PosesMLE \in \SE(d)^n$ for the states $\pose_1, \dotsc, \pose_n$ is obtained as a minimizer of:\footnote{\label{MLEs_are_nonunique_footnote}Note that a minimizer of problem \eqref{SE3_Synchronization_MLE_eq} is \emph{a} maximum-likelihood estimate (rather than \emph{the} maximum-likelihood estimate) because problem \eqref{SE3_Synchronization_MLE_eq} always has multiple (in fact, infinitely many) solutions: since the objective function in \eqref{SE3_Synchronization_MLE_eq} is constructed from \emph{relative} measurements of the form $\pose_i^{-1} \pose_j$, if $\pose^* = (\pose_1^*, \dotsc, \pose_n^*) \in \SE(d)^n$ minimizes \eqref{SE3_Synchronization_MLE_eq}, then $g \bullet \pose^* \triangleq (g \cdot \pose_1^*, \dotsc, g \cdot \pose_n^*)$ also minimizes \eqref{SE3_Synchronization_MLE_eq} for all $g \in \SE(d)$. Consequently, the solution set of \eqref{SE3_Synchronization_MLE_eq} is organized into orbits of the diagonal action $\bullet$ of $\SE(d)$ on $\SE(d)^n$.  This gauge symmetry simply corresponds to the fact that \emph{relative} measurements $\pose_i^{-1} \pose_j$ provide no information about the \emph{absolute} values of the states $\pose_i$.}
\begin{equation}
 \label{SE3_Synchronization_MLE_eq}
 \min_{\stackrel{\tran_i \in \R^d}{\rot_i \in \SO(d)}} \sum_{\dedge \in \dEdges} -\kappa_{ij} \tr\left(\nrot_{ij} \rot_j^{-1} \rot_i  \right) + \frac{\tau_{ij}}{2} \left \lVert \ntran_{ij} - \rot_i^{-1}(\tran_j - \tran_i)  \right \Vert_2^2.
\end{equation}
Using the fact that $X^{-1} = X\transpose$ and $\lVert X - Y \rVert_F^2 = 2d - 2 \tr(X\transpose Y)$ for all $X, Y \in \Orthogonal(d)$ together with the orthogonal invariance of the Frobenius and $\ell_2$ norms, it is straightforward to verify that $\pose^* \in \SE(d)^n$ is a minimizer of \eqref{SE3_Synchronization_MLE_eq} if and only if it is also a minimizer of the following nonlinear least-squares problem:

\begin{problem}[Maximum-likelihood estimation for $\SE(d)$ synchronization]
 \label{SE3_synchronization_MLE_NLS_problem}
 \begin{equation}
 \label{SE3_synchronization_MLE_NLS_optimization}
\MLEval = \min_{\stackrel{\tran_i \in \R^d}{\rot_i \in \SO(d)}} \sum_{\dedge \in \dEdges} \kappa_{ij} \lVert \rot_j - \rot_i \nrot_{ij} \rVert_F^2 + \tau_{ij} \left \lVert \tran_j - \tran_i - \rot_i \ntran_{ij} \right \rVert_2^2
 \end{equation}
\end{problem}

Unfortunately, Problem \ref{SE3_synchronization_MLE_NLS_problem} is a high-dimensional nonconvex nonlinear program, and is therefore computationally hard to solve in general.  Consequently, in this paper our strategy will be to replace this problem with a (convex) \emph{semidefinite relaxation} \cite{Vandenberghe1996Semidefinite}, and then exploit this relaxation to search for solutions of Problem \ref{SE3_synchronization_MLE_NLS_problem}.


\section{Forming the semidefinite relaxation}
\label{Forming_the_semidefinite_relaxation_section}
In this section we develop the semidefinite relaxation that we will solve in place of the maximum-likelihood estimation Problem \ref{SE3_synchronization_MLE_NLS_problem}.  Our approach proceeds in two stages.  We begin in Section \ref{simplifying_the_maximum_likelihood_estimation_section} by developing a sequence of simplified but equivalent reformulations of Problem \ref{SE3_synchronization_MLE_NLS_problem} with the twofold goal of simplifying its analysis and elucidating some of the structural correspondences between the optimization  \eqref{SE3_synchronization_MLE_NLS_optimization} and several simple graph-theoretic objects that can be constructed from the set of available measurements $\npose_{ij}$ and the graphs $G$ and $\directed{G}$.  We then exploit the simplified versions of Problem \ref{SE3_synchronization_MLE_NLS_problem} so obtained to derive the semidefinite relaxation in Section \ref{relaxing_the_maximum_likelihood_estimation}.
%
%

\subsection{Simplifying the maximum-likelihood estimation}
\label{simplifying_the_maximum_likelihood_estimation_section}
Our first step is to rewrite Problem \ref{SE3_synchronization_MLE_NLS_problem} in a more standard form for quadratic programs.  First, define the \emph{translational} and \emph{rotational weight graphs} $\TranW \triangleq (\Nodes, \Edges, \lbrace \tau_{ij}\rbrace)$ and $\RotW \triangleq (\Nodes, \Edges, \lbrace \kappa_{ij} \rbrace)$  to be the weighted undirected graphs with node set $\Nodes$, edge set $\Edges$, and edge weights $\tau_{ij}$ and $\kappa_{ij}$ for $\edge \in \Edges$, respectively.  The Laplacians of $\TranW$ and $\RotW$ are then:
\begin{subequations}
\label{Weight_graph_Laplacians}
\begin{equation}
\label{Laplacian_of_translational_weight_graph}
\LapTranW_{ij} = 
\begin{cases}
\sum_{e \in \incEdges(i)} \tau_{e}, & i = j, \\
-\tau_{ij}, & \edge \in \Edges, \\
0, & \edge \notin \Edges,
\end{cases}
\end{equation}
\begin{equation}
 \label{Laplacian_of_rotational_weight_graph}
 \LapRotW_{ij} = 
\begin{cases}
\sum_{e \in \incEdges(i)} \kappa_{e}, & i = j, \\
-\kappa_{ij}, & \edge \in \Edges, \\
0, & \edge \notin \Edges.
\end{cases}
\end{equation}
\end{subequations}
Similarly, let $\MeasRotConLap$ denote the \emph{connection Laplacian} for the rotational synchronization problem determined by the measurements $\nrot_{ij}$ and measurement weights $\kappa_{ij}$ for $\dedge \in \dEdges$; this is the symmetric $(d \times d)$-block-structured matrix determined by  (cf.\ \cite{Singer2012Vector,Wang2013Exact}):
 \begin{subequations}
 \label{connection_Laplacian_definition}
\begin{equation}
 \begin{split}
\MeasRotConLap &\in \Sym(dn) \\
\MeasRotConLap_{ij} &\triangleq 
\begin{cases}
\rotdeg_i I_d, & i = j, \\
- \kappa_{ij} \nrot_{ij}, & \edge \in \Edges, \\
0_{d \times d}, & \edge \notin \Edges,
\end{cases}
 \end{split}
\end{equation}
\begin{equation}
 \rotdeg_i \triangleq \sum_{e \in \incEdges(i)} \kappa_{e}.
\end{equation}
 \end{subequations}
Finally, let $\nCrossTerms \in \R^{n \times dn}$ be the $(1 \times d)$-block-structured matrix with $(i,j)$-blocks:
\begin{equation}
\label{cross_term_matrix_definition}
\nCrossTerms_{ij} \triangleq 
\begin{cases}
\sum_{e \in \outEdges(j)} \tau_{e} \ntran_{e}\transpose, & i = j, \\
-\tau_{ji} \ntran_{ji}\transpose, & (j,i) \in \dEdges, \\
0_{1 \times d}, & \textnormal{otherwise},
\end{cases}
\end{equation}
and $\nOuterProducts$ the $(d \times d)$-block-structured block-diagonal matrix determined by:
\begin{equation}
\label{Sigma_matrix_definition}
\begin{split}
\nOuterProducts &\triangleq \Diag(\nOuterProducts_1, \dotsc, \nOuterProducts_n) \in \SBD(d, n) \\
\nOuterProducts_i &\triangleq \sum_{e \in \outEdges(i)} \tau_{e} \ntran_{e} \ntran_{e}\transpose.
\end{split}
\end{equation}
Aggregating the rotational and translational states into the block matrices:
\begin{subequations}
\label{block_matrix_state_definitions}
\begin{equation}
 \label{rotational_block_matrix_state}
 R \triangleq 
\begin{pmatrix}
 R_1 & \dotsb & R_n
\end{pmatrix}
 \in \SO(d)^n \subset \R^{d \times dn}
\end{equation}
\begin{equation}
 \label{translational_block_matrix_state_definition}
 \tran \triangleq 
\begin{pmatrix}
 t_1 \\
 \vdots \\
 t_n
\end{pmatrix}
 \in \R^{dn}
\end{equation}
\end{subequations}
and exploiting definitions \eqref{Laplacian_of_translational_weight_graph}--\eqref{Sigma_matrix_definition}, Problem \ref{SE3_synchronization_MLE_NLS_problem} can be rewritten more compactly in the following standard form:
\begin{problem}[Maximum-likelihood estimation, QP form]
\label{SE3_synchronization_MLE_QP_form_problem}
 \begin{subequations}
  \label{SE3_synchronization_MLE_QP_form_optimization}
 \begin{equation}
\MLEval = \min_{\stackrel{\tran \in \R^{dn}}{\rot \in \SO(d)^n}}
\begin{pmatrix}
\tran \\
\vect(\rot)
\end{pmatrix}\transpose
(\QPFormMatrix \otimes I_d)
\begin{pmatrix}
\tran \\
\vect(\rot)
\end{pmatrix},
 \end{equation}
 \begin{equation}
 \label{M_matrix_definition}
\QPFormMatrix\triangleq
\begin{pmatrix}
\LapTranW & \nCrossTerms \\
\nCrossTerms\transpose & \MeasRotConLap + \nOuterProducts
\end{pmatrix}.
 \end{equation}
 \end{subequations}
\end{problem}
Problem \ref{SE3_synchronization_MLE_QP_form_problem} is obtained from Problem \ref{SE3_synchronization_MLE_NLS_problem} through a straightforward (although somewhat tedious) manipulation of the objective function (Appendix \ref{deriving_the_quadratic_form_of_the_MLE}). 

Expanding the quadratic form in \eqref{SE3_synchronization_MLE_QP_form_optimization}, we obtain:
\begin{equation}
\label{block_partitioned_quadratic_objective_in_MLE_problem}
\MLEval = \min_{\stackrel{t \in \R^{dn}}{R \in \SO(d)^n}}
\left \lbrace 
\begin{aligned}
&\tran\transpose \left(\LapTranW \otimes I_d \right) \tran + 2 \tran\transpose \left(\nCrossTerms \otimes I_d \right) \vect(R) \\
&\quad+ \vect(R)\transpose \left(\left( \MeasRotConLap + \nOuterProducts \right) \otimes I_d\right) \vect(R)
\end{aligned}
\right \rbrace.
\end{equation}
Now observe that for a fixed value of $\rot$, \eqref{block_partitioned_quadratic_objective_in_MLE_problem} reduces to the \emph{unconstrained} minimization of a quadratic form in the translational variable $\tran$, for which we can find a closed-form solution.  This enables us to analytically eliminate $\tran$ from the optimization problem \eqref{block_partitioned_quadratic_objective_in_MLE_problem}, thereby obtaining:

\begin{problem}[Rotation-only maximum-likelihood estimation]
\label{Rotational_maximum_likelihood_estimation_for_SE3_synchronization}
\begin{subequations}
\label{Rotational_maximum_likelihood_estimation_for_SE3_synchronization_optimization_problem}
\begin{equation}
\MLEval = \min_{\rot \in \SO(d)^n} \tr(\nQ \rot\transpose\rot) 
\end{equation}
\begin{equation}
\label{initial_Q_quadratic_form_definition}
\nQ \triangleq \MeasRotConLap + \underbrace{\nOuterProducts - \nCrossTerms \transpose \LapTranW\pinv \nCrossTerms}_{\nQtran}
\end{equation}
\end{subequations}
\end{problem}
\noindent Furthermore, given any minimizer $\Ropt$ of \eqref{Rotational_maximum_likelihood_estimation_for_SE3_synchronization_optimization_problem}, we can recover a corresponding optimal value $\topt$ for $\tran$ according to:
\begin{equation}
 \label{minimizing_value_of_t_from_minimizing_value_of_R}
 \topt = - \vect\left( \Ropt \nCrossTerms\transpose \LapTranW\pinv \right).
\end{equation}
The derivation of \eqref{Rotational_maximum_likelihood_estimation_for_SE3_synchronization_optimization_problem} and \eqref{minimizing_value_of_t_from_minimizing_value_of_R} from \eqref{block_partitioned_quadratic_objective_in_MLE_problem} is given in Appendix \ref{eliminating_translational_states_subsection}. 

Finally, we derive a simplified expression for the translational data matrix $\nQtran$ appearing in \eqref{initial_Q_quadratic_form_definition}.  Let
\begin{equation}
\label{translational_precision_matrix}
 \tranPrecisions \triangleq \Diag(\tau_{e_1}, \dotsc, \tau_{e_m}) \in \Sym(m)
\end{equation}
denote the diagonal matrix whose rows and columns are indexed by the directed edges $e \in \dEdges$  and whose $e$th diagonal element gives the precision of the translational observation corresponding to that edge.  Similarly, let $\nT \in \R^{m \times dn}$ denote the $(1 \times d)$-block-structured matrix with rows and columns indexed by $e \in \dEdges$ and $k \in \Nodes$, respectively, and whose $(e,k)$-block is given by:
\begin{equation}
\label{nT_matrix_definition}
 \nT_{ek} \triangleq
 \begin{cases}
-\ntran_{kj}\transpose, & e = (k,j) \in \dEdges,\\
0_{1 \times d}, & \textnormal{otherwise}.
 \end{cases}
\end{equation}
Then Problem \ref{Rotational_maximum_likelihood_estimation_for_SE3_synchronization} can be rewritten as:

\begin{problem}[Simplified maximum-likelihood estimation]
\label{Simplified_maximum_likelihood_estimation_for_SE3_synchronization}
\begin{subequations}
\label{Simplified_maximum_likelihood_estimation_for_SE3_synchronization_optimization_problem}
\begin{equation}
\MLEval = \min_{\rot \in \SO(d)^n} \tr(\nQ \rot\transpose\rot) 
\end{equation}
\begin{equation}
\label{Q_quadratic_form_definition}
\nQ = \MeasRotConLap + \nQtran
\end{equation}
\begin{equation}
 \label{nQtran_alternative_form}
 \nQtran = \nT\transpose \tranPrecisions^{\frac{1}{2}} \OrthoProjMatrix \tranPrecisions^{\frac{1}{2}}  \nT
 \end{equation}
\end{subequations}
\end{problem}
\noindent where $\OrthoProjMatrix \in \R^{m \times m}$ is the matrix of the orthogonal projection operator $\OrthoProj \colon \R^m \to \ker ( \incMat(\directed{G}) \tranPrecisions^{\frac{1}{2}} )$ onto the kernel of the weighted incidence matrix $\incMat(\directed{G}) \tranPrecisions^{\frac{1}{2}}$  of $\directed{G}$.  The derivation of \eqref{nQtran_alternative_form} from \eqref{initial_Q_quadratic_form_definition} is presented in Appendix \ref{An_alternative_form_for_the_translational_data_matrix_subsection}.  

The advantage of expression \eqref{nQtran_alternative_form} for $\nQtran$ versus the original formulation given in  \eqref{initial_Q_quadratic_form_definition} is that the constituent matrices $\OrthoProjMatrix$, $\tranPrecisions$, and $\nT$ in \eqref{nQtran_alternative_form} each admit particularly simple interpretations in terms of the underlying directed graph $\directed{G}$ and the translational data $(\tau_{ij}, \ntran_{ij})$ attached to each edge $\dedge \in \dEdges$; our subsequent development will heavily exploit this structure.

\subsection{Relaxing the maximum-likelihood estimation}
\label{relaxing_the_maximum_likelihood_estimation}
In this subsection, we turn our attention to the development of a convex relaxation that will enable us to recover a global minimizer of Problem \ref{Simplified_maximum_likelihood_estimation_for_SE3_synchronization}  in practice.  We begin by relaxing the condition that $\rot \in \SO(d)^n$, obtaining the following:
\begin{problem}[Orthogonal relaxation of the maximum-likelihood estimation]
\label{Orthogonal_relaxation_of_the_MLE_problem}
\begin{equation}
 \label{orthogonally_relaxed_primal_problem}
 \OMLEval = \min_{\rot \in \Orthogonal(d)^n} \tr(\nQ \rot\transpose\rot).
\end{equation}
\end{problem}

We immediately have that $\OMLEval \le \MLEval$ since $\SO(d)^n \subset \Orthogonal(d)^n$.  However, we expect that this relaxation will often be \emph{exact} in practice: since $\Orthogonal(d)$ is a disjoint union of two components separated by a distance of $\sqrt{2}$ under the Frobenius norm, and the values $\trot_i$ that we wish to estimate all lie in $\SO(d)$, the elements $\Ropt_i$ of an estimate $\Ropt$ obtained as a minimizer of \eqref{orthogonally_relaxed_primal_problem} will still all lie in the $+1$ component of $\Orthogonal(d)$ so long as the elementwise estimation error in $\Ropt$ satisfies $\lVert \Ropt_i - \trot_i \rVert_{F} < \sqrt{2}$ for all $i \in [n]$. This latter condition will hold so long as the  noise perturbing the data matrix $\nQ$ is not too large (cf.\ Appendix \ref{Upper_bound_for_estimation_error_subsection}).\footnote{There is also some empirical evidence that the relaxation from $\SO(d)$ to $\Orthogonal(d)$ is not the limiting factor in the exactness of our approach.  In our prior work \cite{Tron2015Determinant}, we observed that in the specific case $d = 3$ it is possible to replace the (cubic) determinantal constraint in \eqref{special_orthogonal_matrix_group_definition} with an equivalent \emph{quadratic} constraint by using the cross-product operation on the columns of each $R_i$ to enforce the correct orientation; this leads to an equivalent formulation of Problem \ref{SE3_synchronization_MLE_NLS_problem} as a quadratically-constrained quadratic program that can be relaxed directly to a semidefinite program \cite{Luo2010Semidefinite} \emph{without} the intermediate relaxation through $\Orthogonal(d)$.  We found no significant difference between the sharpness of the relaxation incorporating the determinantal constraint and the relaxation without (Problem \ref{dual_semidefinite_relaxation_for_SE3_synchronization_problem}). }

Now we derive the Lagrangian dual of Problem \ref{Orthogonal_relaxation_of_the_MLE_problem}, using its \emph{extrinsic} formulation:
\begin{equation}
\label{orthogonally_constrained_relaxed_program}
\begin{split}
\OMLEval = \min_{\rot \in \R^{d \times dn}}  &\tr(\nQ \rot\transpose \rot)   \\
\st \; &\rot_i\transpose \rot_i = I_d \quad \forall i = 1, \dotsc, n.
\end{split}
\end{equation}
The Lagrangian corresponding to \eqref{orthogonally_constrained_relaxed_program} is:
\begin{equation}
 \label{initial_Lagrangian_definition}
 \begin{split}
 \Lag &\colon \R^{d \times dn} \times \Sym(d)^n \to \R \\
 \Lag(\rot, \Lambda_1, \dotsc, \Lambda_n) &= \tr(\nQ \rot\transpose \rot) - \sum_{i = 1}^n \tr\left( \Lambda_i (\rot_i\transpose \rot_i - I_d) \right) \\ 
 &= \tr(\nQ \rot\transpose\rot) + \sum_{i = 1}^n\tr(\Lambda_i) - \tr\left( \Lambda_i \rot_i\transpose \rot_i \right)
 \end{split}
\end{equation}
where the $\Lambda_i \in \Sym(d)$ are symmetric matrices of Lagrange multipliers for the (symmetric) matrix orthonormality constraints in \eqref{orthogonally_constrained_relaxed_program}.  We can simplify \eqref{initial_Lagrangian_definition} by aggregating the Lagrange multipliers $\Lambda_i$ into a single direct sum matrix $\Lambda \triangleq \Diag (\Lambda_1, \dotsc, \Lambda_n) \in \SBD(d,n)$ to yield:
\begin{equation}
 \label{Lagrangian_formulation}
 \begin{split}
 &\Lag \colon \R^{d \times dn} \times \SBD(d,n) \to \R \\
 \Lag&(\rot, \Lambda) = \tr\left((\nQ - \Lambda) \rot\transpose \rot \right) + \tr(\Lambda).
 \end{split}
\end{equation}
The Lagrangian dual problem for \eqref{orthogonally_constrained_relaxed_program} is thus:
\begin{equation}
\label{initial_formulation_of_Lagrangian_dual_problem}
\max_{\Lambda \in \SBD(d,n)} \left \lbrace \inf_{R \in \R^{d \times dn}} \tr\left( (\nQ - \Lambda)\rot\transpose \rot \right) + \tr(\Lambda) \right \rbrace,
\end{equation}
with corresponding dual function:
\begin{equation}
\label{Lagrangian_dual_function}
 d(\Lambda) \triangleq\inf_{\rot \in \R^{d \times dn}} \tr\left( (\nQ - \Lambda)\rot\transpose \rot\right) + \tr(\Lambda).
\end{equation}
However, we observe that since 
\begin{equation}
\label{proof_of_necessity_of_positive_semidefiniteness_of_Q_minus_Lambda}
 \tr\left( (\nQ - \Lambda) \rot\transpose \rot \right) = \vect(\rot)\transpose \left( (\nQ - \Lambda) \otimes I_d \right) \vect(\rot),
\end{equation}
then $d(\Lambda) = - \infty$ in \eqref{Lagrangian_dual_function} unless $(\nQ -\Lambda) \otimes I_d \succeq 0$, in which case the infimum is attained for $\rot = 0$.  Furthermore, we have $(\nQ - \Lambda) \otimes I_d \succeq 0$ if and only if $\nQ - \Lambda \succeq 0$.  Therefore, the dual problem \eqref{initial_formulation_of_Lagrangian_dual_problem} is equivalent to the following semidefinite program:

\begin{problem}[Primal semidefinite relaxation for $\SE(d)$ synchronization]
\label{primal_semidefinite_relaxation_for_SE3_synchronization_problem}
\begin{equation}
\label{primal_semidefinite_relaxation_for_SE3_synchronization_optimization}
\begin{split}
 \SDPval = \max_{\Lambda \in \SBD(d,n)} &\tr(\Lambda) \\
\st \nQ - &\Lambda \succeq 0.
\end{split}
\end{equation}
\end{problem}

Finally, a straightforward application of the duality theory for semidefinite programs (see Appendix \ref{Deriving_the_dual_semidefinite_relaxation_from_the_primal_subsection} for details) shows that the dual of Problem \ref{primal_semidefinite_relaxation_for_SE3_synchronization_problem} is:

\begin{problem}[Dual semidefinite relaxation for $\SE(d)$ synchronization]
\label{dual_semidefinite_relaxation_for_SE3_synchronization_problem}
\begin{equation}
\label{dual_semidefinite_relaxation_for_SE3_synchronization_optimization}
\begin{split}
&\SDPval = \min_{Z \in \Sym(dn)} \tr(\nQ Z) \\
\st &Z =
  \begin{pmatrix}
I_d & * & * & \dotsb & *\\
* & I_d & * & \dotsb & * \\
* & * & I_d & & * \\
\vdots & \vdots &  &  \ddots & \vdots \\
*& *  & *  & \dotsb & I_d
  \end{pmatrix} \succeq 0.
\end{split}
\end{equation}
\end{problem}

At this point, it is instructive to compare the dual semidefinite relaxation \eqref{dual_semidefinite_relaxation_for_SE3_synchronization_optimization} with the simplified maximum-likelihood estimation \eqref{Simplified_maximum_likelihood_estimation_for_SE3_synchronization_optimization_problem}.  For any $\rot \in \SO(d)^n$, the product $Z =\rot \transpose \rot$ is positive semidefinite and has identity matrices along its $(d \times d)$-block-diagonal, and so is a feasible point of \eqref{dual_semidefinite_relaxation_for_SE3_synchronization_optimization}; in other words, \eqref{dual_semidefinite_relaxation_for_SE3_synchronization_optimization} can be regarded as a relaxation of the maximum-likelihood estimation obtained by \emph{expanding \eqref{Simplified_maximum_likelihood_estimation_for_SE3_synchronization_optimization_problem}'s feasible set}.  Consequently, if it so happens that a minimizer $\Zopt$ of Problem \ref{dual_semidefinite_relaxation_for_SE3_synchronization_problem} admits a decomposition of the form $\Zopt = {\Ropt}\transpose \Ropt$ for some $\Ropt \in \SO(d)^n$, then it is straightforward to verify that this $\Ropt$ is also a minimizer of Problem \ref{Simplified_maximum_likelihood_estimation_for_SE3_synchronization}.  More precisely, we have the following:

\begin{thm}
\label{certifying_exactness_theorem}
Let $\Zopt$ be a minimizer of the semidefinite relaxation Problem \ref{dual_semidefinite_relaxation_for_SE3_synchronization_problem}.  If $\Zopt$ factors as:
\begin{equation}
\label{factorization_for_certifying_exactness_theorem}
\Zopt = {\Ropt} \transpose \Ropt, \quad \Ropt \in \Orthogonal(d)^n,
\end{equation}
then $\Ropt$ is a minimizer of Problem \ref{Orthogonal_relaxation_of_the_MLE_problem}.  If additionally $\Ropt \in \SO(d)^n$, then $\Ropt$ is also a  minimizer of Problem \ref{Simplified_maximum_likelihood_estimation_for_SE3_synchronization}, and $\poseopt = (\topt, \Ropt)$  \emph{(}with $\topt$ given by equation \emph{\eqref{minimizing_value_of_t_from_minimizing_value_of_R}}\emph{)} is an optimal solution of the maximum-likelihood estimation Problem \ref{SE3_synchronization_MLE_NLS_problem}.
\end{thm}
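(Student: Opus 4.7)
The plan is to exploit the chain of relaxations $p^*_{\textnormal{SDP}} \le p^*_{\textnormal{O}} \le p^*_{\textnormal{MLE}}$ and show that under the hypothesized factorization, the extremes of this chain are attained simultaneously by a single $R^*$. The argument is essentially a standard ``rank-1 factorization implies tightness'' observation for an SDP lift, so the main obstacle is bookkeeping rather than any substantive technical difficulty; in particular, no duality gap argument is needed because feasibility and matching objective values can be checked directly.

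First I would establish that Problem \ref{dual_semidefinite_relaxation_for_SE3_synchronization_problem} is a valid relaxation of Problem \ref{Orthogonal_relaxation_of_the_MLE_problem}. Given any $R \in \Orthogonal(d)^n$, viewed as a block-row matrix $R \in \R^{d \times dn}$, the Gram matrix $Z \triangleq R\transpose R \in \Sym(dn)$ is positive semidefinite, and its $(i,i)$ block equals $R_i\transpose R_i = I_d$ by the orthogonality constraint; hence $Z$ is feasible for Problem \ref{dual_semidefinite_relaxation_for_SE3_synchronization_problem}, with matching objective $\tr(\nQ Z) = \tr(\nQ R\transpose R)$. This gives $p^*_{\textnormal{SDP}} \le p^*_{\textnormal{O}}$.

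Now assume $Z^* = (R^*)\transpose R^*$ with $R^* \in \Orthogonal(d)^n$. The identity-block diagonal constraint on $Z^*$ forces $(R^*_i)\transpose R^*_i = I_d$ for each $i$, so $R^*$ is feasible for Problem \ref{Orthogonal_relaxation_of_the_MLE_problem}; its objective value is $\tr(\nQ (R^*)\transpose R^*) = \tr(\nQ Z^*) = p^*_{\textnormal{SDP}} \le p^*_{\textnormal{O}}$. Since $R^*$ is feasible for Problem \ref{Orthogonal_relaxation_of_the_MLE_problem}, the reverse inequality holds as well, so $R^*$ attains $p^*_{\textnormal{O}}$ and is a minimizer. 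If moreover $R^* \in \SO(d)^n$, then $R^*$ is feasible for Problem \ref{Simplified_maximum_likelihood_estimation_for_SE3_synchronization}, and inserting this into the longer chain $p^*_{\textnormal{SDP}} \le p^*_{\textnormal{O}} \le p^*_{\textnormal{MLE}}$ (the second inequality following from $\SO(d)^n \subset \Orthogonal(d)^n$) forces all three values to coincide and $R^*$ to attain the minimum of Problem \ref{Simplified_maximum_likelihood_estimation_for_SE3_synchronization}.

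Finally, to lift $R^*$ to an optimal solution of the original MLE (Problem \ref{SE3_synchronization_MLE_NLS_problem}), I would invoke the derivation leading to equation \eqref{minimizing_value_of_t_from_minimizing_value_of_R}: Problem \ref{Rotational_maximum_likelihood_estimation_for_SE3_synchronization} was obtained from \eqref{block_partitioned_quadratic_objective_in_MLE_problem} by analytically minimizing out the unconstrained translational variable $t$ for each fixed $R$, with the closed-form minimizer recorded in \eqref{minimizing_value_of_t_from_minimizing_value_of_R}. Thus for any $R \in \SO(d)^n$, setting $t(R) = -\vect(R\, \nCrossTerms\transpose \LapTranW\pinv)$ makes $(t(R), R)$ optimal in $t$ with profile value equal to the objective of the rotation-only problem. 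Applying this to the minimizer $R^*$ and setting $t^* = t(R^*)$ yields a pair $(t^*, R^*)$ whose MLE objective equals $p^*_{\textnormal{MLE}}$, completing the proof.
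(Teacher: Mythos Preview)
Your proof is correct and follows essentially the same approach as the paper's: establish the chain $p^*_{\textnormal{SDP}} \le p^*_{\textnormal{O}} \le p^*_{\textnormal{MLE}}$, then observe that the factorization hypothesis makes $R^*$ feasible at each level with objective value $p^*_{\textnormal{SDP}}$, forcing equality throughout. The only cosmetic difference is that the paper obtains $p^*_{\textnormal{SDP}} \le p^*_{\textnormal{O}}$ by invoking weak Lagrangian duality (since Problem~\ref{primal_semidefinite_relaxation_for_SE3_synchronization_problem}/\ref{dual_semidefinite_relaxation_for_SE3_synchronization_problem} was derived as the dual of Problem~\ref{Orthogonal_relaxation_of_the_MLE_problem}), whereas you argue it directly via the Gram-matrix embedding $R \mapsto R\transpose R$; both routes are valid and the paper itself notes this feasible-set-containment interpretation in the paragraph preceding the theorem.
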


\begin{proof}
 Weak Lagrangian duality implies that the optimal values of Problems \ref{Orthogonal_relaxation_of_the_MLE_problem} and \ref{primal_semidefinite_relaxation_for_SE3_synchronization_problem} satisfy $\SDPval \le \OMLEval$.  But if $\Zopt$ admits the factorization \eqref{factorization_for_certifying_exactness_theorem}, then $\Ropt$ is also a feasible point of \eqref{orthogonally_relaxed_primal_problem}, and so we must have that $\OMLEval \le \tr(\nQ {\Ropt}\transpose \Ropt) = \SDPval$.  This shows that $\OMLEval = \SDPval$, and consequently that $\Ropt$ is a minimizer of Problem \ref{Orthogonal_relaxation_of_the_MLE_problem}, since it attains the optimal value.
 
 Similarly, we have already established that the optimal values of Problems \ref{Simplified_maximum_likelihood_estimation_for_SE3_synchronization} and \ref{Orthogonal_relaxation_of_the_MLE_problem} satisfy $\OMLEval \le \MLEval$.  But if additionally $\Ropt \in \SO(d)^n$, then $\Ropt$ is feasible for Problem \ref{Simplified_maximum_likelihood_estimation_for_SE3_synchronization}, and so by the same logic as before we have that $\OMLEval = \MLEval$ and $\Ropt$ is a minimizer of Problem \ref{Simplified_maximum_likelihood_estimation_for_SE3_synchronization}.  The final claim now follows from the optimality of $\Ropt$ for Problem \ref{Simplified_maximum_likelihood_estimation_for_SE3_synchronization} together with equation \eqref{minimizing_value_of_t_from_minimizing_value_of_R}.
\end{proof}

From a practical standpoint, Theorem \ref{certifying_exactness_theorem} serves to identify a class of solutions of the (convex) semidefinite relaxation Problem \ref{dual_semidefinite_relaxation_for_SE3_synchronization_problem} that correspond to \emph{global minimizers} of the nonconvex maximum-likelihood estimation Problem \ref{SE3_synchronization_MLE_NLS_problem}.  This naturally leads us to consider the following two questions:  Under what conditions does Problem \ref{dual_semidefinite_relaxation_for_SE3_synchronization_problem} admit a solution belonging to this class?  And if there does exist such a solution, can we guarantee that we will be able to \emph{recover} it by solving Problem \ref{dual_semidefinite_relaxation_for_SE3_synchronization_problem} using a numerical optimization method?\footnote{Note that Problem \ref{dual_semidefinite_relaxation_for_SE3_synchronization_problem} could conceivably have multiple solutions, only \emph{some} of which belong to the class specified in Theorem \ref{certifying_exactness_theorem}; in that case, it is possible that a numerical optimization method might converge to a minimizer of Problem \ref{dual_semidefinite_relaxation_for_SE3_synchronization_problem} that does \emph{not} correspond to a solution of Problem \ref{SE3_synchronization_MLE_NLS_problem}.}   These questions are addressed by the following:

\begin{prop}[Exact recovery via the semidefinite relaxation Problem \ref{dual_semidefinite_relaxation_for_SE3_synchronization_problem}]
\label{A_sufficient_condition_for_exact_recovery_prop}
Let $\tQ$ be the matrix of the form \eqref{Q_quadratic_form_definition} constructed using the true \emph{(}latent\emph{)} relative transforms $\tpose_{ij} = (\ttran_{ij}, \trot_{ij})$ in \eqref{probabilistic_generative_model_for_noisy_observations}.  There exists a constant $\exactnessBound \triangleq \exactnessBound(\tQ) > 0$ \emph{(}depending upon $\tQ$\emph{)} such that, if $\lVert \nQ - \tQ \rVert_2 < \exactnessBound$, then:
\begin{enumerate}
 \item [$(i)$]  The dual semidefinite relaxation Problem \ref{dual_semidefinite_relaxation_for_SE3_synchronization_problem} has a unique solution $\Zopt$, and
 \item [$(ii)$] $\Zopt = {\Ropt}\transpose \Ropt$, where $\Ropt \in \SO(d)^n$ is a minimizer of the simplified maximum-likelihood estimation Problem \ref{Simplified_maximum_likelihood_estimation_for_SE3_synchronization}.
\end{enumerate}
\end{prop}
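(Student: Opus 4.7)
The plan is the standard dual-certificate strategy for low-rank SDP recovery. For any $\hat R\in\Orthogonal(d)^n$, define the candidate multiplier $\hat\Lambda \triangleq \SymBlockDiag_d(\nQ \hat R\transpose \hat R)\in\SBD(d,n)$. At a critical point $\hat R$ of Problem \ref{Orthogonal_relaxation_of_the_MLE_problem}, the first-order conditions read $(\nQ-\hat\Lambda)\hat R\transpose = 0$, so $\mathrm{range}(\hat R\transpose)\subseteq\ker(\nQ-\hat\Lambda)$ automatically. By weak duality together with Theorem \ref{certifying_exactness_theorem}, it therefore suffices to prove that $\nQ-\hat\Lambda\succeq 0$ with rank exactly $dn-d$: the former gives primal-feasibility of $\hat\Lambda$ and dual-optimality of $\Zopt=\hat R\transpose\hat R$, while the latter combined with primal-dual complementarity and the block-diagonal identity constraints in Problem \ref{dual_semidefinite_relaxation_for_SE3_synchronization_problem} pins $\Zopt$ uniquely to $\hat R\transpose\hat R$.

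I would verify this picture first in the noiseless limit $\nQ=\tQ$. Because the generative model \eqref{probabilistic_generative_model_for_noisy_observations} makes the MLE objective vanish at $\trot$, we have $\mathrm{tr}(\tQ\,\trot\transpose\trot)=0$, which combined with $\tQ\succeq 0$ forces $\tQ\,\trot\transpose = 0$, so $\hat\Lambda = 0$ is a dual certificate at $\tQ$. A short connectivity argument on the weight graph $\RotW$, leveraging the block structure of the connection Laplacian $\MeasRotConLap$ in \eqref{connection_Laplacian_definition} and the weighted-incidence form of $\nQtran$ in \eqref{nQtran_alternative_form}, shows that $\dim\ker\tQ = d$ exactly when $G$ is connected (as assumed), so the spectral gap $\lambda_{d+1}(\tQ) > 0$.

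Next I would transfer this picture to the noisy regime by a perturbation argument. Stability theory for constrained minimizers (the Riemannian Hessian of $\mathrm{tr}(\tQ R\transpose R)$ at $\trot$ is lower-bounded by $\Omega(\lambda_{d+1}(\tQ))$ on the gauge-normal subspace, along the lines of Appendix \ref{Upper_bound_for_estimation_error_subsection}) yields $\|\hat R - \trot\|_F = O(\|\nQ-\tQ\|_2)$ for some $\SE(d)$-gauge representative of $\trot$, and hence $\|\hat\Lambda\|_2 = O(\|\nQ-\tQ\|_2)$. Writing $\nQ - \hat\Lambda = \tQ + (\nQ-\tQ) - \hat\Lambda$ and applying Weyl's inequality, its $(d+1)$-st eigenvalue is at least $\lambda_{d+1}(\tQ) - O(\|\nQ-\tQ\|_2)$, which remains strictly positive once $\|\nQ - \tQ\|_2 < \exactnessBound(\tQ)$ for a threshold depending only on $\lambda_{d+1}(\tQ)$ and the constants above. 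Combined with the free $d$-dimensional kernel $\mathrm{range}(\hat R\transpose)$, this forces $\nQ - \hat\Lambda\succeq 0$ with rank exactly $dn - d$, yielding $(i)$--$(ii)$. Shrinking $\exactnessBound(\tQ)$ further if necessary keeps $\hat R$ close enough to $\trot\in\SO(d)^n$ that each block $\hat R_i$ has determinant $+1$, so $\hat R\in\SO(d)^n$ and Theorem \ref{certifying_exactness_theorem} lifts it to a solution of the original maximum-likelihood problem.

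The main obstacle is the quantitative Lipschitz bound $\|\hat R - \trot\|_F = O(\|\nQ-\tQ\|_2)$ on the nonconvex manifold $\SO(d)^n$ modulo its $\SE(d)$ gauge: one must pick the correct gauge representative of $\trot$ (e.g.\ by projecting onto the $\SE(d)$-orbit of $\hat R$), establish coercivity of the Riemannian Hessian of the MLE objective at $\trot$ on the gauge-normal subspace with explicit constant scaling as $\lambda_{d+1}(\tQ)$, and trade that coercivity against the data perturbation via a manifold implicit-function-type argument. The size of the admissible $\exactnessBound(\tQ)$ is then essentially the ratio of the spectral gap $\lambda_{d+1}(\tQ)$ to the local Lipschitz constants of $\grad$ and $\Hess$ near $\trot$; making this trade-off quantitative is the principal technical content of the proof.
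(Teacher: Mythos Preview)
Your overall dual-certificate roadmap matches the paper's exactly: define $\hat\Lambda=\SymBlockDiag_d(\nQ\hat R\transpose\hat R)$ from a minimizer $\hat R$ of Problem~\ref{Orthogonal_relaxation_of_the_MLE_problem}, verify in the noiseless case that the certificate $\tQ-0=\tQ$ is positive semidefinite with rank $dn-d$ (via $\ker\tQ=\image(\trot\transpose)$ and connectivity of $\RotW$), then argue by perturbation that $\nQ-\hat\Lambda$ retains these properties, and finally shrink $\beta$ to force $\hat R\in\SO(d)^n$. Your elementary uniqueness argument via complementary slackness is in fact cleaner than the paper's route through the nondegeneracy machinery of Alizadeh et al.

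The one place you diverge substantively is the estimation-error bound. You propose to obtain $\|\hat R-\trot\|_F=O(\|\nQ-\tQ\|_2)$ via Riemannian Hessian coercivity and an implicit-function-type argument. The paper instead uses a direct optimality comparison: from $\tr(\nQ\trot\transpose\trot)\ge\tr(\nQ{\Ropt}\transpose\Ropt)$ it extracts $2dn\|\Delta Q\|_2\ge\tr(\tQ{\Ropt}\transpose\Ropt)\ge\lambda_{d+1}(\tQ)\cdot\tfrac12\Oorbdist(\trot,\Ropt)^2$, yielding the weaker rate $\Oorbdist(\trot,\Ropt)=O(\sqrt{\|\Delta Q\|_2})$ but with no local-versus-global subtlety, since it uses only that $\Ropt$ is a \emph{global} minimizer. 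Your IFT route would give a sharper rate but needs an additional step (e.g.\ compactness plus the characterization of noiseless minimizers as exactly $\Oorbit(\trot)$) to ensure the global minimizer, not merely some nearby critical point, is the one tracked by the implicit function. Since only continuity (not a specific rate) is needed to push through the Weyl/eigenvalue step, the paper's cruder bound is simpler and fully sufficient. (Minor note: the gauge group for Problems~\ref{Simplified_maximum_likelihood_estimation_for_SE3_synchronization}--\ref{Orthogonal_relaxation_of_the_MLE_problem} is $\Orthogonal(d)$ acting diagonally, not $\SE(d)$.)
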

\noindent This result is proved in Appendix \ref{A_sufficient_condition_for_exactness_appendix}, using an approach adapted from \citet{Bandeira2016Tightness}.

In short, Proposition \ref{A_sufficient_condition_for_exact_recovery_prop} guarantees that as long as the noise corrupting the available measurements $\npose_{ij}$ in \eqref{probabilistic_generative_model_for_noisy_observations} is not too large (as measured by the spectral norm of the deviation of the data matrix $\nQ$ from its exact latent value $\tQ$),\footnote{Ideally, one would like to have both (i) an explicit (i.e.\ closed-form) expression that lower-bounds the magnitude $\beta$ of the admissible deviation of the data matrix  $\nQ$ from its exact value $\tQ$ (as measured in some suitable norm) and (ii) a concentration inequality \cite{Tropp2015Introduction} (or several) that upper-bounds the probability $p(\lVert \nQ - \tQ \rVert > \delta)$ of large deviations; together, these would enable the derivation of a lower bound on the probability that a given realization of Problem \ref{Simplified_maximum_likelihood_estimation_for_SE3_synchronization} sampled from the generative model \eqref{probabilistic_generative_model_for_noisy_observations} admits an exact semidefinite relaxation \eqref{dual_semidefinite_relaxation_for_SE3_synchronization_optimization}.  While it is possible (with a bit more effort) to derive such lower bounds on $\beta$ using  straightforward (although somewhat tedious) quantitative refinements of the continuity argument given in Appendix \ref{A_sufficient_condition_for_exactness_appendix}, to date the sharpest concentration inequalities that we have been able to derive appear to be significantly suboptimal, and therefore lead to estimates for the probability of exactness that are grossly conservative versus what we observe empirically (cf.\ also the discussion in Remark 4.6 and Sec.\ 5 of \cite{Bandeira2016Tightness}).  Consequently, we have chosen to state Proposition \ref{A_sufficient_condition_for_exact_recovery_prop} as a simple existence result for $\beta$ in order to simplify its presentation and proof, while still providing some rigorous justification for our convex relaxation approach.  

We remark that as a practical matter, we have already shown in our previous work \cite{Carlone2015Lagrangian} (and do so again here in Section \ref{Experimental_results_section}) that Problem \ref{dual_semidefinite_relaxation_for_SE3_synchronization_problem} in fact remains exact with high probability when the measurements $\npose_{ij}$ in \eqref{probabilistic_generative_model_for_noisy_observations} are corrupted with noise up to an order of magnitude greater than what is encountered in typical robotics and computer vision applications; consequently, we leave the derivation of sharper concentration inequalities and explicit lower bounds on the probability of exactness to future research.} we can recover a global minimizer $\Ropt$ of Problem \ref{Simplified_maximum_likelihood_estimation_for_SE3_synchronization} (and hence also a global minimizer $\poseopt = (\topt, \Ropt)$ of the maximum-likelihood estimation Problem \ref{SE3_synchronization_MLE_NLS_problem} via \eqref{minimizing_value_of_t_from_minimizing_value_of_R}) by solving Problem \ref{dual_semidefinite_relaxation_for_SE3_synchronization_problem} using \emph{any} numerical optimization method.  
\section{The \syncSpace  algorithm}
\label{SE_Sync_algorithm_section}

In light of Proposition \ref{A_sufficient_condition_for_exact_recovery_prop}, our overall strategy in this paper will be to search for exact solutions of the (hard) maximum-likelihood estimation Problem \ref{SE3_synchronization_MLE_NLS_problem} by solving the (convex) semidefinite relaxation Problem \ref{dual_semidefinite_relaxation_for_SE3_synchronization_problem}.  In order to realize this strategy as a practical algorithm, we therefore require (i) a method that is able to solve Problem \ref{dual_semidefinite_relaxation_for_SE3_synchronization_problem} effectively in large-scale real-world problems, and (ii) a rounding procedure that recovers an \emph{optimal} solution of Problem \ref{SE3_synchronization_MLE_NLS_problem} from a solution of Problem \ref{dual_semidefinite_relaxation_for_SE3_synchronization_problem} when exactness obtains, and a \emph{feasible approximate solution} otherwise.  In this section, we develop a pair of algorithms that fulfill these requirements.  Together, these procedures comprise \emph{\sync}, our proposed algorithm for synchronization over the special Euclidean group.

\subsection{Solving the semidefinite relaxation}
\label{optimization_approach_subsection}

 As a semidefinite program, Problem \ref{dual_semidefinite_relaxation_for_SE3_synchronization_problem} can in principle be solved in polynomial time using interior-point methods \cite{Vandenberghe1996Semidefinite,Todd2001Semidefinite}.  In practice, however, the high computational cost of general-purpose semidefinite programming algorithms prevents these methods from scaling effectively to problems in which the dimension of the decision variable $Z$ is greater than a few thousand \cite{Todd2001Semidefinite}.  Unfortunately, typical instances of  Problem \ref{dual_semidefinite_relaxation_for_SE3_synchronization_problem} arising in (for example) robotics and computer vision applications are one to two orders of magnitude larger than this maximum effective problem size, and are therefore well beyond the reach of these general-purpose methods.  To overcome this limitation, in this subsection we develop a specialized optimization  procedure for solving large-scale instances of Problem \ref{dual_semidefinite_relaxation_for_SE3_synchronization_problem} efficiently. We first exploit this problem's low-rank, geometric, and graph-theoretic structure to reduce it to an equivalent optimization problem defined on a low-dimensional Riemannian manifold \cite{Boothby2003Riemannian,Kobayashi1996Foundations}, and then design a fast Riemannian optimization method to solve this reduction efficiently.

\subsubsection{Simplifying Problem \ref{dual_semidefinite_relaxation_for_SE3_synchronization_problem}}
\label{Simplifying_dual_semidefinite_relaxation_section}

\paragraph{Exploiting low-rank structure:}  The dominant computational cost when applying general-purpose semidefinite programming methods to solve Problem \ref{dual_semidefinite_relaxation_for_SE3_synchronization_problem} is the need to store and manipulate expressions involving the (large, dense) matrix variable $Z$.  In particular, the $O(n^3)$ computational cost of multiplying and factoring such expressions quickly becomes intractable as the problem size $n$ increases.  On the other hand, in the case that exactness holds, we know that the actual \emph{solution} $\Zopt$ of Problem \ref{dual_semidefinite_relaxation_for_SE3_synchronization_problem} that we seek has a very concise description in the factored form $\Zopt = {\Ropt}\transpose \Ropt$ for $\Ropt \in \SO(d)^n$.  More generally, even in those cases where exactness fails, minimizers $\Zopt$ of Problem \ref{dual_semidefinite_relaxation_for_SE3_synchronization_problem} typically have a rank $r$ not much greater than $d$, and therefore admit a symmetric rank decomposition $\Zopt = {\Yopt}\transpose {\Yopt}$ for $\Yopt \in \R^{r \times dn}$ with $r \ll dn$.

In a pair of papers, \citet{Burer2003Nonlinear,Burer2005Local} proposed an elegant general approach to exploit the fact that large-scale semidefinite programs often admit such low-rank solutions: simply replace every instance of the decision variable $Z$ with a rank-$r$ product of the form $Y\transpose Y$ to produce a \emph{rank-restricted} version of the original problem.  This substitution has the two-fold effect of (i) dramatically reducing the size of the search space and (ii) rendering the positive semidefiniteness constraint \emph{redundant}, since $Y\transpose Y \succeq 0$ for \emph{any} choice of $Y$.  The resulting rank-restricted form of the problem is thus a low-dimensional \emph{nonlinear} program, rather than a \emph{semidefinite} program.  In the specific case of Problem \ref{dual_semidefinite_relaxation_for_SE3_synchronization_problem}, this produces:
\begin{problem}[Rank-restricted semidefinite relaxation, NLP form]
 \label{rank_restricted_semidefinite_relaxation_NLP_form_problem}
 \begin{equation}
 \label{rank_restricted_semidefinite_relaxation_NLP_form}
 \begin{split}
 &\SDPLRval = \min_{Y \in \R^{r \times dn}} \tr(\nQ Y\transpose Y) \\
\st  &\BDiag_d(Y\transpose Y) = \Diag(I_d, \dotsc, I_d).
\end{split}
 \end{equation}
\end{problem}
Provided that Problem \ref{dual_semidefinite_relaxation_for_SE3_synchronization_problem} admits a solution $\Zopt$ with $\rank(\Zopt) \le r$, we can \emph{recover} such a solution from an optimal solution $\Yopt$ of Problem \ref{rank_restricted_semidefinite_relaxation_NLP_form_problem} according to $\Zopt = {\Yopt}\transpose \Yopt$.

\paragraph{Exploiting geometric structure:} In addition to exploiting Problem \ref{dual_semidefinite_relaxation_for_SE3_synchronization_problem}'s low-rank structure,  following \citet{Boumal2015Riemannian} we also observe that the specific form of the constraints appearing in Problems \ref{dual_semidefinite_relaxation_for_SE3_synchronization_problem} and \ref{rank_restricted_semidefinite_relaxation_NLP_form_problem}  (i.e., that the $d \times d$ block-diagonals of $Z$ and $Y\transpose Y$ must be $I_d$) admits a nice geometric interpretation that can be exploited to further simplify Problem \ref{rank_restricted_semidefinite_relaxation_NLP_form_problem}.  Introducing the block decomposition:
\begin{equation}
 Y \triangleq
 \begin{pmatrix}
Y_1 & \dotsb & Y_n
 \end{pmatrix} \in \R^{r \times dn},
\end{equation}
 the block-diagonal constraints appearing in \eqref{rank_restricted_semidefinite_relaxation_NLP_form} are equivalent to:
\begin{equation}
\label{block_diagonal_constraints_in_rank_restricted_semidefinite_relaxation}
 Y_i\transpose Y_i = I_d, \quad Y_i \in \R^{r \times d};
\end{equation}
i.e., they require that each $Y_i$ be an element of the Stiefel manifold $\Stiefel(d, r)$ in \eqref{Stiefel_manifold_definition}.  Consequently, Problem \ref{rank_restricted_semidefinite_relaxation_NLP_form_problem} can be equivalently formulated as an \emph{unconstrained} Riemannian optimization problem  on a product of Stiefel manifolds:

\begin{problem}[Rank-restricted semidefinite relaxation, Riemannian optimization form]
\label{rank_restricted_semidefinite_relaxation_Riemannian_optimization_form_problem}
 \begin{equation}
 \label{rank_restricted_semidefinite_relaxation_Riemannian_optimization_form}
 \SDPLRval = \min_{Y \in \Stiefel(d, r)^n} \tr(\nQ Y\transpose Y).
 \end{equation}
\end{problem}

\noindent This is the optimization problem that we will actually solve in practice.

\paragraph{Exploiting graph-theoretic structure:} The reduction of Problem \ref{dual_semidefinite_relaxation_for_SE3_synchronization_problem} to Problem \ref{rank_restricted_semidefinite_relaxation_Riemannian_optimization_form_problem} obviates the need to form or manipulate the large, dense matrix variable $Z$ directly.  However the data matrix $\nQ$ that parameterizes each of Problems \ref{Simplified_maximum_likelihood_estimation_for_SE3_synchronization}--\ref{rank_restricted_semidefinite_relaxation_Riemannian_optimization_form_problem} is also dense and of the same order as $Z$, and so presents a similar computational difficulty.  Accordingly, here we develop an analogous concise description of $\nQ$ in terms of sparse matrices (and their inverses) associated with the graph $\directed{G}$.

Equations \eqref{Q_quadratic_form_definition} and \eqref{nQtran_alternative_form} provide a decomposition of $\nQ$ in terms of the sparse matrices $\MeasRotConLap$, $\nT$, and $\tranPrecisions$, and the dense orthogonal projection matrix $\OrthoProjMatrix$.  However, since $\OrthoProjMatrix$ is also a matrix derived from a sparse graph, we might suspect that it too should admit some kind of sparse description.  And indeed, it turns out that $\OrthoProjMatrix$ admits a sparse decomposition as:
\begin{subequations}
\label{computing_orthogonal_projection_operator_eq}
\begin{equation}
\label{QR_decomposition_of_oriented_incidence_matrix}
\redIncMat(\directed{G}) \tranPrecisions^{\frac{1}{2}} = LQ_1
\end{equation}
\begin{equation}
 \label{decomposition_for_orthogonal_projection_operator_into_sparse_matrices}
\OrthoProjMatrix = I_m - \tranPrecisions^{\frac{1}{2}}\redIncMat(\directed{G})\transpose L\tinv L\inv \redIncMat(\directed{G})\tranPrecisions^{\frac{1}{2}}
\end{equation}
\end{subequations}
where equation \eqref{QR_decomposition_of_oriented_incidence_matrix} is a thin LQ decomposition\footnote{This is the transpose of a QR decomposition \cite[Sec.\ 5.2]{Golub1996Matrix}.} of the weighted reduced incidence matrix $\redIncMat(\directed{G}) \tranPrecisions^{\frac{1}{2}}$ of $\directed{G}$.  This result is derived in Appendix \ref{An_alternative_form_for_the_translational_data_matrix_subsection}.  Note that expression \eqref{decomposition_for_orthogonal_projection_operator_into_sparse_matrices} for $\OrthoProjMatrix$ requires only the sparse lower-triangular factor $L$ from \eqref{QR_decomposition_of_oriented_incidence_matrix}, which can be easily and efficiently obtained  (e.g.\ by applying successive Givens rotations \cite[Sec.\ 5.2.1]{Golub1996Matrix} directly to $\redIncMat(\directed{G}) \tranPrecisions^{\frac{1}{2}}$ itself).

Together, equations \eqref{Q_quadratic_form_definition}, \eqref{nQtran_alternative_form}, and \eqref{decomposition_for_orthogonal_projection_operator_into_sparse_matrices} provide a concise description of $\nQ$ in terms of sparse matrices, as desired. We will exploit this decomposition in Section \ref{Riemannian_optimization_method_subsection} to design a fast Riemannian optimization method for solving Problem \ref{rank_restricted_semidefinite_relaxation_Riemannian_optimization_form_problem}.

\subsubsection{The Riemannian Staircase}

At this point, it is again instructive to compare Problem \ref{rank_restricted_semidefinite_relaxation_Riemannian_optimization_form_problem} with the simplified maximum-likelihood estimation Problem \ref{Simplified_maximum_likelihood_estimation_for_SE3_synchronization} and its relaxation Problem \ref{dual_semidefinite_relaxation_for_SE3_synchronization_problem}.  Since the (special) orthogonal matrices satisfy condition \eqref{Stiefel_manifold_definition} with $k = n = d$, we have the set of inclusions:
\begin{equation}
\label{hierarchy_of_inclusions_for_rank_restricted_semidefinite_relaxation_feasible_sets}
 \SO(d) \subset \Orthogonal(d) = \Stiefel(d,d) \subset \Stiefel(d, d+1) \subset \dotsb \subset \Stiefel(d, r) \subset \dotsb
\end{equation}
and we can therefore view the set of rank-restricted Riemannian optimization problems \eqref{rank_restricted_semidefinite_relaxation_Riemannian_optimization_form} as comprising 
a \emph{hierarchy} of relaxations of the maximum-likelihood estimation \eqref{Simplified_maximum_likelihood_estimation_for_SE3_synchronization_optimization_problem} that are intermediate between Problem \ref{Orthogonal_relaxation_of_the_MLE_problem} and Problem \ref{dual_semidefinite_relaxation_for_SE3_synchronization_problem} for $d < r < dn$.  However, unlike Problem \ref{dual_semidefinite_relaxation_for_SE3_synchronization_problem}, the various instantiations of Problem \ref{rank_restricted_semidefinite_relaxation_Riemannian_optimization_form_problem} are \emph{nonconvex} due to the (re)introduction of the quadratic orthonormality constraints \eqref{Stiefel_manifold_definition}.  It may therefore not be clear whether anything has really been gained by relaxing Problem \ref{Simplified_maximum_likelihood_estimation_for_SE3_synchronization} to Problem \ref{rank_restricted_semidefinite_relaxation_Riemannian_optimization_form_problem}, since it appears that we may have simply replaced one difficult nonconvex optimization problem with another.  The following remarkable result (Corollary 8 of \citet{Boumal2106Nonconvex}) justifies this approach:

\begin{prop}[A sufficient condition for global optimality in Problem \ref{rank_restricted_semidefinite_relaxation_Riemannian_optimization_form_problem}]
\label{Global_optima_of_rank_restricted_semidefinite_relaxation_Riemannian_optimization_problem_prop}
 If $Y \in \Stiefel(d, r)^n$ is a \emph{(}row\emph{)} rank-deficient second-order critical point\footnote{That is, a point satisfying $\grad F(Y) = 0$ and $\Hess F(Y) \succeq 0$ (cf.\ \eqref{function_and_derivatives_for_rank_restricted_Riemannian_form_of_semidefinite_relaxation}--\eqref{Riemannian_Hessian_vector_product_expression}).} of Problem \ref{rank_restricted_semidefinite_relaxation_Riemannian_optimization_form_problem}, then $Y$ is a global minimizer of Problem \ref{rank_restricted_semidefinite_relaxation_Riemannian_optimization_form_problem} and $\Zopt = Y\transpose Y$ is a solution of the dual semidefinite relaxation Problem \ref{dual_semidefinite_relaxation_for_SE3_synchronization_problem}. 
 \end{prop}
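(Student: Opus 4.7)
The plan is to exploit the first- and second-order KKT conditions at $Y$ to construct a dual certificate witnessing the optimality of $\Zopt = Y\transpose Y$ for Problem \ref{dual_semidefinite_relaxation_for_SE3_synchronization_problem}; this will simultaneously yield the global optimality of $Y$ for Problem \ref{rank_restricted_semidefinite_relaxation_Riemannian_optimization_form_problem}, because every feasible $Y' \in \Stiefel(d,r)^n$ of the latter produces a feasible $(Y')\transpose Y'$ of the former with identical cost $\tr(\nQ (Y')\transpose Y')$.

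First I would set up the Lagrangian $L(Y, \Lambda) = \tr((\nQ - \Lambda) Y\transpose Y) + \tr(\Lambda)$ with block-diagonal multipliers $\Lambda \in \SBD(d,n)$ enforcing the Stiefel constraints $Y_i\transpose Y_i = I_d$, and project the Euclidean gradient $\nabla F(Y) = 2Y\nQ$ onto the tangent space of $\Stiefel(d,r)^n$. Setting the Riemannian gradient to zero yields the first-order condition $Y(\nQ - \Lambda) = 0$, which uniquely determines $\Lambda \in \SBD(d,n)$.

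Next I would exploit the row rank-deficiency hypothesis to produce informative tangent perturbations. Pick a nonzero $v \in \R^r$ with $Y\transpose v = 0$; then for every $w \in \R^{dn}$ the rank-one matrix $\dot Y \triangleq v w\transpose$ satisfies $Y_i\transpose \dot Y_i = (Y_i\transpose v) w_i\transpose = 0$ (trivially skew-symmetric), so $\dot Y \in T_Y \Stiefel(d,r)^n$. At a first-order critical point the Riemannian Hessian quadratic form reduces to that of the Lagrangian, so
\begin{equation*}
0 \le \nabla^2_{YY} L(Y,\Lambda)[\dot Y, \dot Y] = 2\tr\bigl((\nQ - \Lambda) \dot Y\transpose \dot Y\bigr) = 2 \lVert v \rVert^2 \, w\transpose (\nQ - \Lambda) w
\end{equation*}
for every $w \in \R^{dn}$, forcing $\nQ - \Lambda \succeq 0$.

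Finally I would assemble the certificate: $\Zopt = Y\transpose Y$ is positive semidefinite with $d \times d$ block-diagonal equal to $I_d$ (by feasibility of $Y$), hence feasible for Problem \ref{dual_semidefinite_relaxation_for_SE3_synchronization_problem}; $\Lambda \in \SBD(d,n)$ with $\nQ - \Lambda \succeq 0$ is feasible for Problem \ref{primal_semidefinite_relaxation_for_SE3_synchronization_problem}. The first-order condition and the block-diagonality of $\Lambda$ yield $\tr(\nQ \Zopt) = \tr(\Lambda Y\transpose Y) = \sum_i \tr(\Lambda_i) = \tr(\Lambda)$, closing the weak-duality gap and certifying both as optimal. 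Global optimality of $Y$ for Problem \ref{rank_restricted_semidefinite_relaxation_Riemannian_optimization_form_problem} then follows from the opening comparison. The principal technical hurdle is the Riemannian Hessian computation on the product Stiefel manifold (specifically, correctly handling the normal-bundle correction produced by the multiplier $\Lambda$); once that identification is in hand, the remainder is direct linear algebra.
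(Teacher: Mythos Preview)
Your argument is correct and is precisely the Burer--Monteiro/Boumal--Voroninski--Bandeira certificate construction. Note, however, that the paper does not supply its own proof of this proposition: it simply cites it as Corollary~8 of \cite{Boumal2106Nonconvex}. What you have written is, in outline, exactly the proof given in that reference, so there is no meaningful methodological difference to discuss---you have reconstructed the cited argument rather than found an alternative route.

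One small point worth making explicit in your write-up: the identification of the Riemannian Hessian quadratic form with $2\tr\bigl((\nQ-\Lambda)\dot Y\transpose\dot Y\bigr)$ on tangent vectors is exactly what the paper's formula \eqref{Riemannian_Hessian_vector_product_expression} gives once you substitute $\SymBlockDiag_d(Y\transpose\nabla F(Y)) = 2\Lambda$ and use self-adjointness of $\proj_Y$, so the ``principal technical hurdle'' you flag is already handled by the paper's own Hessian expression and requires no additional work.
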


Proposition \ref{Global_optima_of_rank_restricted_semidefinite_relaxation_Riemannian_optimization_problem_prop} immediately suggests an algorithm for recovering solutions $\Zopt$ of Problem \ref{dual_semidefinite_relaxation_for_SE3_synchronization_problem} from Problem \ref{rank_restricted_semidefinite_relaxation_Riemannian_optimization_form_problem}: simply apply a second-order Riemannian optimization method to search successively higher levels of the hierarchy of relaxations \eqref{rank_restricted_semidefinite_relaxation_Riemannian_optimization_form} until a \emph{rank-deficient} second-order critical point is obtained.\footnote{Note that since \emph{every} $Y \in \Stiefel(d,r)^n$ is row rank-deficient for $r > dn$, this procedure is guaranteed to recover an optimal solution after searching at most $dn + 1$ levels of the hierarchy \eqref{rank_restricted_semidefinite_relaxation_Riemannian_optimization_form}.}  This algorithm, the \emph{Riemannian Staircase} \cite{Boumal2015Riemannian,Boumal2106Nonconvex}, is summarized as Algorithm \ref{Riemannian_Staircase_algorithm}.  We emphasize that while Algorithm \ref{Riemannian_Staircase_algorithm} may require searching up to $O(dn)$ levels of the hierarchy \eqref{rank_restricted_semidefinite_relaxation_Riemannian_optimization_form} in the worst case, in practice this a gross overestimate; typically one or two ``stairs'' suffice.

\stepcounter{footnote}

\begin{algorithm}[t]
\caption{The Riemannian Staircase}
\label{Riemannian_Staircase_algorithm}
\begin{algorithmic}[1]
\Input An initial point $Y \in \Stiefel(d, r_0)^n$, $r_0 \ge d + 1$.
\Output A minimizer $\Yopt$ of Problem \ref{rank_restricted_semidefinite_relaxation_Riemannian_optimization_form_problem} corresponding to a solution $\Zopt = {\Yopt}\transpose \Yopt$ of Problem \ref{dual_semidefinite_relaxation_for_SE3_synchronization_problem}.
\Function{RiemannianStaircase}{$Y$}
\For{$r = r_0, \dotsc,  dn + 1$ }
\State\label{second_order_critical_points_search_for_Riemannian_Staircase} Starting at $Y$, apply a Riemannian optimization method\footnotemark[\value{footnote}] to identify a second-order 
\Statex[2] critical point $\Yopt \in \Stiefel(d, r)^n$ of Problem \ref{rank_restricted_semidefinite_relaxation_Riemannian_optimization_form_problem}.
\If{$\rank(\Yopt) < r$}
\State \Return $\Yopt$ 
\Else
\State Set $Y \leftarrow 
\begin{pmatrix}
 \Yopt \\
 0_{1 \times dn}
\end{pmatrix}$.  
\EndIf
\EndFor
\EndFunction
%
 \end{algorithmic}
\end{algorithm}
\footnotetext{For example, the second-order Riemannian trust-region method \cite[Algorithm 3]{Boumal2016Global}.}

\pagebreak

\subsubsection{A Riemannian optimization method for Problem \ref{rank_restricted_semidefinite_relaxation_Riemannian_optimization_form_problem}}
\label{Riemannian_optimization_method_subsection}
 
 Proposition \ref{Global_optima_of_rank_restricted_semidefinite_relaxation_Riemannian_optimization_problem_prop} and the Riemannian Staircase (Algorithm \ref{Riemannian_Staircase_algorithm}) provide a means of obtaining \emph{global} minimizers of Problem \ref{dual_semidefinite_relaxation_for_SE3_synchronization_problem} by \emph{locally} searching for second-order critical points of Problem \ref{rank_restricted_semidefinite_relaxation_Riemannian_optimization_form_problem}.  In this subsection, we design a Riemannian optimization method that will enable us to rapidly identify these critical points in practice.
 
Equations \eqref{Q_quadratic_form_definition}, \eqref{nQtran_alternative_form}, and \eqref{decomposition_for_orthogonal_projection_operator_into_sparse_matrices} provide an efficient means of computing products with $\nQ$ \emph{without} the need to form $\nQ$ explicitly by performing a sequence of sparse matrix multiplications and sparse triangular solves.  This operation is sufficient to evaluate the objective appearing in Problem \ref{rank_restricted_semidefinite_relaxation_Riemannian_optimization_form_problem}, as well as its gradient and Hessian-vector products when it is considered as a function on the ambient Euclidean space $\R^{r \times dn}$:


\begin{subequations}
 \label{function_and_derivatives_for_rank_restricted_Riemannian_form_of_semidefinite_relaxation}
\begin{equation}
F(Y) \triangleq \tr(\nQ Y\transpose Y) 
\end{equation}
\begin{equation}
\label{Euclidean_gradient}
\nabla F(Y) = 2 Y \nQ 
\end{equation}
\begin{equation}
\label{Euclidean_Hessian_vector_product}
\nabla^2 F(Y)[\dot{Y}] = 2\dot{Y} \nQ.
\end{equation}
\end{subequations}
Furthermore, there are simple relations between the ambient Euclidean gradient and Hessian-vector products in \eqref{Euclidean_gradient} and \eqref{Euclidean_Hessian_vector_product} and their corresponding Riemannian counterparts when $F(\cdot)$ is viewed as a function restricted to the embedded submanifold $\Stiefel(d, r)^n \subset \R^{r \times dn}$.  With reference to the orthogonal projection operator onto the tangent space of $\Stiefel(d,r)^n$ at $Y$ \cite[eq.\ (2.3)]{Edelman1998Geometry}:
\begin{equation}
\label{Stiefel_manifold_orthogonal_projection_operator}
\begin{split}
 \proj_Y &\colon T_Y\left(\R^{r \times dn} \right)\to T_Y\left(\Stiefel(d, r)^n \right) \\
 \proj_Y(X) &= X - Y \SymBlockDiag_d(Y\transpose X)
\end{split}
\end{equation}
the Riemannian gradient $\grad F(Y)$ is simply the orthogonal projection of the ambient Euclidean gradient $\nabla F(Y)$ (cf.\ \cite[eq.\ (3.37)]{Absil2009Optimization}): 
\begin{equation}
\label{Riemannian_gradient_expression}
 \grad F(Y) = \proj_Y \nabla F(Y).
\end{equation}
Similarly, the Riemannian Hessian-vector product $\Hess F(Y)[\dot{Y}]$ can be obtained as the orthogonal projection of the ambient directional derivative of the gradient vector field $\grad F(Y)$ in the direction of $\dot{Y}$ (cf.\ \cite[eq.\ (5.15)]{Absil2009Optimization}). A straightforward computation shows that this is given by:\footnote{We point out that equations \eqref{Stiefel_manifold_orthogonal_projection_operator}, \eqref{Riemannian_gradient_expression}, and \eqref{Riemannian_Hessian_vector_product_expression} correspond to equations (7), (8), and (9) in \cite{Boumal2015Riemannian}, with the caveat that \cite{Boumal2015Riemannian}'s definition of $Y$ is the \emph{transpose} of ours.  Our notation follows the more common convention (cf.\ e.g.  \cite{Edelman1998Geometry}) that elements of a Stiefel manifold are matrices with orthonormal \emph{columns}, rather than \emph{rows}.}
\begin{equation}
\label{Riemannian_Hessian_vector_product_expression}
 \begin{split}
\Hess F(Y)[\dot{Y}] &= \proj_Y \left( \DirectionalDerivative \left[\grad F(Y) \right][\dot{Y}] \right) \\
&= \proj_Y\left(\nabla^2 F(Y)[\dot{Y}] - \dot{Y} \SymBlockDiag_d\left( Y\transpose \nabla F(Y) \right) \right).
 \end{split}
\end{equation}

Equations \eqref{Q_quadratic_form_definition}, \eqref{nQtran_alternative_form}, and \eqref{computing_orthogonal_projection_operator_eq}--\eqref{Riemannian_Hessian_vector_product_expression} provide an efficient means of computing $F(Y)$, $\grad F(Y)$, and $\Hess F(Y)[\dot{Y}]$.  Consequently, we propose to employ a \emph{truncated-Newton trust-region} optimization method \cite{Dembo1983Truncated,Steihaug1983Conjugate,Nash2000Survey} to solve Problem \ref{rank_restricted_semidefinite_relaxation_Riemannian_optimization_form_problem}; this approach will enable  us to exploit the availability of an efficient routine for computing Hessian-vector products $\Hess F(Y)[\dot{Y}]$ to implement a second-order optimization method \emph{without} the need to explicitly form or factor the dense matrix $\Hess F(Y)$.  Moreover, truncated-Newton methods comprise the current state of the art for superlinear large-scale unconstrained nonlinear programming \cite[Sec.\ 7.1]{Nocedal2006Numerical}, and are therefore ideally suited for solving large-scale instances of \eqref{rank_restricted_semidefinite_relaxation_Riemannian_optimization_form}.  Accordingly, we will apply the truncated-Newton \emph{Riemannian Trust-Region} (RTR) method \cite{Absil2007Trust,Boumal2016Global} to efficiently compute high-precision\footnote{The requirement of high precision here is not superfluous: because Proposition \ref{Global_optima_of_rank_restricted_semidefinite_relaxation_Riemannian_optimization_problem_prop} requires the identification of \emph{rank-deficient} second-order critical points, whatever local search technique we apply to Problem \ref{rank_restricted_semidefinite_relaxation_Riemannian_optimization_form_problem} must be capable of numerically approximating a critical point precisely enough that its rank can be correctly determined.} estimates of second-order critical points of Problem \ref{rank_restricted_semidefinite_relaxation_Riemannian_optimization_form_problem}.

\begin{algorithm}[t]
\caption{Rounding procedure for solutions of Problem \ref{rank_restricted_semidefinite_relaxation_Riemannian_optimization_form_problem}}
\label{Rounding_algorithm}
\begin{algorithmic}[1]
\Input An optimal solution $\Yopt \in \Stiefel(d, r)^n$ of Problem \ref{rank_restricted_semidefinite_relaxation_Riemannian_optimization_form_problem}.
\Output A feasible point $\RotEst \in \SO(d)^n$.
\Function{RoundSolution}{$\Yopt$}
\State Compute a rank-$d$ truncated singular value decomposition $U_d \varXi_d V_d\transpose$ for $\Yopt$ \
\Statex[1] and assign $\RotEst \leftarrow \varXi_d V_d\transpose$.
\State Set $N_{+} \leftarrow \lvert \lbrace \RotEst_i \mid \det(\RotEst_i)  >0 \rbrace \rvert$.
\If{$N_{+} < \lceil \frac{n}{2} \rceil$}
\State $\RotEst \leftarrow \Diag(1, \dotsc, 1, -1) \RotEst$.
\EndIf
\For{$i = 1, \dotsc, n$}
\State Set $\RotEst_i \leftarrow$ \Call{NearestRotation}{$\RotEst_i$}.
\EndFor
\State \Return $\RotEst$
\EndFunction
 \end{algorithmic}
\end{algorithm}

\subsection{Rounding the solution}
\label{rounding_procedure_subsection}

In the previous subsection, we described an efficient algorithmic approach for computing minimizers $\Yopt \in \Stiefel(d,r)^n$ of Problem \ref{rank_restricted_semidefinite_relaxation_Riemannian_optimization_form_problem} that correspond to solutions $\Zopt = {\Yopt}\transpose \Yopt$ of Problem \ref{dual_semidefinite_relaxation_for_SE3_synchronization_problem}.  However, our ultimate goal is to extract an optimal solution $\Ropt \in \SO(d)^n$ of Problem \ref{Simplified_maximum_likelihood_estimation_for_SE3_synchronization} from $\Zopt$ whenever exactness holds, and a \emph{feasible approximate solution} $\RotEst \in \SO(d)^n$ otherwise.  In this subsection, we develop a rounding procedure satisfying these criteria.  To begin, let us consider the  case in which exactness obtains; here:
\begin{equation}
\label{equality_of_optimal_solutions_for_typical_case_in_rounding_procedure}
{\Yopt}\transpose \Yopt = \Zopt = {\Ropt}\transpose \Ropt
\end{equation}
for some optimal solution $\Ropt \in \SO(d)^n$ of Problem \ref{Simplified_maximum_likelihood_estimation_for_SE3_synchronization}.  Since $\rank(\Ropt) = d$, this implies that $\rank(\Yopt) = d$ as well.  Consequently, letting
\begin{equation}
\label{rank_d_truncated_singular_value_decomposition}
 \Yopt = U_d \varXi_d V_d\transpose
\end{equation}
denote a (rank-$d$) thin singular value decomposition \cite[Sec.\ 2.5.3]{Golub1996Matrix} of $\Yopt$, and defining
\begin{equation}
\label{Ybar_definition}
 \bar{Y} \triangleq \varXi_d V_d\transpose \in \R^{d \times dn},
\end{equation}
it follows from substituting \eqref{rank_d_truncated_singular_value_decomposition} into \eqref{equality_of_optimal_solutions_for_typical_case_in_rounding_procedure} that
\begin{equation}
\label{equality_of_Ybar_with}
\bar{Y}\transpose \bar{Y} = \Zopt = {\Ropt}\transpose \Ropt.
\end{equation}
Equation \eqref{equality_of_Ybar_with} implies that the $d \times d$ block-diagonal of $\bar{Y}\transpose \bar{Y}$ satisfies $\bar{Y}_i\transpose \bar{Y}_i = I_d$ for all $i \in [n]$, i.e.\ $\bar{Y} \in \Orthogonal(d)^n$.  Similarly, comparing the elements of the first block rows of $\bar{Y}\transpose \bar{Y}$ and ${\Ropt}\transpose \Ropt$ in \eqref{equality_of_Ybar_with} shows that $\bar{Y}_1 \transpose \bar{Y}_j = \Ropt_1 \Ropt_j$ for all $j \in [n]$.  Left-multiplying this set of equalities by $\bar{Y}_1$ and letting $A = \bar{Y}_1 \Ropt_1$ then gives:
\begin{equation}
\label{extracting_R_from_thin_SVD}
\bar{Y} = A \Ropt, \quad A \in \Orthogonal(d).
\end{equation}
Since any product of the form $A \Ropt$ with $A \in \SO(d)$ is \emph{also} an optimal solution of Problem \ref{Simplified_maximum_likelihood_estimation_for_SE3_synchronization} (by gauge symmetry), equation \eqref{extracting_R_from_thin_SVD} shows that $\bar{Y}$ as defined in \eqref{Ybar_definition} is optimal provided that $\bar{Y} \in \SO(d)$ specifically.  Furthermore, if this is not the case, we can always  make it so by left-multiplying $\bar{Y}$ by any orientation-reversing element of $\Orthogonal(d)$, for example $\Diag(1, \dotsc, 1, -1)$.  Thus, equations \eqref{rank_d_truncated_singular_value_decomposition}--\eqref{extracting_R_from_thin_SVD} give a straightforward means of recovering an optimal solution of Problem \ref{Simplified_maximum_likelihood_estimation_for_SE3_synchronization} from $\Yopt$ whenever exactness holds.

Moreover, this procedure can be straightforwardly generalized to the case that exactness fails, thereby producing a convenient rounding scheme. Specifically, we can consider the right-hand side of \eqref{Ybar_definition} as taken from a rank-$d$ \emph{truncated} singular value decomposition of $\Yopt$ (so that $\bar{Y}$ is an orthogonal transform of the best rank-$d$ approximation of $\Yopt$), multiply $\bar{Y}$ by an orientation-reversing element of $\Orthogonal(d)$ according to whether a majority of its block elements have positive or negative determinant, and then project each of $\bar{Y}$'s blocks to the nearest rotation matrix.\footnote{This is the \emph{special orthogonal Procrustes} problem, which admits a simple closed-form solution based upon the singular value decomposition   \cite{Hanson1981Analysis,Umeyama1991Least}.}  This generalized rounding scheme is formalized as Algorithm \ref{Rounding_algorithm}.

\subsection{The complete algorithm}

Combining the efficient optimization approach of Section \ref{optimization_approach_subsection} with the rounding procedure of Section \ref{rounding_procedure_subsection} produces \emph{\sync} (Algorithm \ref{SE_sync_algorithm}), our proposed algorithm for synchronization over the special Euclidean group.

\begin{algorithm}[t]
\caption{The \syncSpace  algorithm}
\label{SE_sync_algorithm}
\begin{algorithmic}[1]
\Input  An initial point $Y \in \Stiefel(d, r_0)^n$, $r_0 \ge d + 1$.
\Output A feasible estimate $\PoseEst\in \SE(d)^n$ for the maximum-likelihood estimation Problem \ref{SE3_synchronization_MLE_NLS_problem} and the lower bound $\SDPval$ for Problem \ref{SE3_synchronization_MLE_NLS_problem}'s optimal value.
\Function{\sync}{$Y$}
\State Set $\Yopt \leftarrow \Call{RiemannianStaircase}{Y}$.  \Comment{Solve Problems \ref{dual_semidefinite_relaxation_for_SE3_synchronization_problem} \& \ref{rank_restricted_semidefinite_relaxation_Riemannian_optimization_form_problem}}  
\State Set $\SDPval \leftarrow F(\nQ {\Yopt}\transpose \Yopt)$. \Comment{$\Zopt = {\Yopt}\transpose \Yopt$}
\State Set $\RotEst \leftarrow \Call{RoundSolution}{\Yopt}$.  
\State Recover the optimal translational estimates $\TranEst$ corresponding to $\RotEst$ via \eqref{minimizing_value_of_t_from_minimizing_value_of_R}.

\State Set $\PoseEst \leftarrow (\TranEst, \RotEst)$.
\State \Return $\left \lbrace  \PoseEst, \SDPval \right \rbrace$
\EndFunction
 \end{algorithmic}
\end{algorithm}

When applied to an instance of $\SE(d)$ synchronization, \syncSpace  returns a feasible point $\PoseEst \in \SE(d)^n$ for the maximum-likelihood estimation Problem \ref{SE3_synchronization_MLE_NLS_problem} and the lower bound $\SDPval \le \MLEval$ for Problem \ref{SE3_synchronization_MLE_NLS_problem}'s optimal value.  This lower bound provides an \emph{upper} bound on the suboptimality of \emph{any} feasible point $x = (\tran, \rot) \in \SE(d)^n$ as a solution of Problem \ref{SE3_synchronization_MLE_NLS_problem} according to:
\begin{equation}
 \label{suboptimality_bound_for_Problem_1}
 F(\nQ \rot \transpose \rot) - \SDPval \ge F(\nQ \rot\transpose \rot) - \MLEval.
\end{equation}
Furthermore, in the  case that Problem \ref{dual_semidefinite_relaxation_for_SE3_synchronization_problem} is exact, the  estimate $\PoseEst = (\TranEst, \RotEst) \in \SE(d)^n$ returned by Algorithm \ref{SE_sync_algorithm} \emph{attains} this lower bound:
\begin{equation}
 \label{MLE_attains_lower_bound_in_case_of_exactness}
 F(\nQ \RotEst\transpose \RotEst) = \SDPval.
\end{equation}
Consequently, verifying \emph{a posteriori} that \eqref{MLE_attains_lower_bound_in_case_of_exactness} holds provides a \emph{computational certificate} of $\PoseEst$'s correctness as a solution of the maximum-likelihood estimation Problem \ref{SE3_synchronization_MLE_NLS_problem}.  \syncSpace is thus a \emph{certifiably correct} algorithm for $\SE(d)$ synchronization, as claimed.

\newcommand{\E}[1]{\times 10^{#1}}

\def \syncChol{\textsf{\sync-Chol}\xspace}
\def \syncQR{\textsf{\sync-QR}\xspace}
\def \GN{\textsf{GN}}

\def \rand{\textsf{rand}}
\def \chordal{\textsf{chord}}

\def \syncQRchord{\syncQR \textsf{+} \chordal}
\def \syncQRrand{\syncQR \textsf{+} \rand}

\def \GNchord{\GN\ \textsf{+} \chordal}

\def \cube{\textsf{cube}\xspace}

\def \sphere{\textsf{sphere}\xspace}
\def \torus{\textsf{torus}\xspace}
\def \grid{\textsf{grid}\xspace}
\def \garage{\textsf{garage}\xspace}
\def \cubicle{\textsf{cubicle\xspace}}
\def \rim{\textsf{rim}\xspace}

\def \pLC{p_{LC}}
\def \sT{\sigma_T}
\def \sR{\sigma_R}

\section{Experimental results}
\label{Experimental_results_section}

In this section we evaluate \sync's performance on a variety of special Euclidean synchronization problems drawn from the motivating application of 3D pose-graph simultaneous localization and mapping (SLAM).   We consider two versions of the algorithm that differ in their approach to evaluating products with the orthogonal projection matrix $\OrthoProjMatrix$: the first (\syncChol) employs the cached Cholesky factor $L$ in \eqref{QR_decomposition_of_oriented_incidence_matrix} to evaluate this product by working right to left through the sequence of sparse matrix multiplications and upper- and lower-triangular solves in \eqref{decomposition_for_orthogonal_projection_operator_into_sparse_matrices}, while the second (\syncQR) follows \eqref{Computing_projection_of_v_onto_Omega_half_At_via_LS}, using QR decomposition to solve the linear least-squares problem for $w^*$.   As a basis for comparison, we also evaluate the performance of the Gauss-Newton method (\GN), the \emph{de facto} standard approach for solving pose-graph SLAM problems in robotics \cite{Kuemmerle2011g20,Kaess2012iSAM2ijrr,Grisetti2010Tutorial}.

All of the following experiments are performed on a Dell Precision 5510 laptop with an Intel Xeon E3-1505M  2.80 GHz processor and 16 GB of RAM running Ubuntu 16.04.  Our experimental implementations of \syncSpace\footnote{Available online: \url{https://github.com/david-m-rosen/SE-Sync}.} and the Gauss-Newton method are written in MATLAB, and the former takes advantage of the truncated-Newton RTR method \cite{Absil2007Trust,Boumal2016Global} supplied by the Manopt toolbox \cite{Boumal2014Manopt} to implement the Riemannian optimization method of Section \ref{Riemannian_optimization_method_subsection}.  Each optimization algorithm is limited to a maximum of 500 (outer) iterations, and convergence is declared whenever the norm of the (Riemannian) gradient is less than $10^{-2}$ or the relative decrease in function value between two subsequent (accepted) iterations is less than $10^{-5}$; additionally, each outer iteration of RTR is limited to a maximum of 500 Hessian-vector product operations.  The Gauss-Newton method is initialized using the \emph{chordal initialization},  a state-of-the-art method for bootstrapping an initial solution in SLAM and bundle adjustment problems \cite{Carlone2015Initialization,Martinec2007Robust}, and we set $r_0 = 5$ in the Riemannian Staircase (Algorithm \ref{Riemannian_Staircase_algorithm}). Finally, since \syncSpace is based upon solving the (convex) semidefinite relaxation Problem \ref{dual_semidefinite_relaxation_for_SE3_synchronization_problem}, it does not require a high-quality initialization in order to reach a globally optimal solution; nevertheless, it can still benefit (in terms of reduced computation time) from being supplied with one.  Consequently, in the following experiments we employ two initialization procedures in conjunction with each version of \syncSpace: the first (\rand) simply samples a point uniformly randomly from $\Stiefel(d, r_0)^n$, while the second (\chordal) supplies the same chordal initialization that the Gauss-Newton method receives, in order to enable a fair comparison of the algorithms' computational speeds.

\subsection{\textsf{Cube} experiments}
\label{Cube_experiments_subsection}

In this first set of experiments, we are interested in investigating how the performance of \syncSpace is affected by factors such as measurement noise, measurement density, and problem size.  To that end, we conduct a set of simulation studies that enable us to interrogate each of these factors individually.  Concretely, we revisit the \cube experiments considered in our previous work \cite{Carlone2015Lagrangian}; this scenario simulates a robot traveling along a rectilinear path through a regular cubical lattice with a side length of $s$ poses (Fig.\ \ref{cube_example_fig}).  An odometric measurement is available between each pair of sequential poses, and measurements between nearby nonsequential poses are available with probability $\pLC$; the measurement values $\npose_{ij} = (\ntran_{ij}, \nrot_{ij})$ themselves are sampled according to \eqref{probabilistic_generative_model_for_noisy_observations}.  We fix default values for these parameters at $\kappa = 16.67$ (corresponding to an expected angular RMS error of 10 degrees for the rotational measurements $\nrot_{ij}$, cf.\  \eqref{standard_deviation_of_rotation_angle} in Appendix \ref{Isotropic_Langevin_distribution_appendix}), $\tau = 75$ (corresponding to an expected RMS error of .20 m for the translational measurements $\ntran_{ij}$), $\pLC = .1$, and $s = 10$ (corresponding to a default problem size of $1000$ poses), and consider the effect of varying each of them individually;  our complete dataset consists of 50 realizations of the \cube sampled from the generative model just described for \emph{each} joint setting of the parameters $\kappa$, $\tau$, $\pLC$, and $s$.  Results for these experiments are shown in Fig.\ \ref{Cube_experiments_figure}.

\begin{figure}
 \center
\includegraphics[width = .5\textwidth]{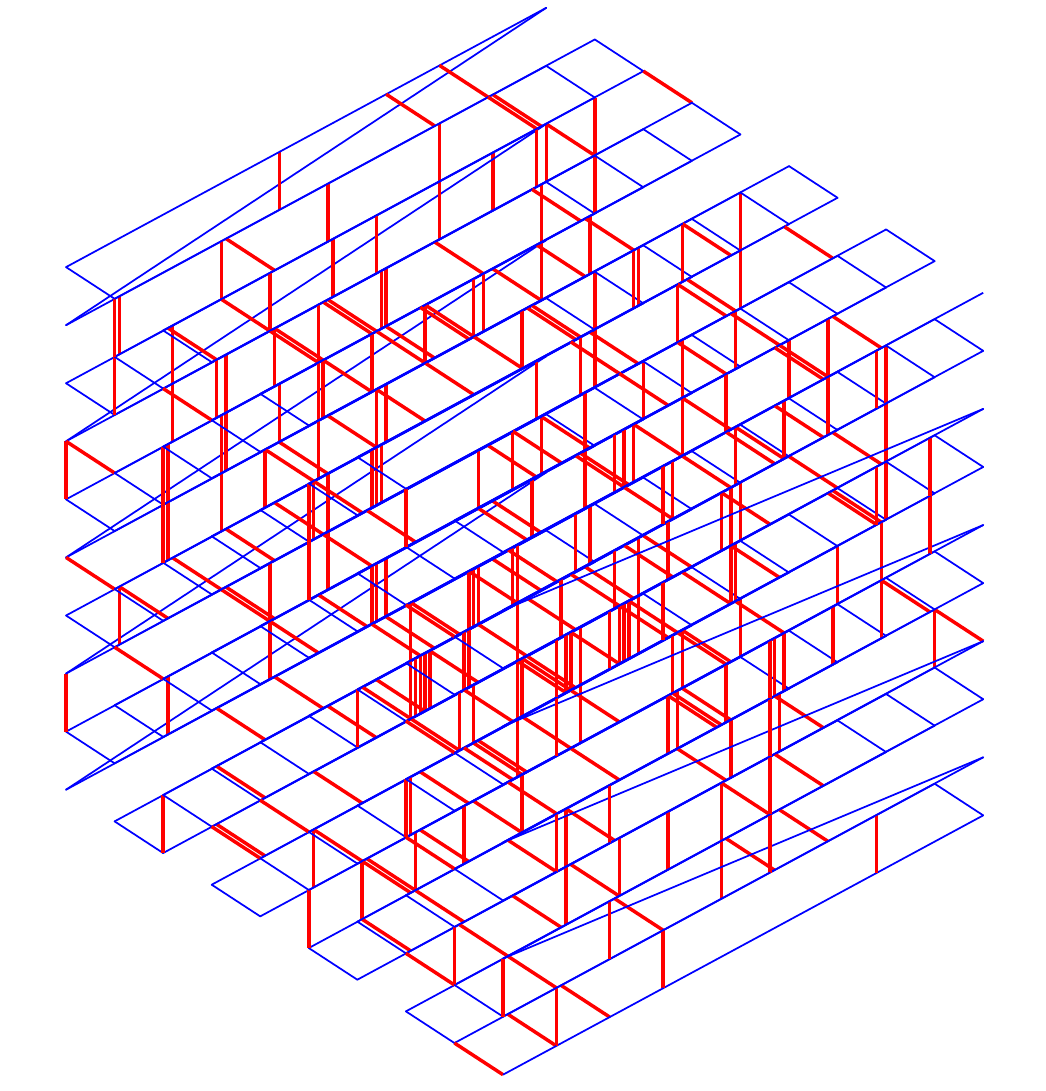}
\caption{The ground truth configuration of an instance of the \cube dataset with $s = 10$ and $\pLC = .1$.  The robot's trajectory (with associated odometric measurements) is drawn in blue, and loop closure observations in red.}
\label{cube_example_fig}
\end{figure}

\def \FigWidth{.31\textwidth}

\begin{figure}
\center
\subfigure[Varying $\kappa$ while holding $\tau = 75$, $p_{LC} = .1$, $s = 10$.]{
\includegraphics[width=\FigWidth]{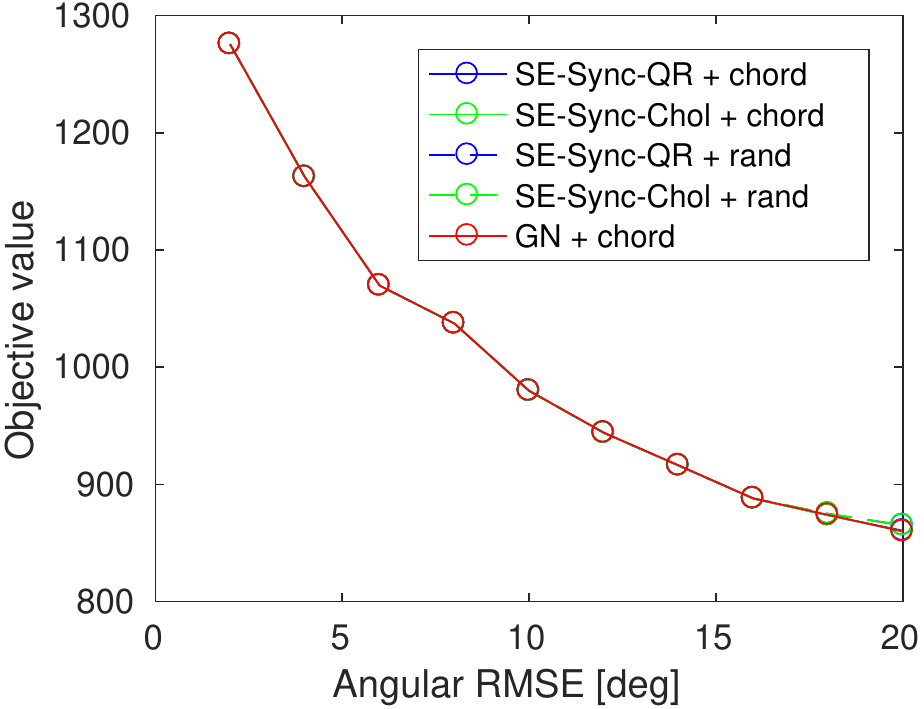} \quad 
\includegraphics[width=\FigWidth]{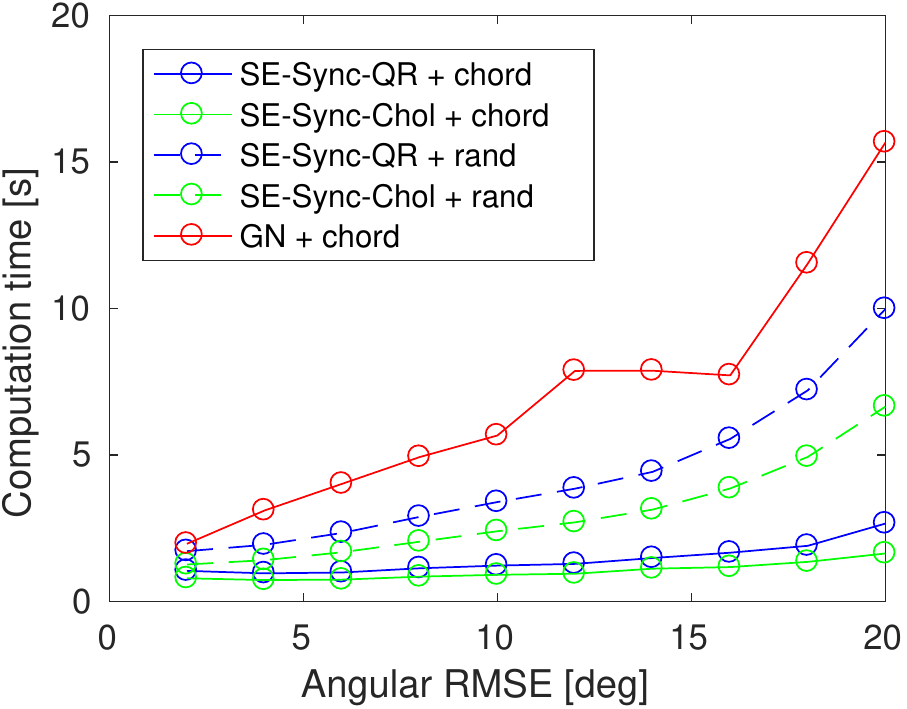} \quad
\includegraphics[width=\FigWidth]{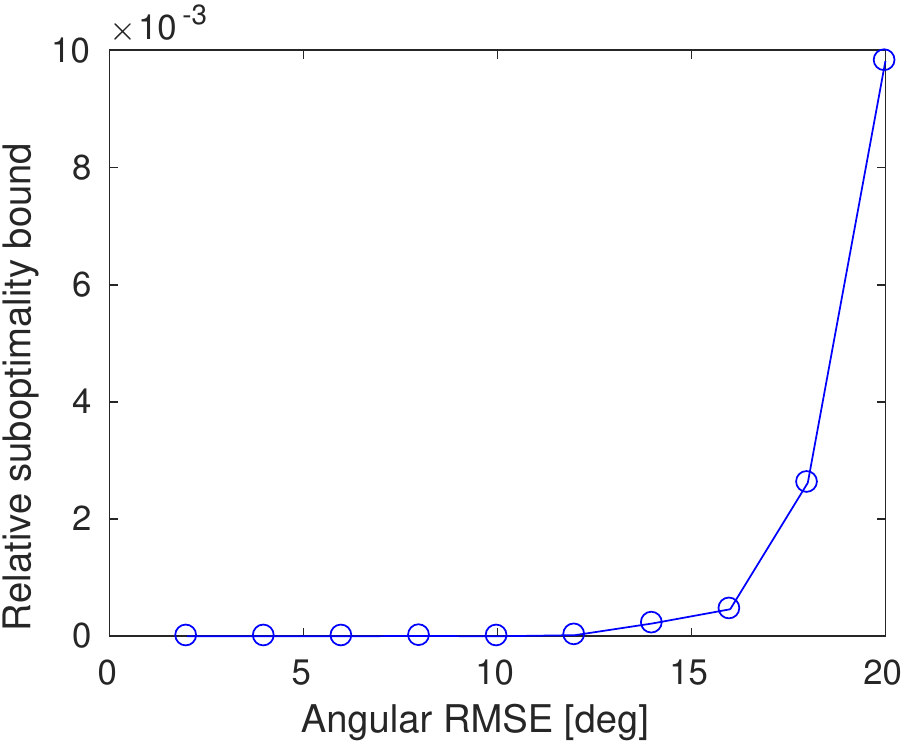}
\label{varying_rotational_noise_level}
} \\

\subfigure[Varying $\tau$ while holding $\kappa = 16.67$, $p_{LC} = .1$, $s = 10$.]{
\includegraphics[width=\FigWidth]{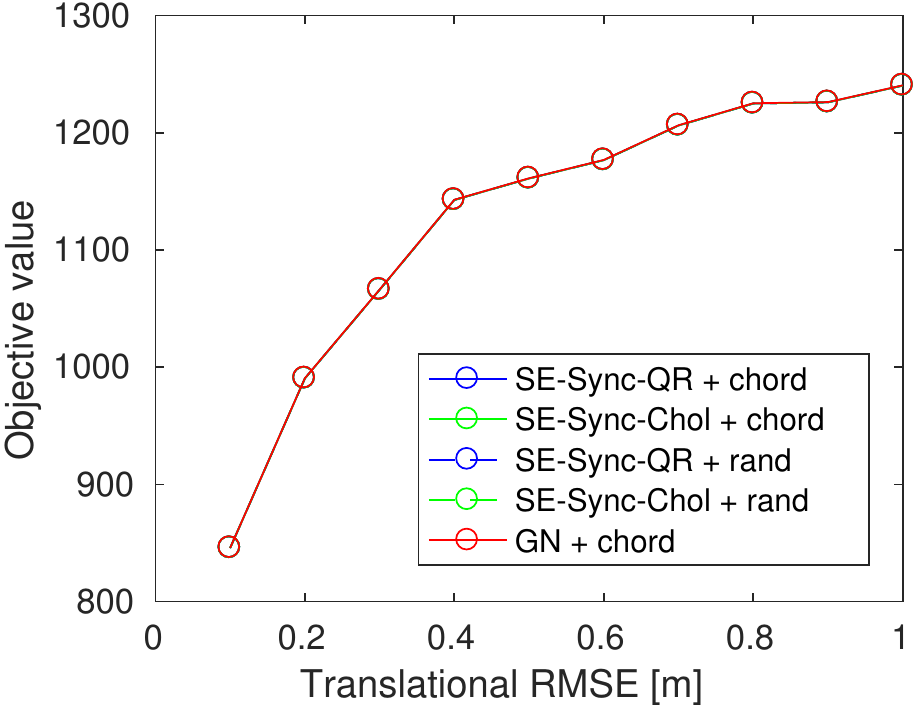} \quad 
\includegraphics[width=\FigWidth]{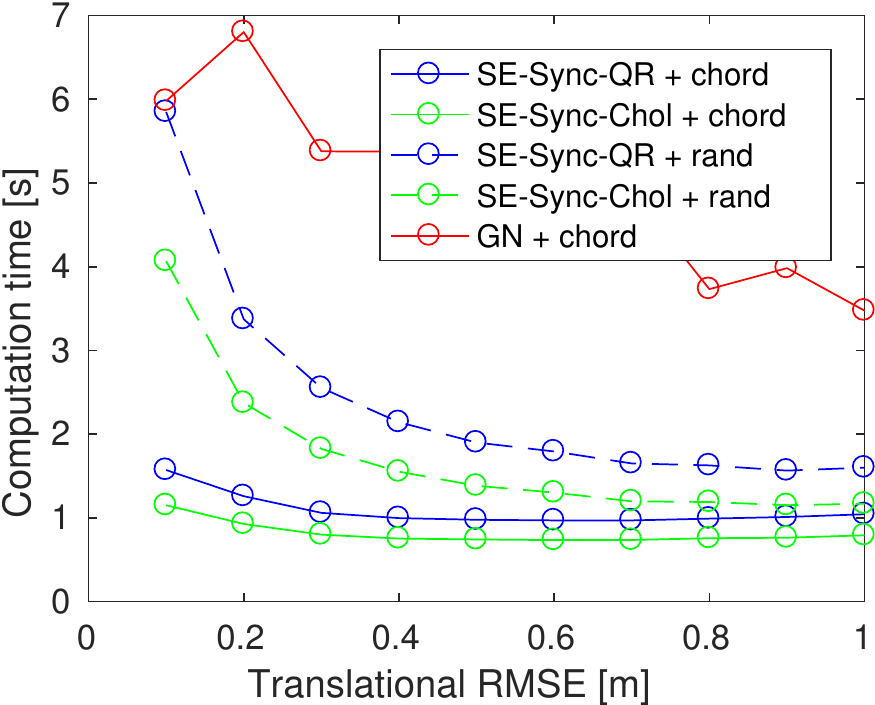} \quad
\includegraphics[width=\FigWidth]{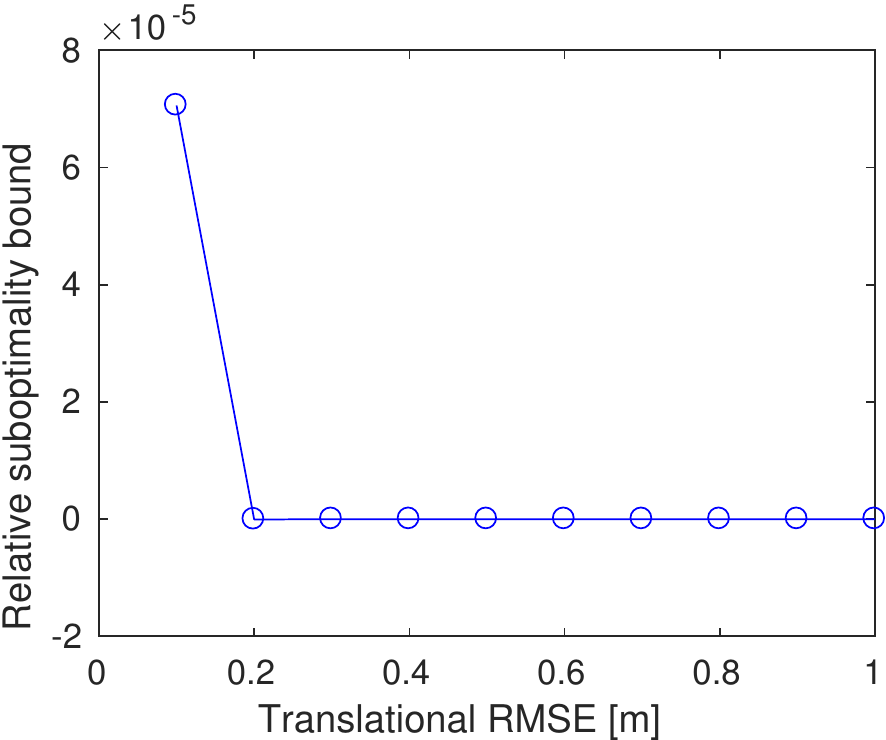}
\label{varying_translational_noise_level}
} \\

\subfigure[Varying $p_{LC}$ while holding  $\kappa = 16.67$, $\tau = 75$, $s = 10$.]{
\includegraphics[width=\FigWidth]{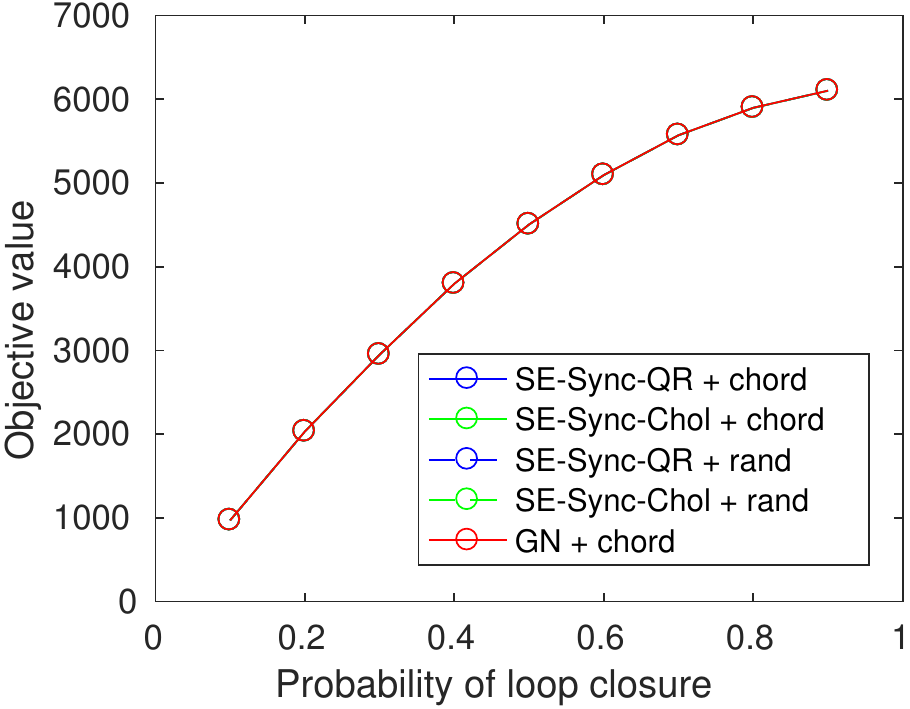} \quad 
\includegraphics[width=\FigWidth]{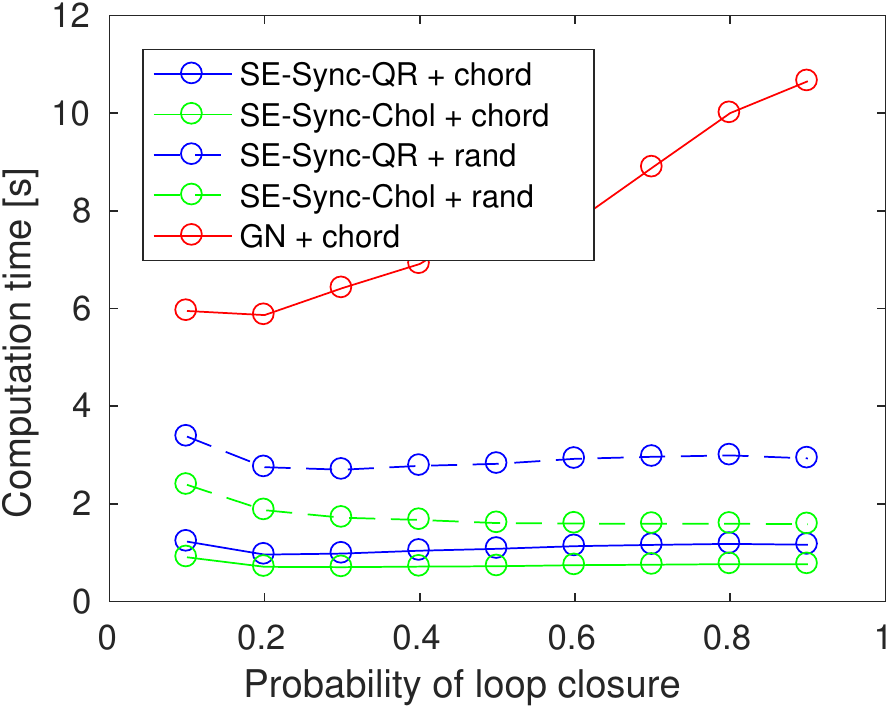} \quad
\includegraphics[width=\FigWidth]{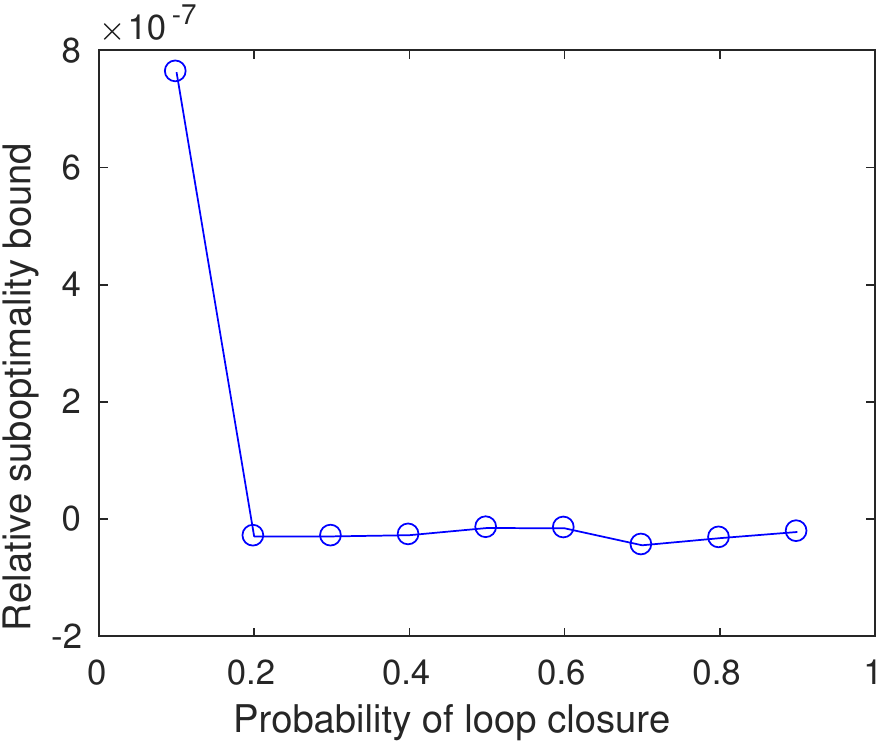}
\label{varying_loop_closure_prob}
} \\

\subfigure[Varying $s$ while holding $\kappa = 16.67$,  $\tau = 75$, $p_{LC} = .1$.]{
\includegraphics[width=\FigWidth]{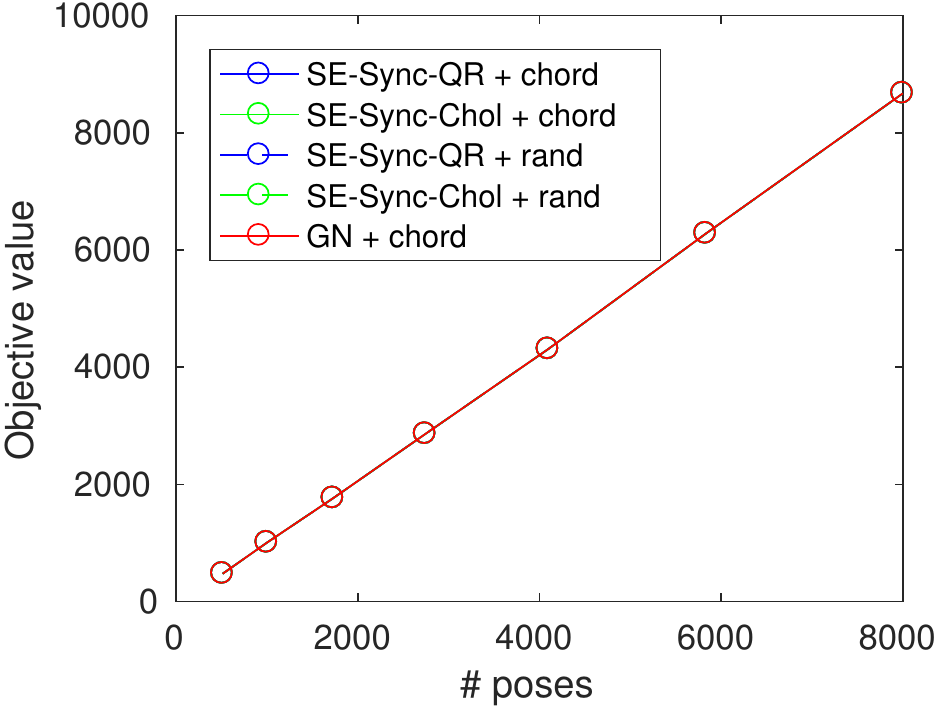} \quad 
\includegraphics[width=\FigWidth]{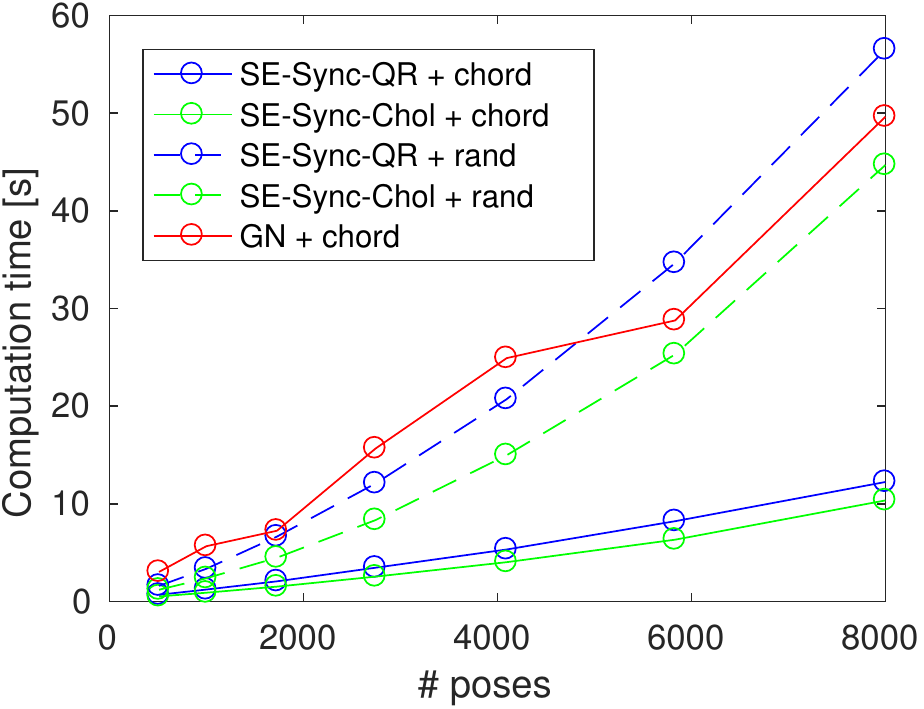} \quad
\includegraphics[width=\FigWidth]{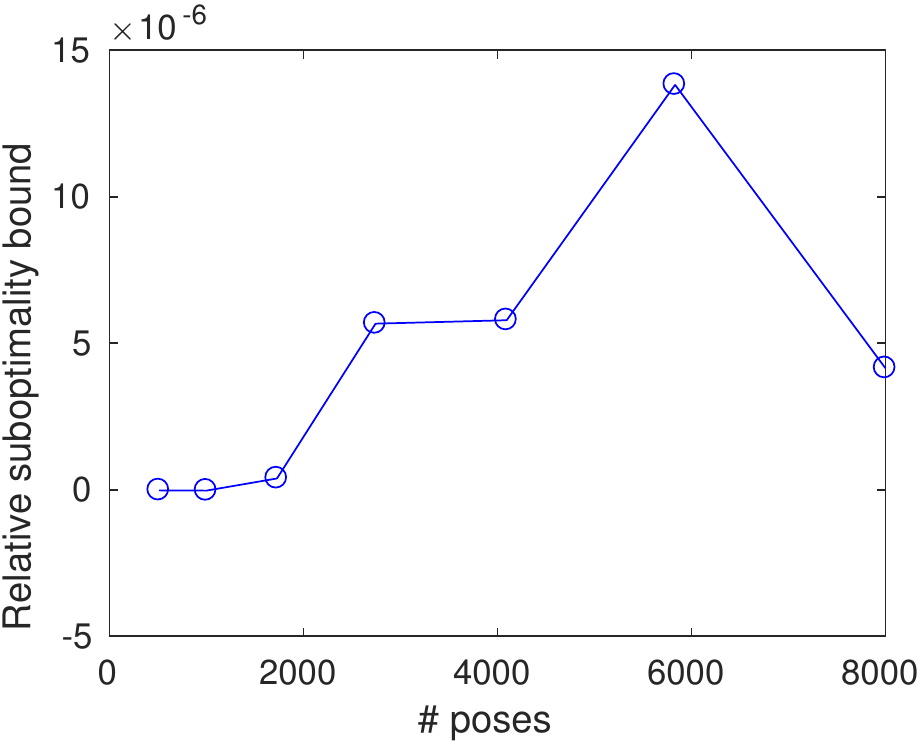}
\label{varying_number_nodes_fig}
}
\caption{Results for the \cube experiments.  These figures plot the mean of the objective values (left column) and elapsed computation times (center column) attained by the Gauss-Newton and \syncSpace algorithms, as well as the upper bound $(F(\nQ \RotEst\transpose \RotEst) - \SDPval) / \SDPval$ for the relative suboptimality of the solution recovered by \syncSpace (right column), for 50 realizations of the \cube dataset as functions of the measurement precisions $\kappa$ (first row) and $\tau$ (second row), the loop closure probability $\pLC$ (third row), and the problem size (fourth row).}

\label{Cube_experiments_figure}

\end{figure}

Consistent with our previous findings \cite{Carlone2015Lagrangian}, these results suggest that the exactness of the semidefinite relaxation \eqref{dual_semidefinite_relaxation_for_SE3_synchronization_optimization} depends primarily upon the level of noise corrupting the rotational observations $\nrot_{ij}$ in \eqref{probabilistic_generative_model_for_noisy_observations}.  Furthermore, we see from Fig.\ \ref{varying_rotational_noise_level} that in these experiments,  exactness obtains for rotational noise with a root-mean-square angular error up to about $15$ degrees; this is roughly an order of magnitude greater than the level of noise affecting sensors typically deployed in robotics and computer vision applications, which provides strong empirical evidence that \syncSpace is capable of recovering certifiably globally optimal solutions of pose-graph SLAM problems under ``reasonable'' operating conditions.\footnote{Interestingly, the Gauss-Newton method with chordal initialization (\GNchord) appears to obtain remarkably good solutions in the high-rotational-noise regime, although here we cannot certify their correctness, as our solution verification methods \cite{Carlone2015Lagrangian} depend upon the same semidefinite relaxation \eqref{dual_semidefinite_relaxation_for_SE3_synchronization_optimization} as does \syncSpace. }


In addition to its ability  to recover certifiably optimal solutions, these experiments also show that \syncSpace is considerably faster than the Gauss-Newton approach that underpins current state-of-the-art pose-graph SLAM algorithms \cite{Kuemmerle2011g20,Kaess2012iSAM2ijrr,Grisetti2010Tutorial,Rosen2014RISE}.  Indeed, examining the plots in the center column of Fig.\ \ref{Cube_experiments_figure} reveals that in almost all of our experimental scenarios, \syncSpace is able to both compute \emph{and} certify a globally optimal solution in less time than the Gauss-Newton method requires to simply \emph{obtain} an estimate when supplied with a high-quality chordal initialization.  Moreover, restricting attention only to those scenarios in which \syncSpace also employs the chordal initialization (in order to enable a fair comparison with the Gauss-Newton method), we see that both \syncChol and \syncQR are consistently many-fold faster than Gauss-Newton, and that this speed differential increases superlinearly with the problem size (Fig.\ \ref{varying_number_nodes_fig}).

Given that \syncSpace performs direct \emph{global} optimization, whereas Gauss-Newton is a purely \emph{local} search technique, it may be somewhat counterintuitive that the former is consistently so much faster than the latter.  However, we can attribute this improved performance to two critical design decisions that distinguish these techniques.  First, \syncSpace makes use of the \emph{exact} Hessian (cf.\ Section \ref{Riemannian_optimization_method_subsection}), whereas Gauss-Newton, by construction, implicitly uses an approximation whose quality degrades in the presence of either large measurement residuals \emph{or} strong nonlinearities in the underlying objective function (cf.\ e.g.\ \cite[Sec.\ III-B]{Rosen2014RISE}), both of which are typical features of SLAM problems.  This implies that the quadratic model function (cf.\ e.g.\ \cite[Chp.\ 2]{Nocedal2006Numerical}) that \syncSpace employs better captures the shape of the underlying objective than the one used by Gauss-Newton, so that the former is capable of computing higher-quality update steps.  Second, and more significantly, \syncSpace makes use of a truncated-Newton method (RTR), which avoids the need to explicitly form or factor the (Riemannian) Hessian $\Hess F(Y)$; instead, at each iteration this approach \emph{approximately} solves the Newton equations using a truncated conjugate gradient algorithm \cite[Chp.\ 10]{Golub1996Matrix},\footnote{Hence the nomenclature.} and only computes this approximate solution as accurately as is necessary to ensure adequate progress towards a critical point with each applied update.  The result is that RTR requires only a few sparse matrix-vector multiplications in order to obtain each update step; moreover, equations \eqref{Q_quadratic_form_definition}, \eqref{nQtran_alternative_form}, and \eqref{computing_orthogonal_projection_operator_eq} show that the constituent matrices involved in these products are \emph{constant}, and can therefore be precomputed and cached at the beginning of the algorithm.  In contrast, the Gauss-Newton method must recompute and refactor the Jacobian at \emph{each} iteration, which is considerably more expensive.


\subsection{SLAM benchmark datasets}
\label{SLAM_benchmarks_subsection}

The experiments in the previous section made extensive use of simulated \cube datasets to investigate the effects of  measurement noise, measurement density, and problem size on \syncSpace's performance.  In this next set of experiments, we evaluate \syncSpace on a suite of larger and more heterogeneous 3D pose-graph SLAM benchmarks that better represent the distribution of problems encountered in real-world SLAM applications.   The first three of these (the \sphere, \torus, and \grid datasets) are also synthetic (although generated using an observation model different from \eqref{probabilistic_generative_model_for_noisy_observations}), while the latter three (the \garage, \cubicle, and \rim datasets) are large-scale real-world examples (Fig.\ \ref{SLAM_datasets_fig}).  For the purpose of these experiments, we restrict attention to the version of \syncSpace employing QR factorization and the chordal initialization (\syncQRchord), and once again compare it with the Gauss-Newton method (\GNchord).  Results for these experiments are shown in Table \ref{3D_SLAM_benchmarks_table}.

\def \FigWidth2{.305\textwidth}
\begin{figure}
 \center 
 \subfigure[sphere]{\includegraphics[width = \FigWidth2]{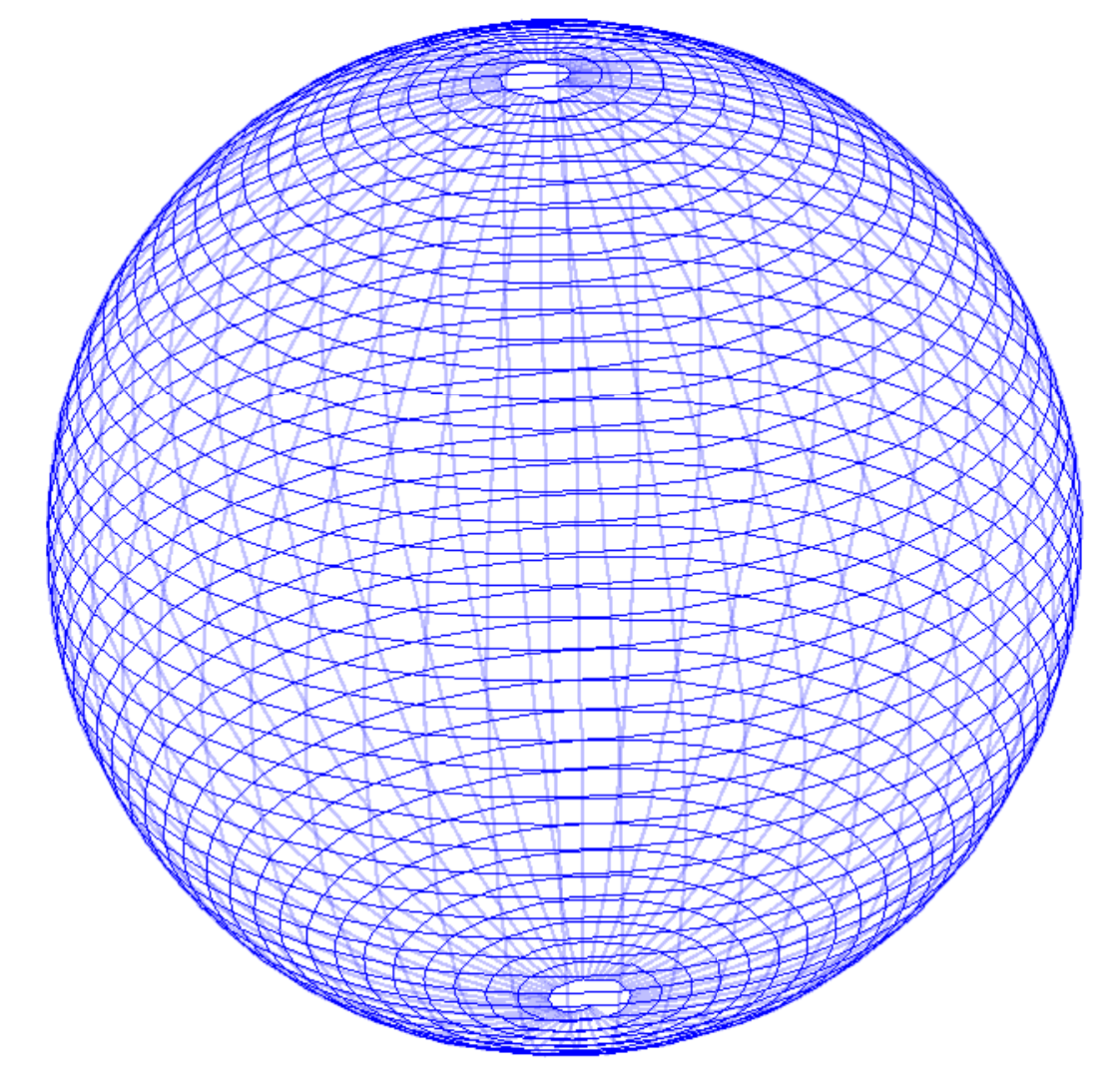}\label{sphere_fig}} \quad 
 \subfigure[torus]{\includegraphics[width = \FigWidth2]{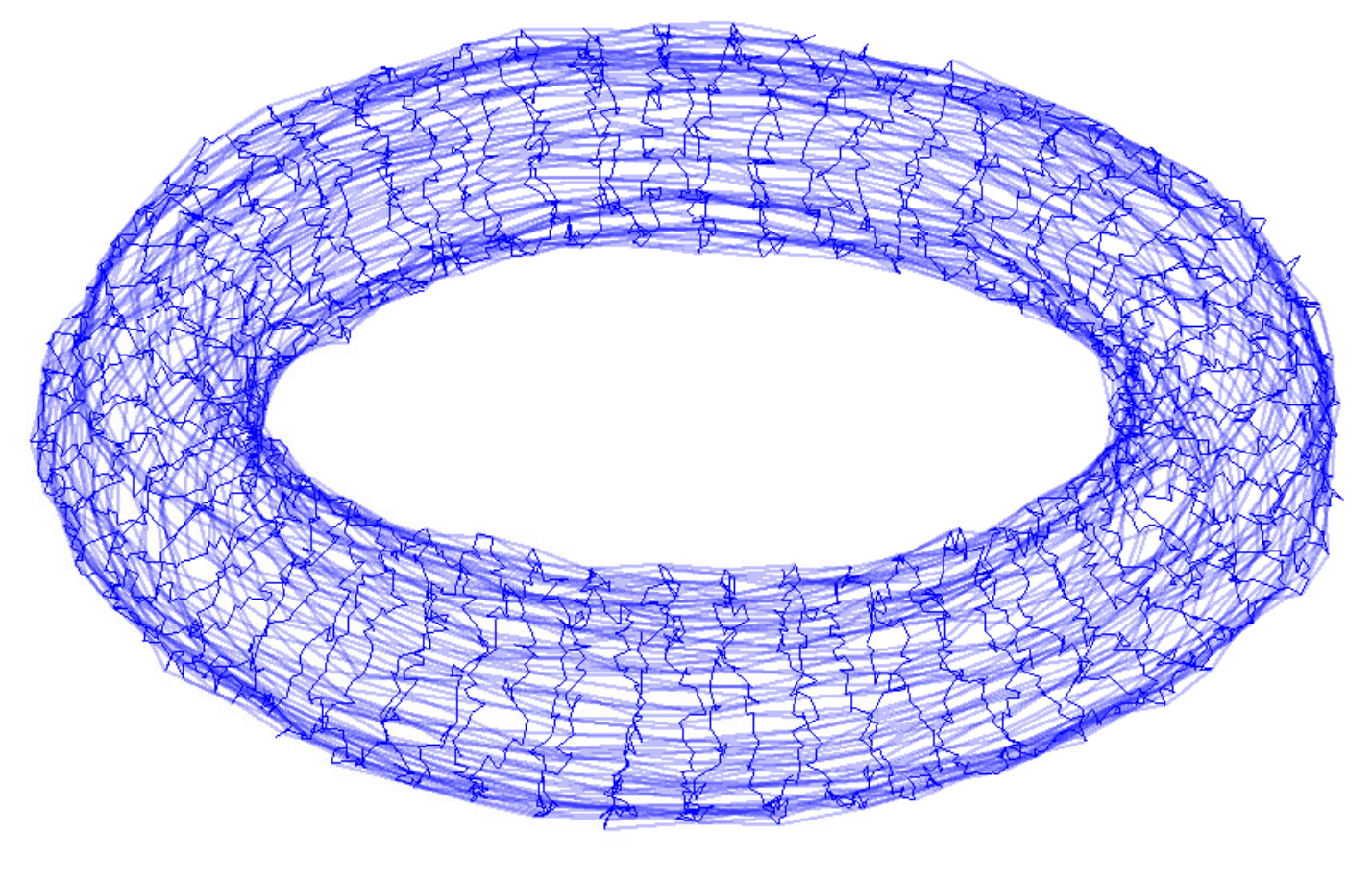}\label{torus_fig}} \quad 
 \subfigure[grid]{\includegraphics[width = \FigWidth2]{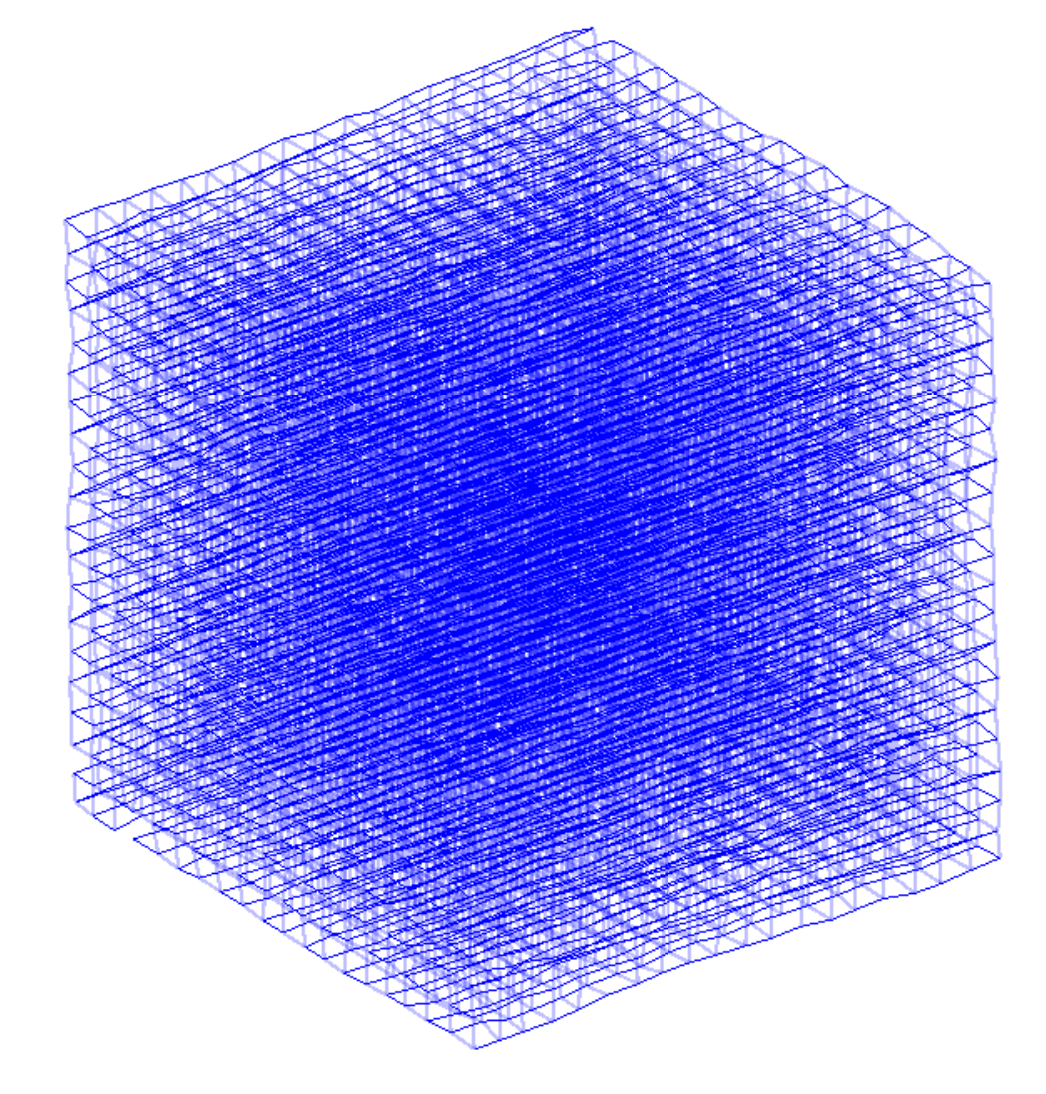}\label{grid_fig}} \\
   
  \subfigure[garage]{\includegraphics[width = \FigWidth2]{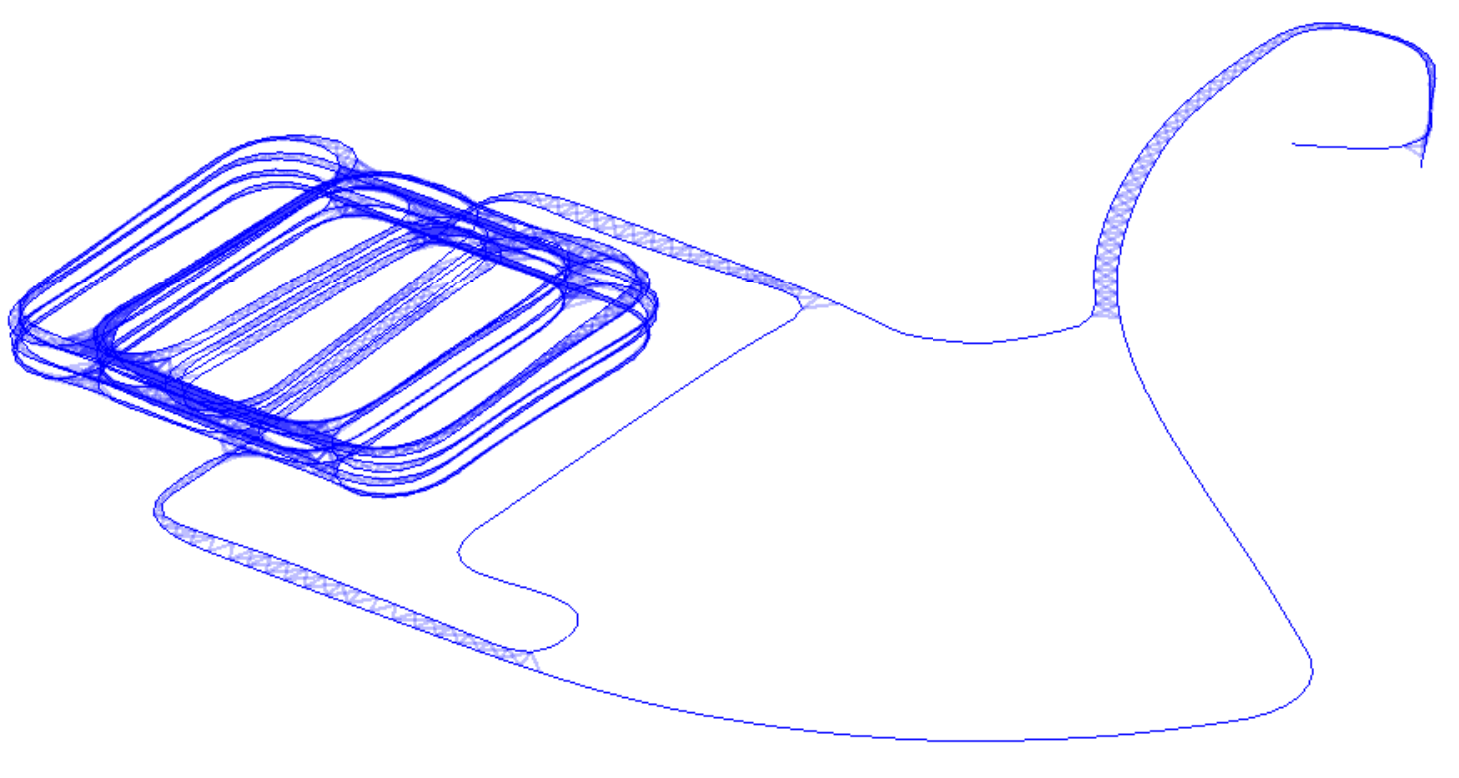}\label{garage_fig}} \quad 
  \subfigure[cubicle]{\includegraphics[width = \FigWidth2]{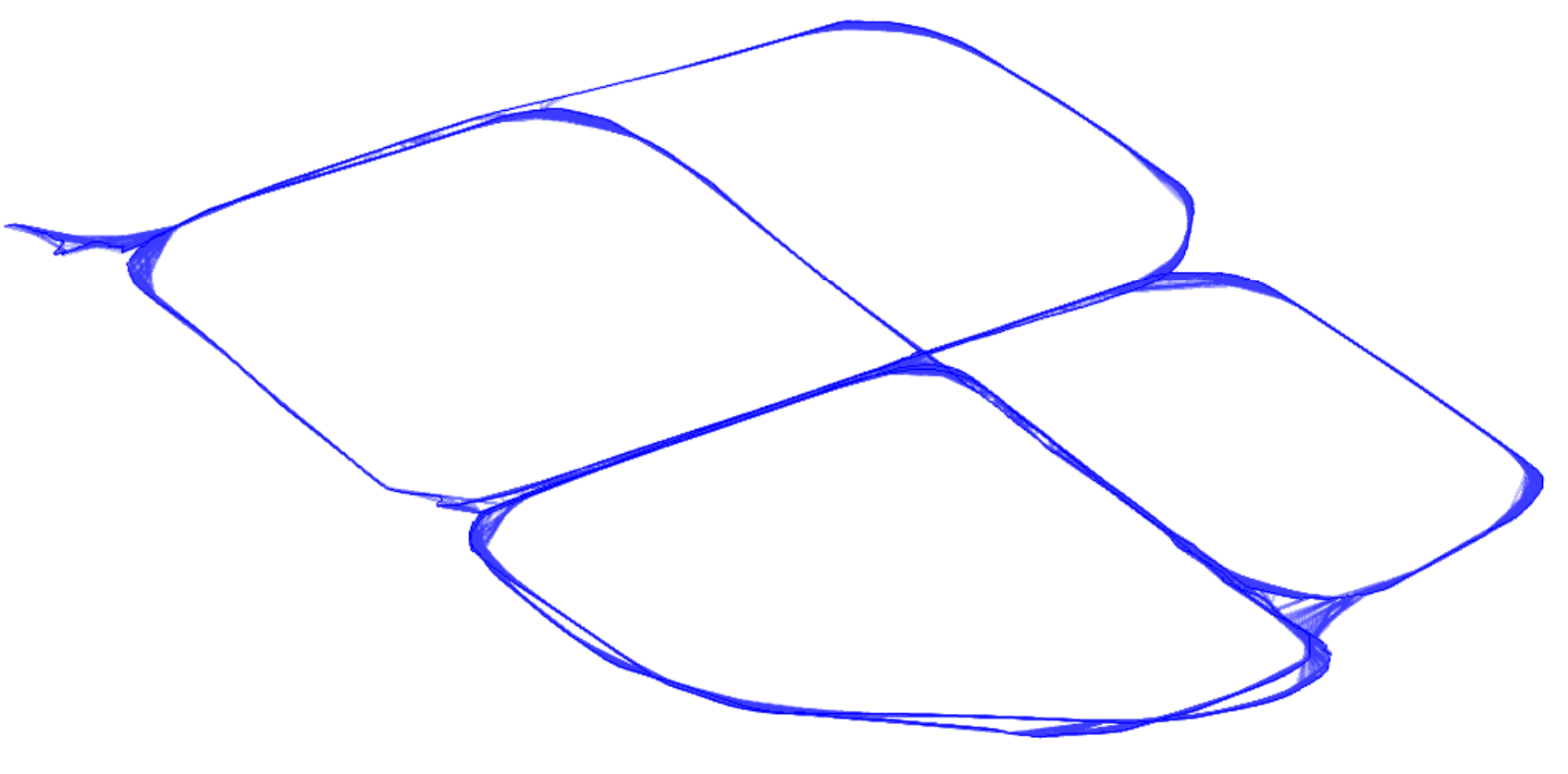}\label{cubicle_fig}} \quad
 \subfigure[rim]{\includegraphics[width = \FigWidth2]{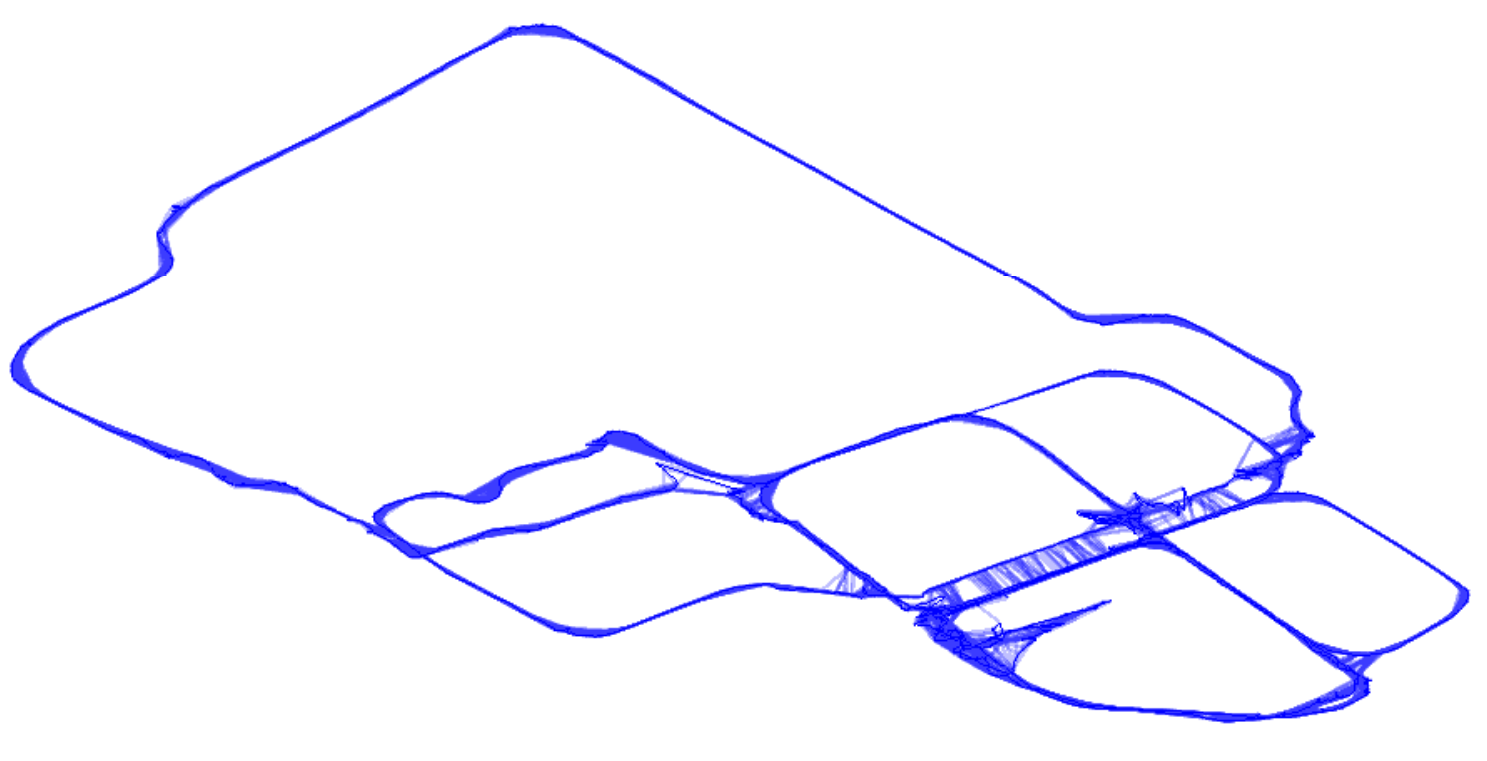}\label{rim_fig}}
 \caption{Globally optimal solutions for the SLAM benchmark datasets listed in Table \ref{3D_SLAM_benchmarks_table}.}
 \label{SLAM_datasets_fig}
\end{figure}

\begin{table}
\footnotesize
\centerline{
\begin{tabular}{|c || c | c || c|c || c|c | c ||}
\hline
 & & & \multicolumn{2}{|c||}{Gauss-Newton} & \multicolumn{3}{|c||}{\syncSpace}   \\ 
& \# Poses & \# Measurements &  Objective\ value & Time [s] & Objective\ value & Time [s] & Max.\ suboptimality  \\
\hline  
sphere & $2500$ & $4949$ & $1.687\E{3}$ & $14.98$ & $1.687\E{3}$ & $2.81$ & $1.410\E{-11}$ \\
\hline
torus & $5000$ & $9048$ & $2.423\E{4}$ & $31.94$ & $2.423\E{4}$ & $5.67$ & $7.276\E{-12}$ \\
\hline
grid  & $8000$ & $22236$ & $8.432\E{4}$ & $130.35$ & $8.432\E{4}$ & $22.37$ & $4.366\E{-11}$ \\
\hline
garage & $1661$ & $6275$ & $1.263\E{0}$ & $17.81$ & $1.263\E{0}$ & $5.33$ & $2.097\E{-11}$ \\
\hline 
cubicle & $5750$ & $16869$ & $7.171\E{2}$ & $136.86$ & $7.171\E{2}$ & $13.08$ & $1.603\E{-11}$ \\
\hline
rim & 10195 & 29743 & $5.461\E{3}$ & $575.42$ & $5.461\E{3}$ & $36.66$ & $5.639\E{-11}$ \\
\hline
\end{tabular}
}
\caption{Results for the SLAM benchmark datasets.} 
\label{3D_SLAM_benchmarks_table}
\end{table}

On each of these examples, both \syncSpace and the Gauss-Newton method converged to the same (globally optimal) solution; however, consistent with our findings in Section \ref{Cube_experiments_subsection}, \syncSpace did so considerably faster, outperforming Gauss-Newton in terms of computational speed by a factor of between $3.3$ (on \garage, the smallest dataset) and $15.7$ (on \rim, the largest dataset).  These results further support our claim that \syncSpace provides an effective  means of recovering globally optimal pose-graph SLAM solutions under real-world operating conditions, and does so considerably faster than current state-of-the-art Gauss-Newton-based alternatives.

\section{Conclusion}

In this paper we presented \syncSpace, a certifiably correct algorithm for synchronization over the special Euclidean group.  Our algorithm is based upon the development of a semidefinite relaxation of the special Euclidean synchronization problem whose minimizer provides an \emph{exact}, \emph{globally optimal} solution so long as the magnitude of the noise corrupting the available measurements falls below a certain critical threshold, and employs a specialized, structure-exploiting Riemannian optimization method to solve large-scale instances of this semidefinite relaxation efficiently.  Experimental evaluation on a variety of simulated and real-world datasets drawn from the motivating application of pose-graph SLAM  shows that \syncSpace is capable of recovering globally optimal solutions when the available measurements are corrupted by noise up to an order of magnitude greater than that typically encountered in robotics and computer vision applications, and does so more than an order of magnitude faster than the Gauss-Newton-based approach that forms the basis of current state-of-the-art techniques.

In addition to enabling the computation of certifiably correct solutions under \emph{nominal} operating conditions, we believe that \syncSpace may also be extended to support provably robust and statistically efficient estimation in the case that some fraction of the available measurements $\npose_{ij}$ in  \eqref{probabilistic_generative_model_for_noisy_observations} are contaminated by outliers.   Our basis for this belief is the observation that Proposition \ref{A_sufficient_condition_for_exact_recovery_prop} together with the experimental results of Section \ref{Experimental_results_section} imply that, under typical operating conditions, the maximum-likelihood estimation Problem~\ref{SE3_synchronization_MLE_NLS_problem} is equivalent to a \emph{low-rank} convex program with a \emph{linear} observation model and a \emph{compact} feasible set (Problem \ref{dual_semidefinite_relaxation_for_SE3_synchronization_problem}); in contrast to general nonlinear estimation, this class of problems enjoys a beautiful geometric structure \cite{Chandrasekaran2012Convex} that has already been shown to enable remarkably robust recovery, even in the presence of gross contamination \cite{Candes2011Robust,Zhou2010Stable}.  We intend to investigate this possibility in future research.

Finally, while the specific relaxation \eqref{dual_semidefinite_relaxation_for_SE3_synchronization_optimization} underpinning \syncSpace was obtained by exploiting the well-known Lagrangian duality between quadratically-constrained quadratic programs and semidefinite programs \cite{Luo2010Semidefinite}, recent work in real algebraic geometry has revealed the remarkable fact that the much broader class of (\emph{rational}) \emph{polynomial optimization problems}\footnote{These are nonlinear programs in which the feasible set is a \emph{semialgebraic set} (the set of real solutions of a system of polynomial (in)equalities) and the objective is a (rational) polynomial function.} also admits a hierarchy of semidefinite relaxations that is likewise frequently exact (or can be made arbitrarily sharp) when applied to real-world problem instances  \cite{Lasserre2001Global,Parrilo2003Semidefinite,Lasserre2006Convergent,Laurent2009SOS,Bugarin2016Minimizing,Nie2006Minimizing,Nie2008Sparse,Waki2006SOS}.  Given the broad generality of this latter class of models, \syncSpace's demonstration that it is indeed possible to solve surprisingly large (but suitably structured) semidefinite programs with temporal and computational resources typical of those available in embedded systems suggests the further possibility of designing a broad class of practically-effective certifiably correct algorithms for robust machine perception based upon structured semidefinite programming relaxations.  It is our hope that this report will encourage further investigation of this exciting possibility within the machine perception community.

\appendix\section{The isotropic Langevin distribution}
\label{Isotropic_Langevin_distribution_appendix}

In this appendix we provide a brief overview of the isotropic Langevin distribution on $\SO(d)$, with a particular emphasis on the important special cases $d = 2,3$.

The \emph{isotropic Langevin distribution} on $\SO(d)$ with mode $M \in \SO(d)$ and concentration parameter $\kappa \ge 0$, denoted $\Langevin(M, \kappa)$, is the distribution determined by the following probability density function (with respect to the Haar measure on $\SO(d)$ \cite{Warner1983Manifolds}):
\begin{equation}
 \label{isotropic_Langevin_density_definition}
 p(X; M, \kappa) = \frac{1}{c_d(\kappa)} \exp\left(\kappa \tr(M\transpose X) \right),
\end{equation}
where $c_d(\kappa)$ is a normalization constant \cite{Chiuso2008Wide,Boumal2014Cramer}.  Note that the product $M\transpose X = M\inv X \in \SO(d)$ appearing in \eqref{isotropic_Langevin_density_definition} is the relative rotation sending $M$ to $X$.  

In general, given any $Z \in \SO(d)$, there exists some $U \in \Orthogonal(d)$ such that:

\begin{equation}
\label{canonical_form_for_relative_rotation_MTR}
U\transpose Z U = \begin{cases}
\begin{pmatrix}
R(\theta_1) \\
 & \ddots & \\
 & & R(\theta_k)
\end{pmatrix}, & d \bmod 2 = 0, \\
\begin{pmatrix}
R(\theta_1) \\
& \ddots & \\
& & R(\theta_k) \\
& & & 1
\end{pmatrix}, & d \bmod 2 = 1,
\end{cases}
\end{equation}
where $k \triangleq \lfloor d/2 \rfloor$, $\theta_i \in [-\pi, \pi)$ for all $i \in [k]$, and 
\begin{equation}
\label{2x2_rotation_matrix_definition}
 R(\theta) \triangleq 
 \begin{pmatrix}
\cos(\theta) & \sin(\theta) \\
-\sin(\theta) & \cos(\theta)
 \end{pmatrix} \in \SO(2);
\end{equation}
this \emph{canonical decomposition} corresponds to the fact that every rotation $Z \in \SO(d)$ acts on $\R^d$ as a set of elementary rotations \eqref{2x2_rotation_matrix_definition} of mutually-orthogonal $2$-dimensional subspaces. Since $\tr(\cdot)$ is a class function, it follows from \eqref{canonical_form_for_relative_rotation_MTR} and \eqref{2x2_rotation_matrix_definition} that the trace appearing in the isotropic Langevin density \eqref{isotropic_Langevin_density_definition} can be equivalently expressed as:
\begin{equation}
\label{trace_of_relative_rotations}
 \tr(M\transpose X) =  d \bmod 2 + 2\sum_{i = 1}^{k} \cos(\theta_i),
\end{equation}
where the $\theta_i$'s are the rotation angles for each of $M\transpose X$'s elementary rotations.  Note, however, that while the right-hand side of \eqref{trace_of_relative_rotations} depends upon the \emph{magnitudes} of these elementary rotations, it does \emph{not} depend upon the \emph{orientation} of their corresponding subspaces; this is the sense in which the Langevin density \eqref{isotropic_Langevin_density_definition} is ``isotropic''.

For the special cases $d = 2,3$, the normalization constant $c_d(\kappa)$ appearing in \eqref{isotropic_Langevin_density_definition} admits the following simple closed forms (cf.\ equations (4.6) and (4.7) of \cite{Boumal2014Cramer}):
\begin{subequations}
 \label{isotropic_Langevin_density_normalization_constants}
 \begin{align}
  c_2(\kappa) &= I_0(2\kappa), \label{isotropic_Langevin_density_normalization_constant_in_2d} \\
   c_3(\kappa) &= \exp(\kappa) \left(I_0(2\kappa) - I_1(2\kappa) \right), \label{isotropic_Langevin_density_normalization_constant_in_3d}
   \end{align}
\end{subequations}
where $I_n(z)$ denotes the \emph{modified Bessel function of the first kind} \cite[eq.\ (10.32.3)]{NIST2016DLMF}:
\begin{equation}
 \label{modified_Bessel_function_of_the_first_kind_definition}
 I_{n}(z) \triangleq \frac{1}{\pi} \int_0^{\pi} e^{z \cos(\theta)} \cos(n\theta) \: d\theta, \quad n \in \N.
\end{equation}
Furthermore, in these dimensions every rotation acts on a single $2$-dimensional subspace, and hence is described by a single rotation angle.  Letting $\theta \triangleq \angle(M\transpose X)$, it follows from \eqref{isotropic_Langevin_density_definition} and \eqref{trace_of_relative_rotations} that the distribution over $\theta$ induced by $\Langevin(M, \kappa)$ has a density satisfying:
\begin{equation}
\label{isotropic_Langevin_induced_density_over_rotation_angles_in_2_and_3d}
 p(\theta ; \kappa) \propto \exp(2\kappa \cos(\theta)), \quad \theta \in [-\pi, \pi).
\end{equation}

\begin{figure}
\center
\includegraphics[scale = .6]{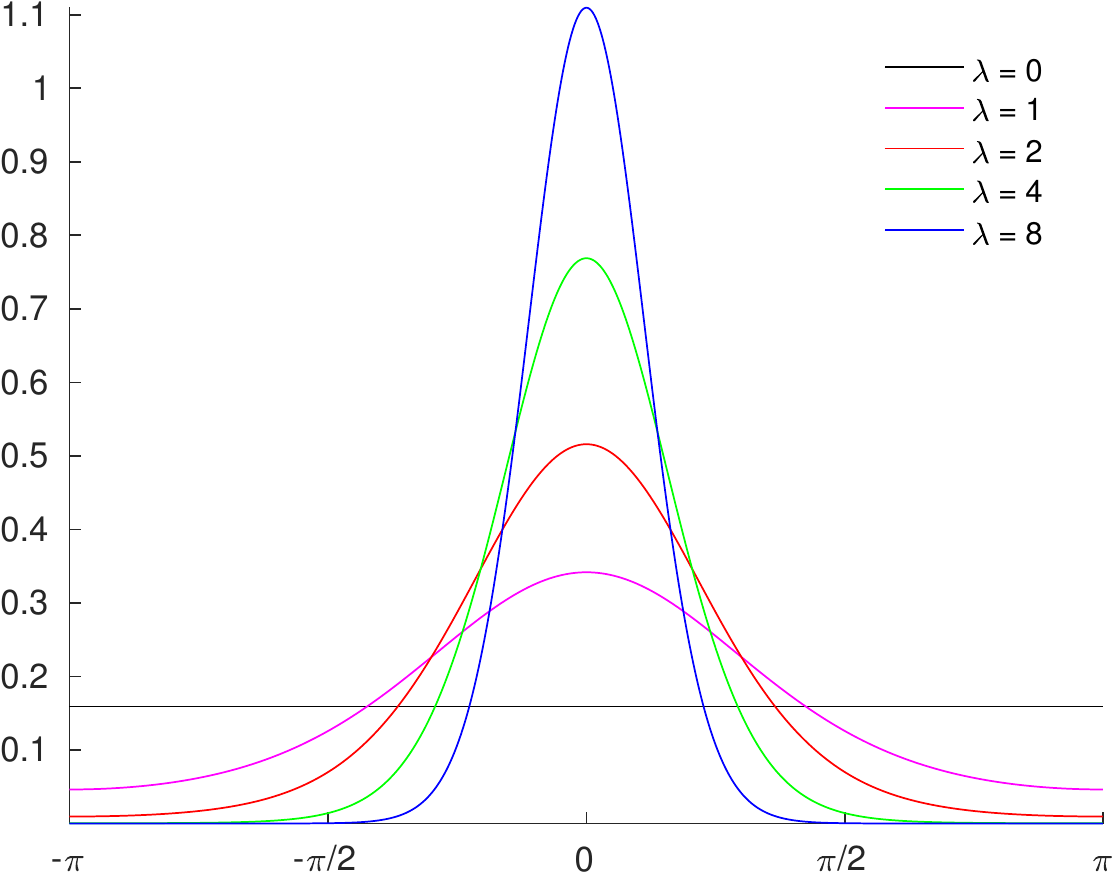}
\caption{The von Mises distribution.  This plot shows the densities \eqref{von_Mises_density} of several von Mises distributions on the circle (here identified with $[-\pi, \pi)$)  with common mode $\mu = 0$ and varying concentrations $\lambda$.  For $\lambda = 0$ the von Mises distribution reduces to the uniform distribution, and its density asymptotically converges to the Gaussian $\Gaussian(0, \lambda^{-1})$ as $\lambda \to \infty$.}
\label{von_Mises_densities_fig}
\end{figure}
\vspace{-.5\baselineskip}

Now recall that the \emph{von Mises distribution} on the circle (here identified with $[-\pi, \pi)$) with mode $\mu \in [-\pi, \pi)$ and concentration parameter $\lambda \ge 0$, denoted $\vonMises(\mu, \lambda)$, is the distribution determined by the following probability density function:
\begin{equation}
 \label{von_Mises_density}
 p(\theta ; \mu, \lambda) = \frac{\exp(\lambda \cos(\theta - \mu))}{2\pi I_0(\lambda)}.
\end{equation}
This distribution plays a role in circular statistics analogous to that of the Gaussian distribution on Euclidean space \cite{Fisher1953Dispersion}. For $\lambda = 0$ it reduces to the uniform distribution on $S^1$, and becomes increasingly concentrated at the mode $\mu$ as $\lambda \to \infty$ (cf.\ Fig.\ \ref{von_Mises_densities_fig}).  In fact, considering the asymptotic expansion \cite[10.40.1]{NIST2016DLMF}:
\begin{equation}
 I_0(\lambda) \sim \frac{\exp(\lambda)}{\sqrt{2\pi \lambda}}, \quad \lambda \to \infty,
\end{equation}
and the second-order Taylor series expansion of the cosine function:
\begin{equation}
\label{second_order_Taylor_expansion_for_cosine_of_von_Mises_distribution}
 \cos(\theta - \mu) \sim 1 - \frac{1}{2}(\theta - \mu)^2, \quad \theta \to \mu,
\end{equation}
it follows from \eqref{von_Mises_density}--\eqref{second_order_Taylor_expansion_for_cosine_of_von_Mises_distribution} that 
\begin{equation}
\label{von_Mises_Gaussian_approximation}
 p(\theta ; \mu, \lambda) \sim \frac{1} {\sqrt{2\pi \lambda\inv}} \exp \left(-\frac{(\theta - \mu)^2 }{2 \lambda\inv}\right), \quad \lambda \to \infty,
\end{equation}
which we recognize as the density for the Gaussian distribution $\Gaussian(\mu, \lambda\inv)$.

These observations lead to a particularly convenient generative description of the isotropic Langevin distribution in dimensions $2$ and $3$: namely, a realization of $X \sim \Langevin(M, \kappa)$ is obtained by perturbing the mode $M$ by a rotation through an angle $\theta \sim \vonMises(0, 2\kappa)$ about a uniformly-distributed axis (Algorithm \ref{isotropic_Langevin_sampling_algorithm}).  Furthermore, it  follows from \eqref{von_Mises_density} that the standard deviation of the angle $\theta$ of the relative rotation between $M$ and $X$ is given by:

\begin{algorithm}[t]
\caption{A sampler for the isotropic Langevin distribution on $\SO(d)$ in dimensions $2$ and $3$}
\label{isotropic_Langevin_sampling_algorithm}
\begin{algorithmic}[1]
\Input Mode $M \in \SO(d)$ with $d \in \lbrace 2, 3 \rbrace$, concentration parameter $\kappa \ge 0$.
\Output A realization of $X \sim \Langevin(M, \kappa)$.
\Function{SampleIsotropicLangevin}{$M, \kappa$}
\State Sample a rotation angle $\theta \sim \vonMises(0, 2\kappa)$. \Comment{See e.g.\ \cite{Best1979Efficient}}
\If{$d = 2$}
\State Set perturbation matrix $P \leftarrow R(\theta)$ as in \eqref{2x2_rotation_matrix_definition}.
\Else \Comment{$d = 3$}
\State Sample an axis of rotation $\hat{v} \sim \Uniform(S^2)$.
\State Set perturbation matrix $P \leftarrow \exp(\theta [\hat{v}]_{\times})$.
\EndIf 
\State \Return $M P$
\EndFunction
\end{algorithmic}
\end{algorithm}

\begin{equation}
\label{standard_deviation_of_rotation_angle}
 \SD[\theta] = \sqrt{\int_{-\pi}^{\pi} \theta^2 \frac{\exp(2\kappa \cos(\theta))}{2\pi I_0(2\kappa)} \: d\theta },
\end{equation}
which provides a convenient and intuitive measure of the dispersion of $\Langevin(M,\kappa)$.  The right-hand side of \eqref{standard_deviation_of_rotation_angle} can be efficiently evaluated to high precision via numerical quadrature for values of $\kappa$ less than $150$ (corresponding to an angular standard deviation of $3.31$ degrees).  For $\kappa > 150$, one can alternatively use the estimate of the angular standard deviation coming from the asyptotic Gaussian approximation \eqref{von_Mises_Gaussian_approximation} for the von Mises distribution $\vonMises(0, 2\kappa)$:
\begin{equation}
\label{Gaussian_approximation_for_angular_standard_deviation}
 \SD[\theta] \sim \frac{1}{\sqrt{2\kappa}}, \quad \kappa \to \infty.
\end{equation}
This approximation is accurate to within $1\%$ for $\kappa > 12.87$ (corresponding to an angular standard deviation of $11.41$ degrees) and to within $.1\%$ for $\kappa > 125.3$ (corresponding to an angular standard deviation of $3.62$ degrees).

\section{Reformulating the estimation problem}
\label{Reformulating_the_MLE_section}

\subsection{Deriving Problem \ref{SE3_synchronization_MLE_QP_form_problem} from Problem \ref{SE3_synchronization_MLE_NLS_problem}}
\label{deriving_the_quadratic_form_of_the_MLE}

In this section we show how to derive Problem \ref{SE3_synchronization_MLE_QP_form_problem} from Problem \ref{SE3_synchronization_MLE_NLS_problem}.  Using the fact that $\vect(v) = v$ for any $v \in \R^{d \times 1}$ and the fact that $\vect(XY) = (Y^T \otimes I_k) \vect(X)$ for all $X \in \R^{m \times k}$ and $Y \in \R^{k \times l}$ \cite[Lemma 4.3.1]{Horn1991Topics}, we can write each summand of \eqref{SE3_synchronization_MLE_NLS_optimization} in a vectorized form as:
\begin{equation}
\label{vectorization_of_translational_cost_function}
\begin{split}
\tau_{ij} \left \lVert \tran_j - \tran_i - \rot_i \ntran_{ij} \right \Vert_2^2 &= \tau_{ij}\left \lVert \tran_j - \tran_i - \left(\ntran_{ij}\transpose \otimes I_d \right) \vect(\rot_i)\right \rVert_2^2\\
&= \left \lVert (\sqrt{\tau_{ij}} I_d) \tran_j - (\sqrt{\tau_{ij}} I_d) \tran_i - \sqrt{\tau_{ij}}\left(\ntran_{ij}\transpose \otimes I_d \right) \vect(\rot_i) \right \rVert_2^2
\end{split}
\end{equation}
and 
\begin{equation}
\label{vectorization_of_rotational_cost_function}
 \begin{split}
 \kappa_{ij}  \left \lVert \rot_j - \rot_i \nrot_{ij}  \right \rVert_F^2 &= \kappa_{ij} \left \lVert \vect(\rot_j) - \vect(\rot_i \nrot_{ij}) \right \rVert_2^2 \\
&= \left \lVert \sqrt{\kappa_{ij}}(I_d \otimes I_d) \vect(\rot_j) - \sqrt{\kappa_{ij}}\left(\nrot_{ij}\transpose \otimes I_d\right) \vect(\rot_i) \right \rVert_2^2.
 \end{split}
\end{equation}
Letting $\tran\in \R^{dn}$ and $\rot \in \R^{d \times dn}$ denote the concatenations of the $t_i$ and $R_i$ as defined in \eqref{block_matrix_state_definitions}, \eqref{vectorization_of_translational_cost_function} and \eqref{vectorization_of_rotational_cost_function} imply that \eqref{SE3_synchronization_MLE_NLS_optimization} can be rewritten in a vectorized form as:
\begin{equation}
\label{MLE_problem_in_vectorized_squared_L2_norm_format}
 \MLEval = \min_{\stackrel{\tran \in \R^{dn}}{\rot \in \SO(d)^n}} \left \lVert
B
 \begin{pmatrix}
\tran \\
\vect(R)
 \end{pmatrix}
\right \rVert_2^2
\end{equation}
where the coefficient matrix $B \in \R^{(d+d^2)m \times (d+d^2)n}$ (with $m = \lvert \dEdges \rvert$) has the block decomposition:
\begin{equation}
 B \triangleq 
 \begin{pmatrix}
B_1 & B_2 \\
0 & B_3 
 \end{pmatrix}
\end{equation}
and $B_1 \in \R^{dm \times dn}$, $B_2 \in \R^{dm \times d^2n}$, and $B_3 \in \R^{d^2m \times d^2n}$ are block-structured matrices whose block rows and columns are indexed by the elements of $\dEdges$ and $\Nodes$, respectively, and whose $(e,k)$-block elements are given by:
\begin{subequations}
\label{B_matrices_block_formulae}
\begin{equation}
\label{B1_block_formula}
 (B_1)_{ek} = 
 \begin{cases}
-\sqrt{\tau_{kj}}I_d, & e = (k,j) \in \dEdges, \\
\sqrt{\tau_{ik}} I_d, & e = (i,k) \in \dEdges,\\
0_{d \times d}, & \textnormal{otherwise},
 \end{cases}
 \end{equation}
 \begin{equation}
 \label{B2_block_formula}
 (B_2)_{ek} = 
 \begin{cases}
-\sqrt{\tau_{kj}} \left(\ntran_{kj}\transpose \otimes I_d\right), & e = (k,j) \in \dEdges, \\
0_{d \times d^2}, & \textnormal{otherwise},
 \end{cases}
 \end{equation}
 \begin{equation}
 \label{B3_block_formula}
(B_3)_{ek} = 
\begin{cases}
- \sqrt{\kappa_{kj}} \left(\nrot_{kj}\transpose \otimes I_d\right), & e = (k,j) \in \dEdges, \\
\sqrt{\kappa_{ik}} (I_d \otimes I_d), & e = (i,k) \in \dEdges, \\
0_{d \times d}, & \textnormal{otherwise}.
\end{cases}
\end{equation}
\end{subequations}
We can further expand the squared $\ell_2$-norm objective in \eqref{MLE_problem_in_vectorized_squared_L2_norm_format} to obtain:
\begin{equation}
 \MLEval = \min_{\stackrel{\tran \in \R^{dn}}{\rot \in \SO(d)^n}}
 \begin{pmatrix}
\tran \\
\vect(R)
 \end{pmatrix}\transpose B\transpose B 
 \begin{pmatrix}
\tran \\
\vect(R)
 \end{pmatrix}
\end{equation}
with 
\begin{equation}
 \label{block_decomposition_of_BTB}
B\transpose B= 
\begin{pmatrix}
B_1\transpose B_1 & B_1\transpose B_2 \\
B_2\transpose B_1 & B_2 \transpose B_2 + B_3\transpose B_3 
\end{pmatrix}.
\end{equation}
We now derive explicit forms for each of the four distinct products on the right-hand side of \eqref{block_decomposition_of_BTB}.

\textbf{Case $B_1\transpose B_1$:}  Consider the $(i,j)$-block of the product $B_1 \transpose B_1$:
\begin{equation}
\label{expansion_of_B1tB1}
(B_1\transpose B_1)_{ij} = \sum_{e \in \dEdges} (B_1)\transpose_{ei} (B_1)_{ej}.
\end{equation}
From \eqref{B1_block_formula} we have that $(B_1)_{ek} \ne 0$ only if $e = (i,k)$ or $e = (k,j)$.  Thus for $i \ne j$, $(B_1)_{ei}\transpose (B_1)_{ej} \ne 0$ only if $e = (i,j)$ or $e = (j,i)$, in which case \eqref{expansion_of_B1tB1} becomes:
\begin{equation}
\label{B1tB1_off_diagonal_formula}
(B_1\transpose B_1)_{ij} = \left(\pm \sqrt{\tau_e} I_d \right)\transpose \left( \mp \sqrt{\tau_e} I_d \right) = -\tau_e I_d.
\end{equation}
On the other hand, if $i = j$, then \eqref{expansion_of_B1tB1} reduces to
\begin{equation}
\label{B1tB1_diagonal_formula}
(B_1\transpose B_1)_{ii} = \sum_{e \in \incEdges(i) } \left( \pm \sqrt{\tau_e} I_d \right)\transpose\left( \pm \sqrt{\tau_e} I_d \right) = \sum_{e \in \incEdges(i) } \tau_e I_d.
\end{equation}
Comparing \eqref{B1tB1_off_diagonal_formula} and \eqref{B1tB1_diagonal_formula} with \eqref{Laplacian_of_translational_weight_graph} shows that $B_1\transpose B_1 = \LapTranW \otimes I_d$.

\textbf{Case $B_1\transpose B_2$:}  Consider the $(i,j)$-block of the product $B_1 \transpose B_2$:
\begin{equation}
\label{expansion_of_B1tB2}
 (B_1 \transpose B_2)_{ij} =  \sum_{e \in \dEdges} (B_1)\transpose_{ei} (B_2)_{ej}.
\end{equation}
From \eqref{B2_block_formula} we have that $(B_2)_{ej} \ne 0$  only if $e = (j,k)$, and \eqref{B1_block_formula} shows that $(B_1)_{ei}\transpose \ne 0$ only if $e = (i,k)$ or $e = (k,i)$.  Thus for $i \ne j$, $(B_1)\transpose_{ei} (B_2)_{ej} \ne 0$ only if $e = (j,i)$, in which case \eqref{expansion_of_B1tB2} becomes:
\begin{equation}
\label{B1t_B2_off_diagonal_elements}
 (B_1 \transpose B_2)_{ij} = \left( \sqrt{\tau_{ji}} I_d \right)\transpose \left( - \sqrt{\tau_{ji}} \left(\ntran_{ji}\transpose \otimes I_d\right) \right) = -\tau_{ji} \left(\ntran_{ji}\transpose \otimes I_d\right).
\end{equation}
On the other hand, for $i = j$, \eqref{expansion_of_B1tB2} simplifies to:
\begin{equation}
\label{B_1t_B2_diagonal_elements}
(B_1\transpose B_2)_{jj} = \sum_{(j,k) \in \dEdges} \left( -\sqrt{\tau_{jk}} I_d \right)\transpose \left( - \sqrt{\tau_{jk}} \left(\ntran_{jk}\transpose \otimes I_d \right) \right) = \sum_{(j,k) \in \dEdges}  \tau_{jk} \left(\ntran_{jk}\transpose \otimes I_d \right).
\end{equation}
Comparing \eqref{B1t_B2_off_diagonal_elements} and \eqref{B_1t_B2_diagonal_elements} with \eqref{cross_term_matrix_definition} shows that $B_1\transpose B_2 = \nCrossTerms \otimes I_d$.
 
\textbf{Case $B_2\transpose B_2$:}  Consider the $(i,j)$-block of the product $B_2\transpose B_2$: 
\begin{equation}
 (B_2\transpose B_2)_{ij} = \sum_{e \in \dEdges} (B_2)_{ei}\transpose (B_2)_{ej}.
\end{equation}
Since $(B_2)_{ei} = 0$ unless $e = (i,k)$ by \eqref{B2_block_formula}, $(B_2)\transpose_{ei}(B_2)_{ej} \ne 0$ only if $i = j$.  This implies that $(B_2\transpose B_2)_{ij} = 0$ for all $i \ne j$, and
\begin{equation}
\label{B2transpose_B2_block_diagonal}
\begin{split}
(B_2\transpose B_2)_{jj} &= \sum_{(j,k) \in \dEdges} \left(-\sqrt{\tau_{jk}} \left(\ntran_{jk}^T \otimes I_d \right) \right)\transpose \left(-\sqrt{\tau_{jk}} \left(\ntran_{jk}\transpose \otimes I_d\right) \right)  \\
&= \sum_{(j,k) \in \dEdges} \tau_{jk} \left(\ntran_{jk} \ntran_{jk}\transpose \otimes I_d\right).
\end{split}
\end{equation}
Comparing \eqref{B2transpose_B2_block_diagonal} with \eqref{Sigma_matrix_definition} then shows that $B_2\transpose B_2 = \nOuterProducts \otimes I_d$.

\textbf{Case $B_3\transpose B_3$:}   Finally, consider the $(i,j)$-block of the product $B_3 \transpose B_3$:
\begin{equation}
\label{B3tB3_expansion}
(B_3\transpose B_3)_{ij} = \sum_{e \in \dEdges} (B_3)_{ei}\transpose(B_3)_{ej}.
\end{equation}
Using \eqref{B3_block_formula}, for $i \ne j$ we find that
\begin{equation}
\begin{split}
 (B_3)_{ei}\transpose (B_3)_{ej} &= 
  \begin{cases}
\left( - \sqrt{\kappa_{ij}}\left(\nrot_{ij}^T \otimes I_d\right) \right)\transpose \left( \sqrt{\kappa_{ij}} (I_d \otimes I_d) \right), & e = (i,j) \in \dEdges, \\
 \left( \sqrt{\kappa_{ji}} I_d \otimes I_d \right)\transpose \left( -\sqrt{\kappa_{ji}} \left(\nrot_{ji}\transpose \otimes I_d \right) \right), & e = (j,i) \in \dEdges, \\
 0_{d^2\times d^2}, & \textnormal{otherwise}.
 \end{cases}
   \end{split}
\end{equation}
Consequently, for $i \ne j$ \eqref{B3tB3_expansion} reduces to
\begin{equation}
\label{B3_transpose_B3_off_diagonal}
 (B_3\transpose B_3)_{ij} = 
 \begin{cases}
-\kappa_{ij} (\nrot_{ij} \otimes I_d), & (i,j) \in \dEdges, \\
-\kappa_{ji} (\nrot_{ji}\transpose \otimes I_d), & (j,i) \in \dEdges, \\
0_{d^2 \times d^2}, & \textnormal{otherwise}.
 \end{cases}
\end{equation}
Similarly, for $i = j$ \eqref{B3_block_formula} shows that 
\begin{equation}
\begin{split}
 (B_3)_{ei}\transpose (B_3)_{ei} &= 
 \begin{cases}
 \left(-\sqrt{\kappa_{ik}} \left(\nrot_{ik} \otimes I_d\right) \right)  \left(-\sqrt{\kappa_{ik}} \left(\nrot_{ik}\transpose \otimes I_d\right) \right), & (i,k) \in \dEdges, \\
 \left(\sqrt{\kappa_{ki}} (I_d \otimes I_d) \right)  \left(\sqrt{\kappa_{ki}} (I_{d} \otimes I_d) \right), &(k,i) \in \dEdges, \\
 0_{d^2 \times d^2}, & \textnormal{otherwise}, 
 \end{cases} \\
 &= 
 \begin{cases}
 \kappa_{e} (I_d \otimes I_d), & e \in \incEdges(i) \subseteq \dEdges, \\
 0_{d^2 \times d^2}, & \textnormal{otherwise},
 \end{cases}
 \end{split}
 \end{equation}
and therefore
\begin{equation}
\label{B3_inner_products_on_diagonal}
 (B_3\transpose B_3)_{ii} = \sum_{ e \in \incEdges(i) } \kappa_e (I_d \otimes I_d), 
 \end{equation}
 Comparing \eqref{B3_transpose_B3_off_diagonal} and \eqref{B3_inner_products_on_diagonal} with \eqref{connection_Laplacian_definition} shows that $B_3\transpose B_3 = \MeasRotConLap \otimes I_d$.

Together, these results establish that $B\transpose B = \QPFormMatrix \otimes I_d$, where $\QPFormMatrix$ is the matrix defined in \eqref{M_matrix_definition}, and consequently that Problem \ref{SE3_synchronization_MLE_NLS_problem} is equivalent to Problem \ref{SE3_synchronization_MLE_QP_form_problem}.

\subsection{Deriving Problem \ref{Rotational_maximum_likelihood_estimation_for_SE3_synchronization} from Problem \ref{SE3_synchronization_MLE_QP_form_problem}}
\label{eliminating_translational_states_subsection}
In this subsection, we show how to analytically eliminate the translations $t$ appearing in Problem \ref{SE3_synchronization_MLE_QP_form_problem} to obtain the simplified form of Problem \ref{Rotational_maximum_likelihood_estimation_for_SE3_synchronization}. We make use of the following lemma (cf.\ \cite[Appendix A.5.5]{Boyd2004Convex} and \cite[Proposition 4.2]{Gallier2010Schur}):

\begin{lem}
\label{generalized_schur_complement_quadratic_minimization_lemma}
Given $A \in \Sym(p)$ and $b \in \R^p$, the function
\begin{equation}
  f(x) = x\transpose A x + 2b\transpose x
 \end{equation}
attains a minimum if and only if $A \succeq 0$ and $(I_d - AA^{\pinv})b = 0$, in which case
\begin{equation}
 \min_{x \in \R^p} f(x) =  - b\transpose A^{\pinv} b
\end{equation}
and
\begin{equation}
\label{optimal_value_of_x_for_quadratic_optimization_lemma}
 \argmin_{x \in \R^p} f(x) = \left \lbrace -A^{\pinv} b + U 
 \begin{pmatrix}
 0 \\
 z
 \end{pmatrix}
 \mid z \in \R^{p - r} \right \rbrace,
\end{equation}
where $A = U \Lambda U\transpose$ is an eigendecomposition of $A$ and $r = \rank(A)$.
\end{lem}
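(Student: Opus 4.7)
The plan is to reduce the problem to a separable one-dimensional question via the spectral decomposition $A = U \Lambda U^\transpose$ supplied in the statement, where $\Lambda = \Diag(\lambda_1, \dotsc, \lambda_p)$. Substituting $y \triangleq U^\transpose x$ and $c \triangleq U^\transpose b$, and using the orthogonality of $U$, the objective becomes
\[
f(x) = y^\transpose \Lambda y + 2 c^\transpose y = \sum_{i=1}^{p} \left( \lambda_i y_i^2 + 2 c_i y_i \right),
\]
so the minimization decouples into $p$ independent scalar problems $h_i(y_i) = \lambda_i y_i^2 + 2 c_i y_i$.

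Next I would characterize the existence of a minimizer term by term. Each $h_i$ is bounded below and attains its infimum if and only if either (a) $\lambda_i > 0$, in which case the unique minimizer is $y_i = -c_i/\lambda_i$ with value $-c_i^2/\lambda_i$, or (b) $\lambda_i = 0$ and $c_i = 0$, in which case $h_i \equiv 0$ is minimized by every $y_i \in \R$; the case $\lambda_i < 0$ gives $h_i \to -\infty$, while $\lambda_i = 0$ with $c_i \ne 0$ gives an unbounded linear term. Aggregating across $i$, $f$ attains its minimum if and only if every $\lambda_i \ge 0$ (that is, $A \succeq 0$) and $c_i = 0$ whenever $\lambda_i = 0$ (that is, $c \in \image(\Lambda)$).

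I would then translate these conditions back to the original variables. Since $A^\pinv = U \Lambda^\pinv U^\transpose$ and $\Lambda \Lambda^\pinv$ is the diagonal projector onto $\image(\Lambda)$, the membership $c \in \image(\Lambda)$ is equivalent to $(I - \Lambda \Lambda^\pinv) c = 0$, which under the orthogonal change of basis is equivalent to $(I - A A^\pinv) b = 0$, giving the stated second feasibility condition. Under these conditions, combining the per-coordinate minimizers yields the optimal $y^*$ with $y^*_i = -c_i/\lambda_i$ for $\lambda_i > 0$ and $y^*_i$ arbitrary for $\lambda_i = 0$; writing these free components as an arbitrary $z \in \R^{p-r}$ (after ordering the eigenvalues so the positive ones come first) gives $y^* = -\Lambda^\pinv c + \bigl(\begin{smallmatrix} 0 \\ z \end{smallmatrix}\bigr)$, and hence
\[
x^* = U y^* = -A^\pinv b + U \begin{pmatrix} 0 \\ z \end{pmatrix},
\]
which is the claimed description of $\argmin f$. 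Finally, the optimal value is $\sum_{\lambda_i > 0} -c_i^2/\lambda_i = -c^\transpose \Lambda^\pinv c = -b^\transpose A^\pinv b$.

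The proof involves no real obstacle; the only point that requires a bit of care is the passage from the coordinatewise condition ``$\lambda_i = 0 \Rightarrow c_i = 0$'' to the basis-free condition $(I - A A^\pinv) b = 0$, which I would handle explicitly by invoking the fact that $A A^\pinv$ is the orthogonal projector onto $\image(A)$ together with the unitary invariance of this projector under the change of basis induced by $U$.
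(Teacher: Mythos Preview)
Your proof is correct. The spectral diagonalization reduces the problem to decoupled scalar quadratics, and your case analysis for each $h_i$ is sound; the translation of the coordinatewise condition back to $(I - AA^\pinv)b = 0$ via the fact that $AA^\pinv$ is the orthogonal projector onto $\image(A)$ is exactly the right step.

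As for comparison with the paper: the paper does not actually prove this lemma. It simply states it and defers to external references (Appendix~A.5.5 of Boyd and Vandenberghe's \emph{Convex Optimization} and Proposition~4.2 of Gallier's note on Schur complements), then proceeds to apply it. Your spectral-decomposition argument is the standard self-contained proof and is essentially what one finds in those references.
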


Now $\LapTranW \otimes I_d \succeq 0$ since $\LapTranW$ is a (necessarily positive semidefinite) graph Laplacian, and $I_{dn} - (\LapTranW \otimes I_d)(\LapTranW \otimes  I_d )\pinv$ is the orthogonal projection operator onto $\ker\left((\LapTranW \otimes I_d)\transpose\right) = \ker(\LapTranW \otimes I_d)$ \cite[eq.\ (5.13.12)]{Meyer2000Matrix}.  Using the relation $\vect(AYC) = (C\transpose \otimes A) \vect(Y)$ \cite[Lemma 4.3.1]{Horn1991Topics}, we find that $y \in \ker(\LapTranW \otimes I_d)$ if and only if $y = \vect(Y)$ for some $Y \in \R^{d \times n}$ satisfying $Y \LapTranW = 0$, or equivalently $\LapTranW Y\transpose = 0$.  Since $G$ is connected, then $\ker(\LapTranW) = \Span\lbrace \ones_n \rbrace$, and therefore we must have $Y\transpose = \ones_n c\transpose \in \R^{n \times d}$ for some $c \in \R^d$. Altogether, this establishes that:
\begin{equation}
\label{kernel_of_LWtau_otimes_I3}
 \ker( \LapTranW \otimes I_d ) = \left \lbrace \vect \left(
 c \ones_n\transpose \right) \mid c \in \R^d
 \right \rbrace.
\end{equation}
Now let $b = (\nCrossTerms \otimes I_d) \vect(\rot)$; we claim that $b \perp y$ for all $y = \vect(Y) \in \ker(\LapTranW \otimes I_d)$.  To see this, observe that $b = \vect(R \nCrossTerms\transpose)$ and therefore $\langle b, y \rangle_2 = \langle R \nCrossTerms\transpose, Y \rangle_F  = \tr(Y \nCrossTerms \rot \transpose)$.  However, $\ones_n\transpose \nCrossTerms = 0$ since the sum down each column of $\nCrossTerms$ is identically $0$ by \eqref{cross_term_matrix_definition}, and therefore $Y \nCrossTerms = 0$ by \eqref{kernel_of_LWtau_otimes_I3}.  This establishes that $b \perp \ker(\LapTranW \otimes I_d)$ for \emph{any} value of $\rot$.

Consequently, if we fix  $\rot \in \SO(d)^n$ and consider performing the optimization in \eqref{block_partitioned_quadratic_objective_in_MLE_problem} over the decision variables $\tran$ \emph{only}, we can apply Lemma \ref{generalized_schur_complement_quadratic_minimization_lemma} to compute the optimal value of the objective and a minimizing value $\topt$ of $t$ as functions of $\rot$.  This in turn enables us to analytically eliminate $\tran$ from \eqref{block_partitioned_quadratic_objective_in_MLE_problem}, producing the equivalent optimization problem:
\begin{equation}
\label{vectorized_form_of_reduced_rotation_only_MLE}
 \MLEval = \min_{\rot \in \SO(d)} \vect(\rot)\transpose \left(\left(\MeasRotConLap + \nOuterProducts - \nCrossTerms\transpose \LapTranW\pinv \nCrossTerms\right) \otimes I_d \right) \vect(\rot),
\end{equation}
with a corresponding optimal translational estimate $\topt$ given by:
\begin{equation}
\label{optimal_value_of_t_in_terms_of_R_vector_form}
 \topt = - \left(\LapTranW \pinv \nCrossTerms \otimes I_d \right) \vect(\Ropt).
\end{equation}
Rewriting \eqref{vectorized_form_of_reduced_rotation_only_MLE} and \eqref{optimal_value_of_t_in_terms_of_R_vector_form} in a more compact matricized form gives \eqref{Rotational_maximum_likelihood_estimation_for_SE3_synchronization_optimization_problem} and \eqref{minimizing_value_of_t_from_minimizing_value_of_R}, respectively.

\subsection{\texorpdfstring{Simplifying the translational data matrix $\nQtran$}{Simplifying the translational data matrix}}
\label{An_alternative_form_for_the_translational_data_matrix_subsection}

In this subsection we derive the simplified form of the translational data matrix $\nQtran$ given in \eqref{nQtran_alternative_form} from the one originally presented in \eqref{initial_Q_quadratic_form_definition}. To begin, recall from Appendix \ref{deriving_the_quadratic_form_of_the_MLE} that 
\begin{equation}
\label{expression_for_nQtran_in_terms_of_B_matrices}
 \begin{split}
\nQtran \otimes I_d &= \left(\nOuterProducts - \nCrossTerms \transpose \LapTranW\pinv \nCrossTerms\right) \otimes I_d \\
&= B_2\transpose B_2 - B_2\transpose B_1 \left(B_1\transpose B_1 \right)\pinv B_1\transpose B_2 \\
&= B_2\transpose\left(I_{dm} - B_1 \left(B_1\transpose B_1 \right)\pinv B_1\transpose \right) B_2,
 \end{split}
\end{equation}
where $B_1$ and $B_2$ are the matrices defined in \eqref{B1_block_formula} and \eqref{B2_block_formula}, respectively.  Using $\tranPrecisions$ and $\nT$ as defined in \eqref{translational_precision_matrix} and \eqref{nT_matrix_definition}, respectively, we may write $B_1$ and $B_2$ alternatively as:
\begin{equation}
\label{B1_and_B2_in_terms_of_incidence_matrice}
 B_1 = \tranPrecisions^{\frac{1}{2}} \incMat\transpose  \otimes I_d, \quad \quad B_2 = \tranPrecisions^{\frac{1}{2}}\nT \otimes I_d,
\end{equation}
where $\incMat \triangleq \incMat(\directed{G})$ is the incidence matrix of $\directed{G}$.  Substituting \eqref{B1_and_B2_in_terms_of_incidence_matrice} into \eqref{expression_for_nQtran_in_terms_of_B_matrices}, we derive:
\begin{equation}
\label{setup_for_application_of_orthogonal_projection_operator_for_reduced_incidence_matrix}
\begin{split}
\nQtran \otimes I_d &= B_2\transpose \left(I_{dm } - B_1 (B_1\transpose B_1)\pinv B_1\transpose \right) B_2 \\
&= B_2\transpose \left( I_{dm }  - \left(\tranPrecisions^{\frac{1}{2}}\incMat\transpose \otimes I_d \right) \left( \left(\tranPrecisions^{\frac{1}{2}}\incMat\transpose \otimes I_d \right)\transpose \left(\tranPrecisions^{\frac{1}{2}}\incMat\transpose \otimes I_d \right) \right)\pinv \left(\tranPrecisions^{\frac{1}{2}}\incMat\transpose \otimes I_d\right)\transpose \right) B_2 \\
&= B_2\transpose \left( I_{dm}  - \left(\tranPrecisions^{\frac{1}{2}}\incMat\transpose \otimes I_d\right) \left( \left(\incMat \tranPrecisions \incMat\transpose\right)\pinv \otimes I_d \right) \left(\tranPrecisions^{\frac{1}{2}}\incMat\transpose \otimes I_d\right)\transpose \right) B_2 \\
 &= B_2\transpose \left( I_{dm}  - \left( \tranPrecisions^{\frac{1}{2}}\incMat\transpose \left(\incMat\tranPrecisions \incMat\transpose \right)\pinv \incMat \tranPrecisions^{\frac{1}{2}} \right) \otimes I_d   \right) B_2 \\
 &= B_2\transpose \left[ \left( I_{m }  - \tranPrecisions^{\frac{1}{2}}\incMat\transpose \left(\incMat \tranPrecisions \incMat\transpose \right)\pinv \incMat \tranPrecisions^{\frac{1}{2}} \right)  \otimes I_d  \right] B_2 \\
 &= \left( \nT \transpose \tranPrecisions^{\frac{1}{2}} \left( I_{m }  - \tranPrecisions^{\frac{1}{2}}\incMat\transpose \left(\incMat\tranPrecisions \incMat\transpose \right)\pinv \incMat \tranPrecisions^{\frac{1}{2}} \right)  \tranPrecisions^{\frac{1}{2}}  \nT \right) \otimes I_d,
\end{split}
\end{equation}
or equivalently:
\begin{equation}
\label{setup_for_application_of_orthogonal_projection_operator_for_reduced_incidence_matrix_without_Kronecker_product}
 \nQtran = \nT \transpose \tranPrecisions^{\frac{1}{2}} \underbrace{\left( I_{m }  - \tranPrecisions^{\frac{1}{2}}\incMat\transpose \left(\incMat\tranPrecisions \incMat\transpose \right)\pinv \incMat \tranPrecisions^{\frac{1}{2}} \right)}_{\OrthoProjMatrix}    \tranPrecisions^{\frac{1}{2}}  \nT.
\end{equation}

Now let us develop the term labeled $\OrthoProjMatrix$ appearing in \eqref{setup_for_application_of_orthogonal_projection_operator_for_reduced_incidence_matrix_without_Kronecker_product}:
\begin{equation}
 \label{orthogonal_projection_matrix_definition}
 \begin{split}
 \OrthoProjMatrix &=  I_{m }  - \tranPrecisions^{\frac{1}{2}}\incMat\transpose \left(\incMat\tranPrecisions \incMat\transpose \right)\pinv \incMat \tranPrecisions^{\frac{1}{2}} \\
  &= I_m - \left(  \incMat \tranPrecisions^{\frac{1}{2}} \right)\transpose \left(  \left(  \incMat \tranPrecisions^{\frac{1}{2}} \right)  \left(  \incMat \tranPrecisions^{\frac{1}{2}}\right)\transpose \right)\pinv   \left(  \incMat \tranPrecisions^{\frac{1}{2}} \right) \\
  &= I_m - \left(  \incMat \tranPrecisions^{\frac{1}{2}} \right)\pinv \left(  \incMat \tranPrecisions^{\frac{1}{2}} \right)
 \end{split}
\end{equation}
where we have used the fact that $X\transpose (X X\transpose)\pinv = X\pinv$ for any matrix $X$ in passing from line 2 to line 3 above.  We may now recognize the final line of \eqref{orthogonal_projection_matrix_definition} as the matrix of the orthogonal projection operator $\OrthoProj \colon \R^m \to \ker( \incMat \tranPrecisions^{\frac{1}{2}})$ onto the kernel of the weighted incidence matrix $\incMat \tranPrecisions^{\frac{1}{2}}$ \cite[eq.\ (5.13.12)]{Meyer2000Matrix}.  Equation \eqref{nQtran_alternative_form} thus follows from \eqref{setup_for_application_of_orthogonal_projection_operator_for_reduced_incidence_matrix_without_Kronecker_product} and \eqref{orthogonal_projection_matrix_definition}.

Finally, although $\OrthoProjMatrix$ is generically dense, we now show that it admits a computationally convenient decomposition in terms of sparse matrices and their inverses.  By the Fundamental Theorem of Linear Algebra, $\ker(\incMat \tranPrecisions^{\frac{1}{2}})^{\perp} = \image(\tranPrecisions^{\frac{1}{2}} \incMat\transpose)$, and therefore every vector $v \in \R^m$ admits the orthogonal decomposition:
\begin{equation}
\label{orthogonal_decomposition_of_test_vector}
\begin{split}
v &= \OrthoProj(v) + c, \quad \quad \OrthoProj(v) \in \ker(\incMat \tranPrecisions^{\frac{1}{2}}), \; c \in \image(\tranPrecisions^{\frac{1}{2}} \incMat\transpose).
\end{split}
\end{equation}
Now $\rank(\incMat) = n-1$ since $\incMat$ is the incidence matrix of the weakly-connected directed graph $\directed{G}$; it follows that $\image(\tranPrecisions^{\frac{1}{2}} \incMat \transpose) = \image(\tranPrecisions^{\frac{1}{2}} \redIncMat\transpose)$, where $\redIncMat$ is the reduced incidence matrix of $\directed{G}$ formed by removing the final row of $\incMat$.  Furthermore, since $c$ is the complement of $\pi(v)$ in the orthogonal decomposition \eqref{orthogonal_decomposition_of_test_vector}, it is itself the orthogonal projection of $v$ onto $\image(\tranPrecisions^{\frac{1}{2}} \incMat\transpose) = \image(\tranPrecisions^{\frac{1}{2}} \redIncMat\transpose)$, and is therefore the value of the product realizing the minimum norm in:
\begin{equation}
 \label{closest_point_to_v_in_imageAT}
 \min_{w \in \R^{n-1}} \lVert v - \tranPrecisions^{\frac{1}{2}} \redIncMat\transpose w \rVert_2.
\end{equation}
Consequently, it follows from \eqref{orthogonal_decomposition_of_test_vector} and \eqref{closest_point_to_v_in_imageAT} that:
\begin{equation}
\label{Computing_projection_of_v_onto_Omega_half_At_via_LS}
\begin{split}
\pi(v) &= v - \tranPrecisions^{\frac{1}{2}} \redIncMat\transpose w^{*}, \\
w^* &= \argmin_{w \in \R^{n-1}}  \lVert v - \tranPrecisions^{\frac{1}{2}} \redIncMat\transpose w \rVert_2.
\end{split}
\end{equation}
Since $\redIncMat$ is full-rank, we can solve for the minimizer $w^*$ in \eqref{Computing_projection_of_v_onto_Omega_half_At_via_LS} via the normal equations, obtaining:
\begin{equation}
\label{closed_for_expression_for_optimal_w}
w^* = \left(\redIncMat \tranPrecisions \redIncMat\transpose \right)\inv \redIncMat \tranPrecisions^{\frac{1}{2}}v = L\tinv L\inv \redIncMat \tranPrecisions^{\frac{1}{2}} v,
\end{equation}
where $\redIncMat \tranPrecisions^{\frac{1}{2}} = LQ_1$ is a thin LQ decomposition of the weighted reduced incidence matrix $\redIncMat \tranPrecisions^{\frac{1}{2}}$.  Substituting \eqref{closed_for_expression_for_optimal_w} into \eqref{Computing_projection_of_v_onto_Omega_half_At_via_LS}, we obtain:
\begin{equation}
\label{derivation_of_explicit_sparse_matrix_form_of_orthogonal_projection_operator}
\begin{split}
 \pi(v) &= v - \tranPrecisions^{\frac{1}{2}} \redIncMat\transpose L\tinv L\inv \redIncMat \tranPrecisions^{\frac{1}{2}}v = \left(I_m - \tranPrecisions^{\frac{1}{2}} \redIncMat\transpose L\tinv L\inv \redIncMat \tranPrecisions^{\frac{1}{2}} \right) v.
 \end{split}
\end{equation}
Since $v$ was arbitrary, we conclude that the matrix in parentheses on the right-hand side of \eqref{derivation_of_explicit_sparse_matrix_form_of_orthogonal_projection_operator} is $\OrthoProjMatrix$, which gives \eqref{computing_orthogonal_projection_operator_eq}.

\subsection{Deriving Problem \ref{dual_semidefinite_relaxation_for_SE3_synchronization_problem} from Problem \ref{primal_semidefinite_relaxation_for_SE3_synchronization_problem}}
\label{Deriving_the_dual_semidefinite_relaxation_from_the_primal_subsection}
This derivation is a straightforward application of the  duality theory for semidefinite programs \cite{Vandenberghe1996Semidefinite}.  Letting $\lambda_{iuv} \in \R$ denote the $(u,v)$-element of the $i$th diagonal block of $\Lambda$ for $i = 1, \dotsc, n$ and $1 \le u \le v \le d$, we can rewrite Problem \ref{primal_semidefinite_relaxation_for_SE3_synchronization_problem} in the primal standard form of \cite[eq.\ (1)]{Vandenberghe1996Semidefinite} as:
 \begin{equation}
\label{primal_semidefinite_relaxation_in_standard_form}
\begin{split}
-\SDPval = \min_{\lambda \in \R^{\frac{d(d+1)}{2}n}} &c\transpose \lambda \\
\st \; &F(\lambda) \succeq 0,
\end{split}
 \end{equation}
with the problem data in \eqref{primal_semidefinite_relaxation_in_standard_form} given explicitly by:
\begin{equation}
\label{elements_of_primal_semidefinite_relaxation_in_standard_form}
 \begin{split}
F(\lambda) &\triangleq F_0 + \sum_{i = 1}^n \sum_{1 \le u \le v \le d} \lambda_{iuv}F_{iuv}, \\
F_0 &= Q, \\
F_{iuv} &= 
\begin{cases}
- \Diag(e_i) \otimes E_{uu}, & u = v, \\
- \Diag(e_i) \otimes (E_{uv} + E_{vu}), & u \ne v, 
\end{cases} \\
c_{iuv} &= 
\begin{cases}
-1, & u = v, \\
0, & u \ne v.
\end{cases}
 \end{split}
\end{equation}
The standard dual problem for \eqref{primal_semidefinite_relaxation_in_standard_form} is then (cf.\ equations (1) and (27) of \cite{Vandenberghe1996Semidefinite}):
\begin{equation}
\label{dual_semidefinite_relaxation_in_standard_form}
\begin{split}
-d_{\textnormal{SDP}}^* = \max_{Z \in \Sym(dn)} &-\tr(F_0 Z) \\
\st \; & \tr\left(F_{iuv} Z\right) = c_{iuv} \quad \forall i \in [n],\; 1 \le u \le v \le d, \\
& Z \succeq 0.
\end{split}
\end{equation}
Comparing \eqref{dual_semidefinite_relaxation_in_standard_form} with \eqref{elements_of_primal_semidefinite_relaxation_in_standard_form} reveals that the equality constraints are satisfied precisely when $Z$'s $(d \times d)$-block-diagonal is comprised of identity matrices, which gives the form of Problem \ref{dual_semidefinite_relaxation_for_SE3_synchronization_problem}.  Furthermore, since $\nQ - \Lambda \succ 0$ for any $\Lambda = -sI_{dn}$ with $s > \lVert \nQ \rVert_2$ is strictly feasible for Problem \ref{primal_semidefinite_relaxation_for_SE3_synchronization_problem} (hence also \eqref{primal_semidefinite_relaxation_in_standard_form}) and $I_{dn} \succ 0$ is strictly feasible for \eqref{dual_semidefinite_relaxation_in_standard_form}, Theorem 3.1 of \cite{Vandenberghe1996Semidefinite} implies that the optimal sets of \eqref{primal_semidefinite_relaxation_in_standard_form} and \eqref{dual_semidefinite_relaxation_in_standard_form} are nonempty, and that strong duality holds between them (so that the optimal value of Problem \ref{dual_semidefinite_relaxation_for_SE3_synchronization_problem} is $\SDPval$, as claimed).

\section{Proof of Proposition \ref{A_sufficient_condition_for_exact_recovery_prop}}
\label{A_sufficient_condition_for_exactness_appendix}

In this appendix we prove Proposition \ref{A_sufficient_condition_for_exact_recovery_prop}, following the general roadmap of the proof of a similar  result for the special case of \emph{angular synchronization} due to \citet{Bandeira2016Tightness}.  At a high level, our approach is based upon exploiting the Lagrangian duality between Problems \ref{Orthogonal_relaxation_of_the_MLE_problem} and \ref{dual_semidefinite_relaxation_for_SE3_synchronization_problem} to identify a matrix $\certMat$ (constructed from an optimal solution $\Ropt$ of Problem \ref{Orthogonal_relaxation_of_the_MLE_problem}) with the property that $\certMat \succeq 0$ and $\rank(\certMat) = dn - d$ imply that $\Zopt = {\Ropt}\transpose \Ropt$ is the unique optimal solution of Problem \ref{dual_semidefinite_relaxation_for_SE3_synchronization_problem}; we then show that these conditions can be assured by controlling the magnitude of the deviation $\dQ \triangleq \nQ - \tQ$ of the observed data matrix $\nQ$ from its exact latent value $\tQ$.  Specifically, our proof proceeds according to the following chain of reasoning:

\begin{enumerate}
 \item \label{KKT_bullet} We begin by deriving the first-order necessary optimality conditions for the extrinsic formulation of Problem \ref{Orthogonal_relaxation_of_the_MLE_problem}; these take the form $(\nQ - \Lopt){\Ropt}\transpose = 0$, where $\Ropt \in \Orthogonal(d)^n$ is a minimizer of Problem \ref{Orthogonal_relaxation_of_the_MLE_problem} and $\Lopt = \SymBlockDiag_d(\nQ {\Ropt}\transpose \Ropt)$ is a symmetric block-diagonal matrix of Lagrange multipliers corresponding to the orthogonality constraints  in \eqref{orthogonally_constrained_relaxed_program}.
 
 \item \label{sufficient_condition_for_unique_solution_bullet} Exploiting the Lagrangian duality between Problems \ref{Orthogonal_relaxation_of_the_MLE_problem} and \ref{dual_semidefinite_relaxation_for_SE3_synchronization_problem} together with nondegeneracy results for semidefinite programs \cite{Alizadeh1997Nondegeneracy}, we identify a matrix $\certMat \triangleq \nQ - \Lopt$ with the property that $\certMat \succeq 0$ and $\rank(\certMat) = dn - d$ imply that $\Zopt = {\Ropt}\transpose \Ropt$ is the unique optimal solution of Problem \ref{dual_semidefinite_relaxation_for_SE3_synchronization_problem} (Theorem \ref{Sufficient_conditions_for_exact_recovery_for_orthogonally_relaxed_MLE_prop}).  
 
 \item \label{noiseless_case_bullet} We next observe that for the idealized case in which the measurements $\npose_{ij}$ of the relative transforms are \emph{noiseless}, the true latent rotations $\trot$ comprise a minimizer of Problem \ref{Orthogonal_relaxation_of_the_MLE_problem} with corresponding certificate $\true{\certMat} = \tQ  \succeq \TrueRotConLap$.  We then show (by means of similarity) that the spectrum of $\TrueRotConLap$ consists of $d$ copies of the spectrum of the rotational weight graph $\RotW$; in particular, $\TrueRotConLap$ has $d$ eigenvalues equal to $0$, and the remaining $d(n-1)$ are lower-bounded by the algebraic connectivity $\lambda_2(\LapRotW) > 0$ of $\RotW$.  Consequently $\true{\certMat} \succeq 0$, and $\rank(\true{\certMat}) = dn - d$ since $\trot \in \ker \tQ = \ker \true{\certMat}$.  It follows that Problem \ref{dual_semidefinite_relaxation_for_SE3_synchronization_problem} is \emph{always} exact in the absence of measurement noise (Theorem \ref{exactness_of_relaxation_in_the_zero_noise_case_thm}).
 
 \item \label{estimation_error_bullet} In the presence of noise, the minimizer $\Ropt \in \Orthogonal(d)^n$ of Problem \ref{Orthogonal_relaxation_of_the_MLE_problem} will generally not coincide with  the true latent rotations $\trot \in \SO(d)^n$.  Nevertheless, we can still derive an upper bound for the error in the estimate $\Ropt$ in terms of the magnitude of the deviation $\dQ \triangleq \nQ - \tQ$ of the data matrix $\nQ$ from the the true latent value $\tQ$ (Theorem \ref{An_upper_bound_on_the_estimation_error_in_Problem_5_Theorem}).
 
 \item \label{controlling_noise_implies_positive_semidefiniteness_bullet} The first-order necessary optimality condition for Problem \ref{Orthogonal_relaxation_of_the_MLE_problem} (point \ref{KKT_bullet} above) can alternatively be read as $\certMat {\Ropt}\transpose = 0$, which shows that $d$ eigenvalues of $\certMat $ are always $0$.  Since in general the eigenvalues of a matrix $X$ are continuous functions of $X$, it follows from points \ref{noiseless_case_bullet} and \ref{estimation_error_bullet} above and the definition of $\certMat$ that the other $d(n-1)$ eigenvalues of $C$ can be controlled into remaining nonnegative by controlling the norm of $\dQ$.	In light of point \ref{sufficient_condition_for_unique_solution_bullet}, this establishes the existence of a constant $\beta_1 > 0$ such that, if $\lVert \nQ - \tQ \rVert_2 < \beta_1$, $\Zopt = {\Ropt}\transpose \Ropt$ is the unique solution of Problem \ref{dual_semidefinite_relaxation_for_SE3_synchronization_problem}.
 
 \item \label{controlling_noise_implies_correct_orientation_bullet} Finally, we observe that since $\Orthogonal(d)$ is the disjoint union of two  components separated by a distance $\sqrt{2}$ in the Frobenius norm, and the true latent rotations $\trot_i$ all lie in the $+1$ component, it follows from point \ref{estimation_error_bullet} that there exists a constant $\beta_2 > 0$ such that, if $\lVert \nQ - \tQ \rVert_2 < \beta_2$, a minimizer $\Ropt \in \Orthogonal(d)^n$ of Problem \ref{Orthogonal_relaxation_of_the_MLE_problem} must in fact lie in $\SO(d)^n$, and is therefore also a minimizer of Problem \ref{Simplified_maximum_likelihood_estimation_for_SE3_synchronization}.
 
 \item \label{establishing_proposition_bullet} Taking $\beta \triangleq \min \lbrace \beta_1, \beta_2 \rbrace$, Proposition \ref{A_sufficient_condition_for_exact_recovery_prop} follows from points \ref{controlling_noise_implies_positive_semidefiniteness_bullet} and \ref{controlling_noise_implies_correct_orientation_bullet} and Theorem \ref{certifying_exactness_theorem}.
\end{enumerate}

The remainder of this appendix is devoted to rigorously establishing each of claims \ref{KKT_bullet}--\ref{establishing_proposition_bullet} above.

\subsection{Gauge symmetry and invariant metrics for Problems \ref{Simplified_maximum_likelihood_estimation_for_SE3_synchronization} and \ref{Orthogonal_relaxation_of_the_MLE_problem}}

A critical element of the proof of Proposition \ref{A_sufficient_condition_for_exact_recovery_prop} is the derivation of an upper bound for the estimation error of a minimizer $\Ropt$ of Problem \ref{Orthogonal_relaxation_of_the_MLE_problem}  as a function of $\dQ$ (point \ref{estimation_error_bullet}).  However, we have observed previously that Problems \ref{SE3_synchronization_MLE_NLS_problem}--\ref{Orthogonal_relaxation_of_the_MLE_problem} always admit \emph{multiple} (in fact, infinitely many) solutions for dimensions $d \ge 2$ due to gauge symmetry.\footnote{Recall that $\SO(1) = \lbrace +1 \rbrace$, so for $d = 1$ Problems \ref{Rotational_maximum_likelihood_estimation_for_SE3_synchronization} and \ref{Simplified_maximum_likelihood_estimation_for_SE3_synchronization} admit only the (trivial) solution $\Ropt = (1, \dotsc, 1)$.  Similarly, $\Orthogonal(1) = \lbrace \pm 1 \rbrace$, so for $d = 1$ Problem \ref{Orthogonal_relaxation_of_the_MLE_problem} admits pairs of solutions related by multiplication by $-1$.}  In consequence, it may not be immediately clear how we should quantify the estimation error of a \emph{particular} point estimate $\Ropt$ obtained as a minimizer of Problem \ref{Simplified_maximum_likelihood_estimation_for_SE3_synchronization} or Problem \ref{Orthogonal_relaxation_of_the_MLE_problem}, since $\Ropt$ is an arbitrary representative of an infinite set of \emph{distinct} but \emph{equivalent} minimizers that are related by gauge transforms.  To address this complication, in this subsection we study the gauge symmetries of Problems \ref{Simplified_maximum_likelihood_estimation_for_SE3_synchronization} and \ref{Orthogonal_relaxation_of_the_MLE_problem}, and then develop a pair of \emph{gauge-invariant} metrics suitable for quantifying estimation error in these problems in a consistent, ``symmetry aware'' manner.

Recall from Section \ref{Problem_formulation_subsection} (cf.\ footnote \ref{MLEs_are_nonunique_footnote} on page \pageref{MLEs_are_nonunique_footnote}) that solutions of Problem \ref{SE3_synchronization_MLE_NLS_problem} are determined only up to a global gauge symmetry (corresponding to the diagonal left-action of $\SE(d)$ on $\SE(d)^n$).  Similarly, it is straightforward to verify that if $\Ropt$ is any minimizer of \eqref{Simplified_maximum_likelihood_estimation_for_SE3_synchronization_optimization_problem}  (resp.\ \eqref{orthogonally_relaxed_primal_problem}), then $G \bullet \Ropt$  also minimizes \eqref{Simplified_maximum_likelihood_estimation_for_SE3_synchronization_optimization_problem} (resp.\ \eqref{orthogonally_relaxed_primal_problem}) for any choice of $G \in \SO(d)$ (resp. $G \in \Orthogonal(d)$), where $\bullet$ is the diagonal left-action of $\Orthogonal(d)$ on $\Orthogonal(d)^n$:
\begin{equation}
 \label{diagonal_left_action_of_SO3_on_SO3n}
 \begin{split}
 G \bullet \rot \triangleq (G \rot_1, \dotsc, G \rot_n).
 \end{split}
\end{equation}
It follows that the sets of minimizers of Problems \ref{Simplified_maximum_likelihood_estimation_for_SE3_synchronization} and \ref{Orthogonal_relaxation_of_the_MLE_problem} are partitioned into  \emph{orbits} of the form:

\pagebreak

\begin{subequations}
\label{orbit_definitions}
\begin{equation}
 \label{Sorbit_definition}
 \Sorbit(\rot) \triangleq \lbrace G \bullet \rot \mid G \in \SO(d) \rbrace \subset \SO(d)^n,
\end{equation}
\begin{equation}
 \label{Oorbit_definition}
\Oorbit(\rot) \triangleq \lbrace G \bullet \rot \mid G \in \Orthogonal(d) \rbrace \subset \Orthogonal(d)^n,
\end{equation}
\end{subequations}
each of which comprise a set of point estimates that are \emph{equivalent} to $R$ for the estimation problems \eqref{Simplified_maximum_likelihood_estimation_for_SE3_synchronization_optimization_problem} and \eqref{orthogonally_relaxed_primal_problem}, respectively.  Consequently, when quantifying the dissimilarity between a pair of feasible points $X, Y$ for Problem \ref{Simplified_maximum_likelihood_estimation_for_SE3_synchronization} or \ref{Orthogonal_relaxation_of_the_MLE_problem}, we are interested in measuring the distance between the \emph{orbits} determined by these two points, \emph{not} the distance between the \emph{specific representatives} $X$ and $Y$ themselves (which may in fact represent \emph{equivalent} solutions, but differ in their assignments of coordinates by a global gauge symmetry).  We therefore introduce the following \emph{orbit distances}:
\begin{subequations}
\label{orbit_distance_definitions}
\begin{equation}
 \label{Sorbit_distance_definition}
\Sorbdist(X, Y) \triangleq \min_{G \in \SO(d)} \lVert  X - G \bullet Y \rVert_F, \quad X,Y \in \SO(d)^n,
\end{equation}
\begin{equation}
\label{Oorbit_distance_definition}
\Oorbdist(X, Y) \triangleq \min_{G \in \Orthogonal(d)} \lVert  X - G \bullet Y \rVert_F, \quad X,Y \in \Orthogonal(d)^n;
\end{equation}
\end{subequations}
these functions report the Frobenius norm distance between the two closest representatives of the orbits \eqref{orbit_definitions} in $\SO(d)^n$ and $\Orthogonal(d)^n$ determined by $X$ and $Y$, respectively.  Using the orthogonal invariance of the Frobenius norm, it is straightforward to verify that these functions satisfy:
\begin{subequations}
\label{orbit_distance_invariance}
\begin{equation}
 \label{Sorbit_distance_invariance}
\Sorbdist(X, Y) = \Sorbdist(G_1 \bullet X, G_2 \bullet Y), \quad \quad  X, Y \in \SO(d)^n, \; G_1, G_2 \in \SO(d),
\end{equation}
\begin{equation}
\label{Oorbit_distance_invariance}
\Oorbdist(X, Y) = \Oorbdist(G_1 \bullet X, G_2 \bullet Y), \quad \quad X, Y\in \Orthogonal(d)^n, \; G_1, G_2 \in \Orthogonal(d),
\end{equation}
\end{subequations}
i.e., they define notions of dissimilarity between feasible points of Problems \ref{Simplified_maximum_likelihood_estimation_for_SE3_synchronization} and \ref{Orthogonal_relaxation_of_the_MLE_problem}, respectively, that are \emph{invariant} with respect to the action of the gauge symmetries for these problems, and so provide a consistent, gauge-invariant means of quantifying the estimation error.\footnote{We remark that while the formulation of the distance functions presented in \eqref{orbit_distance_definitions} may at first appear somewhat \emph{ad hoc}, one can justify this choice rigorously using the language of Riemannian geometry \cite{Boothby2003Riemannian,Kobayashi1996Foundations}.  Since the Frobenius norm distance is orthogonally invariant, the diagonal left-actions \eqref{diagonal_left_action_of_SO3_on_SO3n} of $\SO(d)$ on $\SO(d)^n$ and $\Orthogonal(d)$ on $\Orthogonal(d)^n$ are \emph{isometries}, and are trivially free and proper; consequently, the quotient spaces $\manifold_{\Sorbit} \triangleq \SO(d)^n / \SO(d)$ and $\manifold_{\Oorbit} \triangleq \Orthogonal(d)^n / \Orthogonal(d)$ obtained by \emph{identifying} the elements of the orbits \eqref{orbit_definitions} are manifolds, and the projections $\pi_{\Sorbit} \colon \SO(d)^n \to \manifold_{\Sorbit}$ and $\pi_{\Oorbit} \colon \Orthogonal(d)^n \to \manifold_{\Oorbit}$ are submersions.  Furthermore, it is straightforward to verify that the restrictions of the derivative maps $d(\pi_{\Sorbit})_{R} \colon T_{R}(\SO(d)^n) \to T_{[R]}(\manifold_{\Sorbit})$ and $d(\pi_{\Oorbit})_{R} \colon T_{R}(\Orthogonal(d)^n) \to T_{[R]}(\manifold_{\Oorbit})$ to the horizontal spaces $H_{R}(\SO(d)^n) \triangleq \ker(d(\pi_{\Sorbit})_R)^{\perp}$ and $H_{R}(\Orthogonal(d)^n) \triangleq \ker(d(\pi_{\Oorbit})_R)^{\perp}$ are linear isometries onto $T_{[R]}(\manifold_{\Sorbit})$ and $T_{[R]}(\manifold_{\Oorbit})$, respectively, and therefore induce well-defined Riemannian metrics on the quotient spaces $\manifold_{\Sorbit}$ and $\manifold_{\Oorbit}$ from the Riemannian metrics on the total spaces $\SO(d)^n$ and $\Orthogonal(d)^n$, with corresponding distance functions $d_{\manifold_{\Sorbit}}(\cdot, \cdot)$, $d_{\manifold_{\Oorbit}}(\cdot, \cdot)$, respectively.  The functions $\Sorbdist(\cdot, \cdot)$ and $\Oorbdist(\cdot, \cdot)$ defined in \eqref{orbit_distance_definitions} are then simply the functions that report the distances between the images of $X$ and $Y$ after projecting them to these quotient spaces: $\Sorbdist(X,Y) = d_{\manifold_{\Sorbit}}(\pi_{\Sorbit}(X), \pi_{\Sorbit}(Y))$ and $\Oorbdist(X,Y) = d_{\manifold_{\Oorbit}}(\pi_{\Oorbit}(X), \pi_{\Oorbit}(Y))$.  Consequently, these are in fact the canonical distance functions  for comparing points in $\SO(d)^n$ and $\Orthogonal(d)^n$ while accounting for the  gauge symmetry \eqref{diagonal_left_action_of_SO3_on_SO3n}.}

The following result enables us to compute these distances easily in practice:

\begin{thm}[Computing the orbit distance]
\label{computing_the_orbit_distance_theorem}
Given $X, Y \in \Orthogonal(d)^n$, let
\begin{equation}
\label{singular_value_decomposition_for_computation_of_orbit_distance}
 X Y\transpose = U \Sigma V\transpose
\end{equation}
be a singular value decomposition of $XY\transpose$ with $\Sigma = \Diag(\sigma_1, \dotsc, \sigma_d)$ and  $\sigma_1 \ge \dotsb \ge \sigma_d \ge 0$.  Then the orbit distance $\Oorbdist(X,Y)$ between $\Oorbit(X)$ and $\Oorbit(Y)$ in $\Orthogonal(d)^n$ is:
\begin{equation}
\label{closed_form_Oorbdist_computation}
\Oorbdist(X,Y) = \sqrt{2dn - 2\lVert XY\transpose \rVert_{*}},
\end{equation}
and a minimizer $G_{\Oorbit}^* \in \Orthogonal(d)$ realizing this optimal value in \eqref{Oorbit_distance_definition} is given by:
\begin{equation}
\label{optimal_registration_for_orthogonal_Procrustes_problem}
 G_{\Oorbit}^* = U V\transpose.
\end{equation}
If additionally $X,Y \in \SO(d)$, then the orbit distance $\Sorbdist(X,Y)$ between $\Sorbit(X)$ and $\Sorbit(Y)$ in $\SO(d)^n$ is given  by:
\begin{equation}
\label{closed_form_Sorbdist_computation}
 \Sorbdist(X,Y) = \sqrt{2dn - 2\tr(\varXi\Sigma)},
\end{equation}
where $\varXi$ is the matrix:
\begin{equation}
\label{varXi_matrix_definition}
 \varXi = \Diag\left(1, \dotsc, 1, \det(UV\transpose)\right) \in \R^{d \times d},
\end{equation}
and a minimizer $G_{\Sorbit}^* \in \SO(d)$ realizing this optimal value in \eqref{Sorbit_distance_definition} is given by:
\begin{equation}
\label{optimal_registration_for_special_orthogonal_Procrustes_problem}
G_{\Sorbit}^* = U \varXi V\transpose.
\end{equation}
\end{thm}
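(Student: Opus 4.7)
The plan is to reduce both orbit-distance computations to the classical (special) orthogonal Procrustes problem, which can then be solved in closed form using the SVD in \eqref{singular_value_decomposition_for_computation_of_orbit_distance}. First I would expand the squared Frobenius objective: since each block $X_i, Y_i \in \Orthogonal(d)$ satisfies $\lVert X_i \rVert_F^2 = \lVert GY_i \rVert_F^2 = d$, one has $\lVert X \rVert_F^2 = \lVert G \bullet Y \rVert_F^2 = dn$, so
\[
\lVert X - G \bullet Y \rVert_F^2 = 2dn - 2 \sum_{i=1}^n \tr(X_i\transpose G Y_i) = 2dn - 2\tr(G Y X\transpose).
\]
Thus minimizing $\lVert X - G \bullet Y \rVert_F$ over $G$ in either $\Orthogonal(d)$ or $\SO(d)$ is equivalent to maximizing $\tr(G Y X\transpose)$ over the same set. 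Substituting \eqref{singular_value_decomposition_for_computation_of_orbit_distance} and making the change of variables $H \triangleq U\transpose G V$ yields $\tr(G Y X\transpose) = \tr(U\transpose G V \Sigma) = \tr(H \Sigma) = \sum_{i=1}^d \sigma_i H_{ii}$. Because $U, V \in \Orthogonal(d)$, the map $G \mapsto H$ is a bijection of $\Orthogonal(d)$ onto itself, and it restricts to a bijection of $\lbrace G \in \Orthogonal(d) \mid \det(G) = +1 \rbrace$ onto $\lbrace H \in \Orthogonal(d) \mid \det(H) = \det(UV\transpose) \rbrace$, so the problem reduces to maximizing the linear functional $H \mapsto \tr(H\Sigma)$ over these two feasible sets, respectively.

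For the $\Orthogonal(d)$ case, orthonormality of the columns of $H$ forces $\lvert H_{ii} \rvert \le 1$, and combined with $\sigma_i \ge 0$ this gives $\tr(H\Sigma) \le \sum_i \sigma_i = \lVert XY\transpose \rVert_*$, with equality at $H = I_d$; that is, $G_{\Oorbit}^* = UV\transpose$. Plugging back into the expansion of $\lVert X - G \bullet Y \rVert_F^2$ yields both \eqref{closed_form_Oorbdist_computation} and \eqref{optimal_registration_for_orthogonal_Procrustes_problem}.

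The $\SO(d)$ case is the main obstacle, since the determinant constraint may exclude the unrestricted maximizer $H = I_d$. If $\det(UV\transpose) = +1$ nothing new is required, as $\varXi = I_d$ in \eqref{varXi_matrix_definition} and the $\Orthogonal(d)$ argument carries over verbatim. If $\det(UV\transpose) = -1$, I would invoke the classical special orthogonal Procrustes result of \cite{Hanson1981Analysis,Umeyama1991Least} (or equivalently, a short variational argument exploiting the ordering $\sigma_1 \ge \dotsb \ge \sigma_d \ge 0$) to conclude that the maximum of $\tr(H \Sigma)$ over $\lbrace H \in \Orthogonal(d) \mid \det(H) = -1 \rbrace$ equals $\sigma_1 + \dotsb + \sigma_{d-1} - \sigma_d = \tr(\varXi \Sigma)$, attained at $H = \Diag(1, \dotsc, 1, -1)$. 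This corresponds to $G_{\Sorbit}^* = U \varXi V\transpose$, which is feasible in $\SO(d)$ since $\det(G_{\Sorbit}^*) = \det(U)\det(\varXi)\det(V) = \det(UV\transpose)^2 = +1$. Unifying the two sub-cases under the uniform formula involving $\varXi$ then yields \eqref{closed_form_Sorbdist_computation} and \eqref{optimal_registration_for_special_orthogonal_Procrustes_problem}.
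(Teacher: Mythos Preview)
Your proposal is correct and follows essentially the same approach as the paper: expand $\lVert X - G\bullet Y\rVert_F^2 = 2dn - 2\langle G, XY\transpose\rangle_F$, recognize the resulting maximization over $G$ as the (special) orthogonal Procrustes problem, and solve it via the SVD \eqref{singular_value_decomposition_for_computation_of_orbit_distance}, citing \cite{Hanson1981Analysis,Umeyama1991Least} for the $\SO(d)$ case. The only difference is that you spell out the change of variables $H = U\transpose G V$ and the $\lvert H_{ii}\rvert \le 1$ bound explicitly for the $\Orthogonal(d)$ case, whereas the paper simply invokes the Procrustes solution directly; this is a cosmetic rather than substantive distinction.
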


\begin{proof}
Observe that
\begin{equation}
\label{squared_Frobenius_norm_error_for_orbit_distance_derivation}
  \begin{split}
\left \lVert X - G \bullet Y \right \rVert_F^2 &= \sum_{i = 1}^n \left \lVert  X_i - GY_i \right \rVert_F^2 \\
&= 2dn -2 \sum_{i = 1}^n \left\langle X_i, GY_i \right\rangle_F \\
&= 2dn - 2 \left \langle G, \sum_{i = 1}^n X_i Y_i\transpose \right \rangle_F \\
&= 2dn - 2 \left\langle G, X Y\transpose \right\rangle_F.
\end{split}
\end{equation}
Consequently, a minimizer $G_{\Oorbit}^* \in \Orthogonal(d)$ attaining the optimal value in \eqref{Oorbit_distance_definition} is determined by:
\begin{equation}
\label{maximizing_G_value_in_Oorbit_distance}
 G_{\Oorbit}^* \in \argmax_{G \in \Orthogonal(d)} \left\langle G, X Y\transpose \right\rangle_F.
\end{equation}
But now we may recognize \eqref{maximizing_G_value_in_Oorbit_distance} as an instance of the orthogonal Procrustes problem, with maximizer $G_{\Oorbit}^*$ given by \eqref{optimal_registration_for_orthogonal_Procrustes_problem} \cite{Hanson1981Analysis,Umeyama1991Least}.  Substituting  \eqref{singular_value_decomposition_for_computation_of_orbit_distance} and \eqref{optimal_registration_for_orthogonal_Procrustes_problem} into \eqref{squared_Frobenius_norm_error_for_orbit_distance_derivation} and simplifying the resulting expression using the orthogonal invariance of the Frobenius inner product then shows that the orbit distance $d_{\Oorbit}(X,Y)$ is:
\begin{equation}
 d_{\Oorbit}(X,Y) = \sqrt{2dn - 2\tr(\Sigma)} = \sqrt{2dn - 2\lVert XY\transpose \rVert_{*}},
\end{equation}
which is \eqref{closed_form_Oorbdist_computation}.  If additionally $X,Y \in \SO(d)$, then \eqref{squared_Frobenius_norm_error_for_orbit_distance_derivation} implies that a minimizer $G_{\Sorbit}^* \in \SO(d)$ attaining the optimal value in \eqref{Sorbit_distance_definition} is determined by:
\begin{equation}
 \label{maximizing_G_value_in_Sorbit_distance}
 G_{\Sorbit}^* \in \argmax_{G \in \SO(d)} \left\langle G, X Y\transpose \right\rangle_F.
\end{equation}
Equation \eqref{maximizing_G_value_in_Sorbit_distance} is an instance of the special orthogonal Procrustes problem, with a maximizer $G_{\Sorbit}^* \in \SO(d)$ given by \eqref{optimal_registration_for_special_orthogonal_Procrustes_problem} \cite{Hanson1981Analysis,Umeyama1991Least}.  Substituting \eqref{singular_value_decomposition_for_computation_of_orbit_distance} and \eqref{optimal_registration_for_special_orthogonal_Procrustes_problem} into \eqref{squared_Frobenius_norm_error_for_orbit_distance_derivation} and once again simplifying the resulting expression using the orthogonal invariance of the Frobenius inner product then shows that the orbit distance $d_{\Sorbit}(X,Y)$ is given by \eqref{closed_form_Sorbdist_computation}.
\end{proof}

\subsection{A sufficient condition for exact recovery in Problem \ref{Orthogonal_relaxation_of_the_MLE_problem}}

In this subsection we address points \ref{KKT_bullet} and \ref{sufficient_condition_for_unique_solution_bullet} in our roadmap, deriving sufficient conditions to ensure the recovery of a minimizer $\Ropt \in \Orthogonal(d)^n$ of Problem \ref{Orthogonal_relaxation_of_the_MLE_problem} by means of solving the dual semidefinite relaxation Problem \ref{dual_semidefinite_relaxation_for_SE3_synchronization_problem}.  Our approach is based upon exploiting the Lagrangian duality between Problem \ref{Orthogonal_relaxation_of_the_MLE_problem} and Problems \ref{primal_semidefinite_relaxation_for_SE3_synchronization_problem} and \ref{dual_semidefinite_relaxation_for_SE3_synchronization_problem} to construct a matrix $\certMat$ whose positive semidefiniteness serves as a \emph{certificate of optimality} for $\Zopt = {\Ropt}\transpose \Ropt$ as a solution of Problem \ref{dual_semidefinite_relaxation_for_SE3_synchronization_problem}.  

We begin by deriving the first-order necessary optimality conditions for \eqref{orthogonally_constrained_relaxed_program}:

\begin{lem}[First-order necessary optimality conditions for Problem \ref{Orthogonal_relaxation_of_the_MLE_problem}]
\label{KKT_conditions_for_primal_semidefinite_relaxation_lemma}
If $\Ropt \in \Orthogonal(d)^n$ is a minimizer of Problem \ref{Orthogonal_relaxation_of_the_MLE_problem}, then there exists a  matrix $\Lopt \in \SBD(d,n)$ such that
 \begin{equation}
 \label{primal_semidefinite_relaxation_first_order_necessary_condition_eq}
 (\nQ- \Lopt){\Ropt}\transpose = 0.
 \end{equation}
Furthermore, $\Lopt$ can be computed in closed form according to:
\begin{equation}
\label{closed_form_solution_for_Lambda_star}
\Lopt = \SymBlockDiag_d\left(\nQ {\Ropt}\transpose \Ropt \right).
\end{equation}
\end{lem}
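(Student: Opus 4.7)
The plan is to derive the stationarity condition via standard Lagrangian calculus on the extrinsic formulation \eqref{orthogonally_constrained_relaxed_program}, and then to invert that condition to get the closed-form expression \eqref{closed_form_solution_for_Lambda_star} for $\Lopt$ by exploiting the block-diagonal structure of $\Lopt$ together with the orthonormality constraints.

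First I would form the Lagrangian exactly as in \eqref{Lagrangian_formulation}, treating the Lagrange multipliers as elements of $\SBD(d,n)$ (one symmetric $d\times d$ block $\Lambda_i$ per orthonormality constraint $R_i\transpose R_i = I_d$). Differentiating $\Lag(R,\Lambda) = \tr\bigl((\nQ-\Lambda)R\transpose R\bigr) + \tr(\Lambda)$ with respect to $R$ and setting the derivative to zero yields $2R(\nQ-\Lambda) = 0$; transposing and using the fact that $\nQ-\Lambda$ is symmetric then gives the stationarity condition \eqref{primal_semidefinite_relaxation_first_order_necessary_condition_eq}. Since the constraints $R_i\transpose R_i = I_d$ are linearly independent at any feasible point (the rows of the constraint Jacobian are independent because the blocks $R_i$ have orthonormal columns), the usual KKT theorem applies and guarantees the existence of $\Lopt \in \SBD(d,n)$ satisfying $(\nQ - \Lopt){\Ropt}\transpose = 0$.

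To obtain the closed-form expression \eqref{closed_form_solution_for_Lambda_star}, I would right-multiply \eqref{primal_semidefinite_relaxation_first_order_necessary_condition_eq} by $\Ropt$, giving $\nQ{\Ropt}\transpose \Ropt = \Lopt {\Ropt}\transpose \Ropt$. Now I would extract the $(d\times d)$-block diagonal of both sides via $\SymBlockDiag_d$: since $\Lopt = \Diag(\Lopt_1,\dotsc,\Lopt_n)$ is block diagonal, the $ii$-block of $\Lopt {\Ropt}\transpose \Ropt$ is $\Lopt_i \Ropt_i\transpose \Ropt_i = \Lopt_i I_d = \Lopt_i$, which is already symmetric; thus $\SymBlockDiag_d(\Lopt {\Ropt}\transpose \Ropt) = \Lopt$, and equating block diagonals gives \eqref{closed_form_solution_for_Lambda_star}.

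The main subtlety to watch is the symmetrization when differentiating the Lagrangian: because the constraints $R_i\transpose R_i - I_d = 0$ are symmetric matrix-valued, the multipliers $\Lambda_i$ need only range over $\Sym(d)$, so the gradient computation must produce a symmetric ``multiplier slot,'' which is exactly why $\SymBlockDiag_d$ (rather than $\BDiag_d$) appears in \eqref{closed_form_solution_for_Lambda_star}. Beyond this bookkeeping, the proof is routine; no global or noise-regime hypotheses are invoked, only the feasibility constraints and the first-order stationarity of $\Ropt$.
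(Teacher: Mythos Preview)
Your proof is correct but takes a different route from the paper's. You argue extrinsically via classical KKT theory: form the Lagrangian \eqref{Lagrangian_formulation}, invoke a constraint qualification (LICQ) to guarantee existence of multipliers, differentiate to obtain the stationarity condition $(\nQ-\Lopt){\Ropt}\transpose=0$, and then recover the closed form for $\Lopt$ by right-multiplying by $\Ropt$ and reading off the block diagonal. The paper instead argues intrinsically: it views Problem~\ref{Orthogonal_relaxation_of_the_MLE_problem} as an unconstrained problem on the Riemannian manifold $\Orthogonal(d)^n$, writes the first-order condition as the vanishing of the Riemannian gradient $\grad F(\Ropt)=\proj_{\Ropt}\nabla F(\Ropt)$, and applies the explicit tangent-space projector $\proj_R(X)=X-R\,\SymBlockDiag_d(R\transpose X)$ to $\nabla F(R)=2R\nQ$. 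This yields $\Ropt\nQ - \Ropt\,\SymBlockDiag_d({\Ropt}\transpose\Ropt\nQ)=0$ directly, so the formula \eqref{closed_form_solution_for_Lambda_star} for $\Lopt$ drops out of the projection in one step rather than requiring a separate extraction. Your approach has the advantage of being self-contained and not requiring the reader to know the tangent-space projector for products of Stiefel manifolds, at the cost of needing to check LICQ; the paper's approach avoids any constraint-qualification argument and dovetails with the Riemannian optimization machinery used elsewhere (Section~\ref{Riemannian_optimization_method_subsection}), but presupposes familiarity with \eqref{rotation_manifold_orthogonal_projection_operator}.
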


\begin{proof}
If we consider \eqref{orthogonally_relaxed_primal_problem} as an \emph{unconstrained} minimization of the objective $F(R) \triangleq \tr(\nQ \rot\transpose \rot)$ over the Riemannian manifold $\Orthogonal(d)^n$, then the first-order necessary optimality condition is simply that the Riemannian gradient $\grad F$ must vanish at the minimizer $\Ropt$:
\begin{equation}
\label{intrinsic_first_order_necessary_condition}
 \grad F(\Ropt) = 0.
\end{equation}
Furthermore, if we consider $\Orthogonal(d)^n$ as an embedded submanifold of $\R^{d \times dn}$, then this embedding induces a simple relation between the Riemannian gradient $\grad F$ of $F$ viewed as a function restricted to $\Orthogonal(d)^n \subset \R^{d \times dn}$ and $\nabla F$, the gradient of $F$ considered as a function on the ambient Euclidean space $\R^{d \times dn}$.  Specifically, we have:
\begin{equation}
\label{Euclidean_and_Riemannian_gradient_relation_for_proof_of_certification_theorem}
\grad F(R) = \proj_R \nabla F(R),
\end{equation}
where $\proj_R \colon T_R(\R^{d \times dn}) \to T_R(\Orthogonal(d)^n)$ is the orthogonal projection operator onto the tangent space of $\Orthogonal(d)^n$ at $R$  \cite[eq.\ (3.37)]{Absil2009Optimization}; this latter is given  explicitly by:
\begin{equation}
\label{rotation_manifold_orthogonal_projection_operator}
\begin{split}
 \proj_R &\colon T_R\left(\R^{d \times dn} \right)\to T_R\left(\Orthogonal(d)^n \right) \\
 \proj_R(X) &= X - R \SymBlockDiag_d(R\transpose X).
\end{split}
\end{equation}
Straightforward differentiation shows that the Euclidean gradient is $\nabla F(R) = 2R\nQ$, and consequently \eqref{intrinsic_first_order_necessary_condition}--\eqref{rotation_manifold_orthogonal_projection_operator} imply that:
\begin{equation}
 \label{derivation_of_Lagrange_multiplier_in_closed_form_eq}
 0 = \proj_{\Ropt} \nabla F(\Ropt) = 2\Ropt\nQ - 2\Ropt \SymBlockDiag_d\left({\Ropt}\transpose \Ropt \nQ\right).
\end{equation}
Dividing both sides of \eqref{derivation_of_Lagrange_multiplier_in_closed_form_eq} by $2$ and taking the transpose produces \eqref{primal_semidefinite_relaxation_first_order_necessary_condition_eq}, using the definition of $\Lopt$ given in \eqref{closed_form_solution_for_Lambda_star}.
\end{proof}

Despite its simplicity, it turns out that Lemma \ref{KKT_conditions_for_primal_semidefinite_relaxation_lemma} is actually already enough to enable the derivation of sufficient conditions to ensure the exactness of the semidefinite relaxation Problem \ref{dual_semidefinite_relaxation_for_SE3_synchronization_problem} with respect to Problem \ref{Orthogonal_relaxation_of_the_MLE_problem}.  Comparing \eqref{primal_semidefinite_relaxation_first_order_necessary_condition_eq} with the \emph{extrinsic} formulation \eqref{orthogonally_constrained_relaxed_program} of Problem \ref{Orthogonal_relaxation_of_the_MLE_problem}, we may recognize $\Lopt$ as nothing more than a matrix of Lagrange multipliers corresponding to the orthogonality constraints $\rot_i\transpose \rot_i = I_d$.  Consequently, in the case that exactness holds between Problems \ref{Orthogonal_relaxation_of_the_MLE_problem} and \ref{dual_semidefinite_relaxation_for_SE3_synchronization_problem} (i.e., that $\Zopt = {\Ropt}\transpose \Ropt$ is a minimizer of Problem \ref{dual_semidefinite_relaxation_for_SE3_synchronization_problem}), strong duality obtains \emph{a fortiori} between Problems \ref{Orthogonal_relaxation_of_the_MLE_problem} and \ref{primal_semidefinite_relaxation_for_SE3_synchronization_problem}, and therefore $\Lopt$ also comprises an optimal solution for the Lagrangian relaxation \eqref{primal_semidefinite_relaxation_for_SE3_synchronization_optimization} (cf.\ e.g.\ \cite[Sec.\ 5.5.3]{Boyd2004Convex}).  It follows that $\nQ - \Lopt \succeq 0$ (since $\Lopt$ is \emph{a fortiori} feasible for Problem \ref{primal_semidefinite_relaxation_for_SE3_synchronization_problem} if it is optimal), and $(\nQ - \Lopt) \Zopt = 0$ (from the definition of $\Zopt$ and \eqref{primal_semidefinite_relaxation_first_order_necessary_condition_eq}).  But now observe that these last two conditions are precisely the first-order necessary and sufficient conditions for $\Zopt$ to be an optimal solution of Problem \ref{dual_semidefinite_relaxation_for_SE3_synchronization_problem} (cf.\ equation (33) of \cite{Vandenberghe1996Semidefinite}); furthermore, they provide a \emph{closed-form expression} for a KKT certificate for $\Zopt$ (namely, $\nQ - \Lopt$) in terms of a minimizer $\Ropt$ of Problem \ref{Orthogonal_relaxation_of_the_MLE_problem} using \eqref{closed_form_solution_for_Lambda_star}.  The utility of this expression is that, although it was originally derived under the \emph{assumption} of exactness, it can also be exploited to derive a \emph{sufficient condition} for same, as shown in the next theorem.

\begin{thm}[A sufficient condition for exact recovery in Problem \ref{Orthogonal_relaxation_of_the_MLE_problem}]
\label{Sufficient_conditions_for_exact_recovery_for_orthogonally_relaxed_MLE_prop}
Let $\Ropt \in \Orthogonal(d)^n$ be a minimizer of Problem \ref{Orthogonal_relaxation_of_the_MLE_problem} with corresponding Lagrange multiplier matrix $\Lopt \in \SBD(d,n)$ as in Lemma \ref{KKT_conditions_for_primal_semidefinite_relaxation_lemma}, and define
\begin{equation}
\label{candidate_certificate_matrix_definition}
 \certMat  \triangleq \nQ - \Lopt.
\end{equation}
If $\certMat \succeq 0$, then $\Zopt = {\Ropt}\transpose \Ropt$ is a minimizer of Problem \ref{dual_semidefinite_relaxation_for_SE3_synchronization_problem}.  If additionally $\rank(\certMat) = dn - d$, then $\Zopt$ is the \emph{unique} minimizer of Problem \ref{dual_semidefinite_relaxation_for_SE3_synchronization_problem}.
\end{thm}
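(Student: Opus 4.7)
My strategy is to recognize $(\Lopt, \Zopt = {\Ropt}\transpose \Ropt)$ as a candidate primal-dual pair for the SDP duality between Problems \ref{primal_semidefinite_relaxation_for_SE3_synchronization_problem} and \ref{dual_semidefinite_relaxation_for_SE3_synchronization_problem} and verify the KKT conditions for this pair. For part (i), $\Lopt \in \SBD(d,n)$ by the closed-form expression \eqref{closed_form_solution_for_Lambda_star}, and $\nQ - \Lopt = \certMat \succeq 0$ by hypothesis, so $\Lopt$ is primal-feasible for Problem \ref{primal_semidefinite_relaxation_for_SE3_synchronization_problem}; on the dual side, $\Zopt \succeq 0$ as a Gram matrix and $(\Zopt)_{ii} = R_i^{*\transpose} R_i^* = I_d$ since $R_i^* \in \Orthogonal(d)$, so $\Zopt$ is feasible for Problem \ref{dual_semidefinite_relaxation_for_SE3_synchronization_problem}; and complementary slackness $\certMat \Zopt = 0$ is immediate from the identity $\certMat {\Ropt}\transpose = 0$ in Lemma \ref{KKT_conditions_for_primal_semidefinite_relaxation_lemma}. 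Together with the strong duality already established in Appendix \ref{Deriving_the_dual_semidefinite_relaxation_from_the_primal_subsection}, these three conditions certify $(\Lopt, \Zopt)$ as a primal-dual optimal pair, giving (i).

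\textbf{Uniqueness.} For part (ii), I would let $\tilde Z$ be any minimizer of Problem \ref{dual_semidefinite_relaxation_for_SE3_synchronization_problem}. Because $\Lopt$ is block-diagonal and $\tilde Z$ has identity diagonal blocks, $\tr(\Lopt \tilde Z) = \tr(\Lopt)$; combined with $\tr(\nQ \tilde Z) = \SDPval = \tr(\Lopt)$ (from strong duality and part (i)), this gives $\tr(\certMat \tilde Z) = 0$, and since $\certMat, \tilde Z \succeq 0$ the standard fact that $\langle A, B \rangle_F = 0 \Rightarrow AB = 0$ for PSD $A, B$ yields $\certMat \tilde Z = 0$, so $\image(\tilde Z) \subseteq \ker(\certMat)$. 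The rank hypothesis $\rank(\certMat) = dn - d$ gives $\dim \ker(\certMat) = d$, and since $\Ropt$ has full row rank $d$ (each $R_i^* \in \Orthogonal(d)$ being invertible), $\image({\Ropt}\transpose)$ is a $d$-dimensional subspace of $\ker(\certMat)$ and therefore equals it. Writing a PSD factorization $\tilde Z = U U\transpose$, each column of $U$ lies in $\image({\Ropt}\transpose)$, so $U = {\Ropt}\transpose B$ for some matrix $B$, yielding $\tilde Z = {\Ropt}\transpose S \Ropt$ with $S \triangleq B B\transpose \in \Sym(d)$, $S \succeq 0$. Imposing the constraint $\tilde Z_{ii} = R_i^{*\transpose} S R_i^* = I_d$ for any single $i$ and using $R_i^* \in \Orthogonal(d)$ forces $S = I_d$, hence $\tilde Z = {\Ropt}\transpose \Ropt = \Zopt$.

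\textbf{Anticipated difficulty.} Everything apart from the uniqueness argument is a direct verification of SDP optimality conditions and is therefore routine given Lemma \ref{KKT_conditions_for_primal_semidefinite_relaxation_lemma}, the manifest feasibility of $\Zopt$ for Problem \ref{dual_semidefinite_relaxation_for_SE3_synchronization_problem}, and the strong duality established in Appendix \ref{Deriving_the_dual_semidefinite_relaxation_from_the_primal_subsection}. The only step that requires any real thought is pinning down $\ker(\certMat) = \image({\Ropt}\transpose)$ via the rank hypothesis and then exploiting the identity-block constraint on $\tilde Z$ to collapse the remaining free symmetric parameter $S$ down to $I_d$; this is the step where the specific block structure of Problem \ref{dual_semidefinite_relaxation_for_SE3_synchronization_problem} (as opposed to that of a generic SDP) is essential.
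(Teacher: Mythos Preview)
Your proof is correct. Part (i) follows the same route as the paper: verify that $(\Lopt, \Zopt)$ is a primal-dual feasible pair satisfying complementary slackness, and invoke the strong duality established in Appendix \ref{Deriving_the_dual_semidefinite_relaxation_from_the_primal_subsection}.

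For part (ii), your argument is genuinely different from the paper's and more elementary. The paper appeals to the SDP nondegeneracy theory of Alizadeh, Haeberly, and Overton: it shows that $\Lopt$ is a \emph{dual nondegenerate} solution of Problem \ref{primal_semidefinite_relaxation_for_SE3_synchronization_problem} by verifying the surjectivity condition $\{U\transpose \varXi U \mid \varXi \in \SBD(d,n)\} = \Sym(d)$ with $U = \frac{1}{\sqrt{n}}{\Ropt}\transpose$, and then invokes a general theorem guaranteeing that dual nondegeneracy implies uniqueness of the primal optimum. Your approach instead proceeds directly: you use strong duality and the block-diagonal structure of $\Lopt$ to force $\tr(\certMat \tilde Z) = 0$ for any optimal $\tilde Z$, conclude $\image(\tilde Z) \subseteq \ker(\certMat) = \image({\Ropt}\transpose)$, parametrize $\tilde Z = {\Ropt}\transpose S \Ropt$, and collapse $S$ to $I_d$ using a single identity-block constraint. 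This is self-contained and avoids importing the nondegeneracy machinery; the paper's route, on the other hand, situates the result within a general framework and makes clear that the phenomenon is an instance of a well-understood SDP regularity condition. Both arguments ultimately hinge on the same structural fact---that the $d$-dimensional kernel of $\certMat$ is exactly $\image({\Ropt}\transpose)$ and that the block-identity constraints of Problem \ref{dual_semidefinite_relaxation_for_SE3_synchronization_problem} are rich enough to pin down the remaining $\Sym(d)$ degree of freedom---but yours makes this explicit rather than delegating it to an external theorem.
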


\begin{proof}
Since $C = \nQ - \Lopt \succeq 0$ by hypothesis, and equation \eqref{primal_semidefinite_relaxation_first_order_necessary_condition_eq} of Lemma \ref{KKT_conditions_for_primal_semidefinite_relaxation_lemma} implies that
\begin{equation}
\label{complementary_slackness_condition_for_certification_of_RtR_as_solution_of_Problem_6}
  (\nQ - \Lopt)\Zopt = (\nQ - \Lopt)\left({\Ropt}\transpose \Ropt \right) = 0,
\end{equation}
$\Lopt$ and $\Zopt$ satisfy the necessary and sufficient conditions characterizing primal-dual pairs of optimal solutions for the strictly feasible primal-dual pair of semidefinite programs \eqref{primal_semidefinite_relaxation_for_SE3_synchronization_optimization} and \eqref{dual_semidefinite_relaxation_for_SE3_synchronization_optimization} (cf.\ Appendix \ref{Deriving_the_dual_semidefinite_relaxation_from_the_primal_subsection} and  Theorem 3.1 in \cite{Vandenberghe1996Semidefinite}).  In other words, $\certMat  \succeq 0$ \emph{certifies} the optimality of $\Zopt$ as a solution of Problem \ref{dual_semidefinite_relaxation_for_SE3_synchronization_problem}.

Now suppose further that $\rank(\certMat) = dn - d$; we establish that $\Zopt$ is the \emph{unique} solution of Problem \ref{dual_semidefinite_relaxation_for_SE3_synchronization_problem} using nondegeneracy results from \cite{Alizadeh1997Nondegeneracy}.  Specifically, we observe that the equivalent formulations \eqref{dual_semidefinite_relaxation_in_standard_form} and \eqref{primal_semidefinite_relaxation_in_standard_form} of Problems \ref{dual_semidefinite_relaxation_for_SE3_synchronization_problem} and \ref{primal_semidefinite_relaxation_for_SE3_synchronization_problem} derived in Appendix \ref{Deriving_the_dual_semidefinite_relaxation_from_the_primal_subsection} match the forms of the primal and dual semidefinite programs given in equations (2) and (3) of \cite{Alizadeh1997Nondegeneracy}, respectively.  Consequently, Theorem 10 of \cite{Alizadeh1997Nondegeneracy} guarantees that Problem \ref{dual_semidefinite_relaxation_for_SE3_synchronization_problem} has a unique solution provided that we can exhibit a dual nondegenerate solution of Problem \ref{primal_semidefinite_relaxation_for_SE3_synchronization_problem}.  Since we have already identified  $\Lopt$ as a solution of Problem \ref{primal_semidefinite_relaxation_for_SE3_synchronization_problem} (via \eqref{complementary_slackness_condition_for_certification_of_RtR_as_solution_of_Problem_6}), it suffices to show that $\Lopt$ is dual nondegenerate.  To that end, let
\begin{equation}
\label{eigendecomposition_of_primal_certificate_of_optimality_for_proof_of_uniqueness_of_dual_relaxation_solution}
\nQ - \Lopt = 
\begin{pmatrix}
U & V
\end{pmatrix}
\Diag(0, 0, 0, \sigma_{d+1}, \dotsc, \sigma_{dn}) 
 \begin{pmatrix}
U & V 
\end{pmatrix}\transpose
\end{equation}
be an eigendecomposition of $\nQ - \Lopt$ as in equation (16) of \cite{Alizadeh1997Nondegeneracy} (with $\sigma_k > 0$ for $k \ge d + 1$, $U \in \R^{dn \times d}$, and $V \in \R^{dn \times (dn - d)}$).   Theorem 9 of \cite{Alizadeh1997Nondegeneracy} states that $\Lopt$ is dual nondegenerate if and only if
\begin{equation}
\label{sufficient_condition_for_uniqueness_of_dual_certificate_for_Lagrangian_dual_relaxation}
 \left \lbrace U\transpose \varXi U \mid \varXi \in \SBD(d,n)  \right \rbrace = \Sym(d).
\end{equation}
Now the matrix $U$ appearing in \eqref{eigendecomposition_of_primal_certificate_of_optimality_for_proof_of_uniqueness_of_dual_relaxation_solution} can be characterized as a matrix whose columns form an orthonormal basis for the $d$-dimensional null space of $\nQ - \Lopt$.  But equation \eqref{primal_semidefinite_relaxation_first_order_necessary_condition_eq} shows that the columns of ${\Ropt}\transpose$ span this same subspace, and are pairwise orthogonal since ${\Ropt}\transpose$ is composed of orthogonal blocks.  Consequently, without loss of generality we may take $U = \frac{1}{\sqrt{n}} {\Ropt} \transpose$ in \eqref{eigendecomposition_of_primal_certificate_of_optimality_for_proof_of_uniqueness_of_dual_relaxation_solution}.  Now we can write the left-hand side of \eqref{sufficient_condition_for_uniqueness_of_dual_certificate_for_Lagrangian_dual_relaxation} more explicitly as:
\begin{equation}
\label{explicit_form_for_span_of_symmetric_product_for_proof_of_uniqueness_of_solution_for_dual_semidefinite_relaxation}
 \left \lbrace U\transpose \varXi U \mid \varXi \in \SBD(d,n)  \right \rbrace =  \left \lbrace  \frac{1}{n} \sum_{i = 1}^n \Ropt_i \varXi_i {\Ropt_i}\transpose \mid \varXi \in \SBD(d,n) \right \rbrace,
\end{equation}
and it is immediate from \eqref{explicit_form_for_span_of_symmetric_product_for_proof_of_uniqueness_of_solution_for_dual_semidefinite_relaxation} that given any $S \in \Sym(d)$, we have $S = U\transpose \varXi U$ for $\varXi = \Diag(n {\Ropt_1}\transpose S {\Ropt_1}, 0, \dotsc, 0)$.  This shows that \eqref{sufficient_condition_for_uniqueness_of_dual_certificate_for_Lagrangian_dual_relaxation} holds, so $\Lopt$ is a dual nondegenerate solution of Problem \ref{primal_semidefinite_relaxation_for_SE3_synchronization_problem}, and we conclude that $\Zopt$ is indeed the \emph{unique} minimizer of Problem \ref{dual_semidefinite_relaxation_for_SE3_synchronization_problem}, as claimed.
\end{proof}

In short, Theorem \ref{Sufficient_conditions_for_exact_recovery_for_orthogonally_relaxed_MLE_prop} enables us to reduce the question of Problem \ref{dual_semidefinite_relaxation_for_SE3_synchronization_problem}'s exactness to the  problem of verifying the positive semidefiniteness of a certain matrix $\certMat$ that can be constructed from an optimal solution of Problem \ref{Orthogonal_relaxation_of_the_MLE_problem}.  The remainder of this appendix is devoted to establishing conditions that are sufficient to guarantee that this latter (much simpler) criterion is satisfied.

\subsection{The noiseless case}

As our first application of Theorem \ref{Sufficient_conditions_for_exact_recovery_for_orthogonally_relaxed_MLE_prop}, in this subsection we prove that the semidefinite relaxation Problem \ref{dual_semidefinite_relaxation_for_SE3_synchronization_problem} is always exact in the (highly idealized) case that the measurements $\npose_{ij}$ in \eqref{probabilistic_generative_model_for_noisy_observations} are noiseless.  In addition to providing a baseline sanity check for the feasibility of our overall strategy, our analysis of this idealized case will also turn out to admit a straightforward generalization that suffices to prove Proposition \ref{A_sufficient_condition_for_exact_recovery_prop}.

To that end, let $\TrueRotConLap$ and $\tQtran$ denote the rotational and translational data matrices of the form appearing in Problem \ref{Simplified_maximum_likelihood_estimation_for_SE3_synchronization} constructed using the true (latent) relative transforms $\tpose_{ij}$ appearing in \eqref{probabilistic_generative_model_for_noisy_observations}.  The following pair of lemmas characterize several important properties of these matrices:

\begin{lem}[Exact rotational connection Laplacians]
\label{exact_rotational_connection_Laplacian_lemma}
Let $\TrueRotConLap$ be the rotational connection Laplacian constructed using the true \emph{(}latent\emph{)} relative rotations $\trot_{ij} \triangleq \trot_i^{-1} \trot_j$ in \eqref{probabilistic_generative_model_for_noisy_observations}, $\RotW$ the corresponding rotational weight graph, and $\trot \in \SO(d)^n$ the matrix of true \emph{(}latent\emph{)} rotational states.  Then the following hold:
 \begin{enumerate}
  \item [$(i)$]  $\LapRotW \otimes I_d = S \TrueRotConLap S^{-1}$ for $S = \Diag(\trot_1, \dotsc, \trot_n) \in \R^{dn \times dn}$;
    \item [$(ii)$]  $\lambda_{d+1}(\TrueRotConLap) = \lambda_2(\LapRotW)$;
  \item [$(iii)$]  $\ker(\TrueRotConLap) = \lbrace \trot\transpose v \mid v \in \R^d \rbrace$.
 \end{enumerate}
\end{lem}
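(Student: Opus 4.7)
The proof of this lemma is essentially a direct verification; each of the three claims will follow from elementary computations, with the only real content being the observation that conjugation by $S$ strips the rotation data out of $\TrueRotConLap$ and exposes the underlying graph Laplacian.

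For part $(i)$, I would simply compute the $(i,j)$-block of $S\TrueRotConLap S^{-1}$, which is $\trot_i\,\TrueRotConLap_{ij}\,\trot_j^{-1}$, and check it against the definition of $\LapRotW \otimes I_d$ case by case using the definition of $\TrueRotConLap$ in \eqref{connection_Laplacian_definition}. On the diagonal, $\trot_i(\rotdeg_i I_d)\trot_i^{-1} = \rotdeg_i I_d$; for $\edge \in \Edges$, using $\trot_{ij} = \trot_i^{-1}\trot_j$ gives $\trot_i(-\kappa_{ij}\trot_i^{-1}\trot_j)\trot_j^{-1} = -\kappa_{ij} I_d$; and off-graph blocks vanish. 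These match the blocks of $\LapRotW \otimes I_d$ exactly, proving $(i)$.

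Part $(ii)$ follows immediately from $(i)$: similar matrices share the same spectrum, and the spectrum of $\LapRotW \otimes I_d$ consists of each eigenvalue of $\LapRotW$ repeated $d$ times. Since $G$ is connected by assumption, $\LapRotW$ has a simple zero eigenvalue and $\lambda_2(\LapRotW) > 0$, so the eigenvalues of $\LapRotW \otimes I_d$ in nondecreasing order are $0$ with multiplicity $d$, followed by $\lambda_2(\LapRotW)$, giving $\lambda_{d+1}(\TrueRotConLap) = \lambda_2(\LapRotW)$.

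Part $(iii)$ uses the fact that $\ker(\TrueRotConLap) = S^{-1}\ker(\LapRotW \otimes I_d)$, again by $(i)$. Since $G$ is connected, $\ker(\LapRotW) = \Span\{\ones_n\}$, so $\ker(\LapRotW \otimes I_d) = \{\ones_n \otimes v \mid v \in \R^d\}$, i.e.\ the stacked vectors $(v\transpose, \dotsc, v\transpose)\transpose$. Applying $S^{-1} = \Diag(\trot_1\transpose, \dotsc, \trot_n\transpose)$ (since each $\trot_i \in \SO(d)$) yields the stacked vector $(\trot_1\transpose v, \dotsc, \trot_n\transpose v)\transpose$, which is precisely $\trot\transpose v$ under the definition $\trot = (\trot_1, \dotsc, \trot_n) \in \R^{d \times dn}$. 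There is no real obstacle here; the only subtlety is bookkeeping with the Kronecker-product block conventions to confirm that $S^{-1}(\ones_n \otimes v) = \trot\transpose v$ as vectors in $\R^{dn}$.
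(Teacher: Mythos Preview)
Your proof is correct and follows essentially the same approach as the paper for parts $(i)$ and $(ii)$. For part $(iii)$ you take a slightly cleaner route than the paper: you read off the kernel directly from the similarity relation $\ker(\TrueRotConLap) = S^{-1}\ker(\LapRotW \otimes I_d)$, whereas the paper instead verifies $\TrueRotConLap \trot\transpose = 0$ by hand to get one inclusion and then appeals to $(ii)$ for a dimension count to get equality---both arguments are straightforward, but yours avoids the separate dimension-counting step.
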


\begin{proof}
A direct computation using the definitions of $S$ in claim (i) and of $\TrueRotConLap$ in \eqref{connection_Laplacian_definition} shows that the $(i,j)$-block of the product $S \TrueRotConLap S^{-1}$ is given by:
\begin{equation}
 \left(S \TrueRotConLap S^{-1}\right)_{ij} = 
 \begin{cases}
\rotdeg_i I_d, & i =j, \\
-\kappa_{ij} I_d, & \edge \in \Edges, \\
0_{d \times d}, & \edge \notin \Edges,
 \end{cases}
\end{equation}
which we recognize as the $(d \times d)$-block description of $\LapRotW \otimes I_d$; this proves claim (i).  For claim (ii), we observe that $\LapRotW \otimes I_d$ and $\TrueRotConLap$ have the same spectrum (since claim (i) shows that they are similar), and the spectrum of $\LapRotW \otimes I_d$ consists of $d$ copies of the spectrum of $\LapRotW$ (this follows from the fact that the spectra of   $A \in \R^{d_1 \times d_1}$, $B \in \R^{d_2 \times d_2}$ and $A \otimes B$ are related by $\lambda(A \otimes B) = \lbrace \lambda_i(A) \lambda_j(B) \mid i \in [d_1], \: j \in [d_2] \rbrace$, see e.g.\ \cite[Theorem 4.2.12]{Horn1991Topics}).  For claim (iii), another direct computation using definition \eqref{connection_Laplacian_definition} shows that $\TrueRotConLap \trot\transpose = 0$, and therefore that $\image(\trot\transpose) = \lbrace \trot\transpose v \mid v \in \R^d \rbrace \subseteq \ker(\TrueRotConLap)$; furthermore, $\dim(\image(\trot \transpose)) = d$ since $\rank(\trot\transpose) = d$ (as it has $d$ orthonormal columns).  On the other hand, claim (ii) shows that $\lambda_{d+1}(\TrueRotConLap) = \lambda_2(\LapRotW) > 0$ since $G$ is connected, and therefore $\dim(\ker(\TrueRotConLap)) \le d$; consequently, $\image(R\transpose)$ is \emph{all} of $\ker(\TrueRotConLap)$.
\end{proof}

\begin{lem}[Orthogonal projections of exact measurements]
\label{exact_translational_projector_lemma}
Let $\tT \in \R^{m \times dn}$ denote the data matrix of the form \eqref{nT_matrix_definition} constructed using the true \emph{(}latent\emph{)} values of the translational measurements $\ttran_{ij}$ in \eqref{probabilistic_generative_model_for_noisy_observations} and $\trot \in \SO(d)^n$ the matrix of true \emph{(}latent\emph{)} rotational states.  Then $\tranPrecisions^{\frac{1}{2}} \tT \trot\transpose \in \ker \OrthoProjMatrix$.
\end{lem}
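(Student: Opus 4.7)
The plan is to show that each column of $\tranPrecisions^{\frac{1}{2}} \tT \trot\transpose$ lies in the orthogonal complement of $\ker(\incMat(\directed{G}) \tranPrecisions^{\frac{1}{2}})$, since $\OrthoProjMatrix$ projects onto that kernel. Equivalently, I need to exhibit a matrix $X \in \R^{n \times d}$ such that
\begin{equation*}
\tranPrecisions^{\frac{1}{2}} \tT \trot\transpose = \tranPrecisions^{\frac{1}{2}} \incMat(\directed{G})\transpose X,
\end{equation*}
since $\image\bigl(\tranPrecisions^{\frac{1}{2}} \incMat(\directed{G})\transpose\bigr)$ is precisely the orthogonal complement we need. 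Cancelling $\tranPrecisions^{\frac{1}{2}}$, this reduces to showing $\tT \trot\transpose \in \image(\incMat(\directed{G})\transpose)$, a condition that depends only on the (unweighted) graph structure and the noiseless translational data.

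The key identity I will use is the one obtained by applying the $\SE(d)$ group operations \eqref{SE3_group_operations} to the definition $\tpose_{ij} = \tpose_i^{-1} \tpose_j$: namely, $\ttran_{ij} = \trot_i\transpose(\ttran_j - \ttran_i)$, equivalently $\trot_i \ttran_{ij} = \ttran_j - \ttran_i$. I will then compute the $e$-th block row of $\tT \trot\transpose$ directly from definition \eqref{nT_matrix_definition}: for $e = (i,j) \in \dEdges$ the only nonzero block in row $e$ of $\tT$ is $\tT_{ei} = -\ttran_{ij}\transpose$, so
\begin{equation*}
(\tT \trot\transpose)_e = -\ttran_{ij}\transpose \trot_i\transpose = -(\trot_i \ttran_{ij})\transpose = (\ttran_i - \ttran_j)\transpose.
\end{equation*}

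Finally, I will compare this with $\incMat(\directed{G})\transpose X$. Taking $X \in \R^{n \times d}$ to be the matrix whose $i$-th row is $-\ttran_i\transpose$, the definition \eqref{incidence_matrix_definition} of the incidence matrix gives, for $e = (i,j) \in \dEdges$,
\begin{equation*}
(\incMat(\directed{G})\transpose X)_e = -X_i + X_j = \ttran_i\transpose - \ttran_j\transpose = (\tT \trot\transpose)_e,
\end{equation*}
so $\tT \trot\transpose = \incMat(\directed{G})\transpose X$, and left-multiplying by $\tranPrecisions^{\frac{1}{2}}$ yields the desired inclusion $\tranPrecisions^{\frac{1}{2}} \tT \trot\transpose \in \image(\tranPrecisions^{\frac{1}{2}} \incMat(\directed{G})\transpose) = \ker \OrthoProjMatrix$. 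There is no real obstacle here: the argument is a direct block-by-block verification once the noiseless compatibility condition $\trot_i \ttran_{ij} = \ttran_j - \ttran_i$ is plugged in. Intuitively, the statement just says that noiseless translational measurements are (weighted) differences across the edges of $\directed{G}$, and hence live in the range of the incidence matrix.
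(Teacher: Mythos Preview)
Your proposal is correct and follows essentially the same route as the paper: both compute the $e$-th row of $\tT\trot\transpose$ using the noiseless relation $\trot_i\ttran_{ij} = \ttran_j - \ttran_i$, identify the result as $\incMat(\directed{G})\transpose$ applied to the matrix of (negated) true translations, and conclude via $\image(\tranPrecisions^{\frac{1}{2}}\incMat(\directed{G})\transpose) = \ker(\incMat(\directed{G})\tranPrecisions^{\frac{1}{2}})^{\perp} = \ker\OrthoProjMatrix$.
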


\begin{proof}
It follows from \eqref{SE3_group_operations} and the definitions of $\ttran_{ij}$ in \eqref{probabilistic_generative_model_for_noisy_observations} and $\tT$ in \eqref{nT_matrix_definition} that the product
$ \tT \trot\transpose \in \R^{m \times d}$ is a $(1 \times d)$-block structured matrix with rows indexed by the edges $\dedge \in \dEdges$ and whose $\dedge$-th row is given by:
\begin{equation}
\label{row_elements_of_tranPrecisions_tT_trot_transpose}
 \left(  \tT \trot \transpose \right)_{\dedge} = -\ttran_{ij}\transpose \trot_{i}\transpose = -\left(\trot_{i} \ttran_{ij}\right)\transpose = \ttran_i\transpose - \ttran_j\transpose .
\end{equation}
Now observe that the quantities  $\ttran_i\transpose - \ttran_j\transpose$ associated with each edge $(i,j) \in \dEdges$ in the product $\tT \trot\transpose$ are realizable as differences of values $\ttran_i, \ttran_j$ assigned to the endpoints of $(i,j)$, i.e., the columns of $\tT \trot \transpose$ are realizable as \emph{potential differences} associated with the \emph{potential function} assigning $\ttran_i\transpose$ to vertex $i \in \Nodes$ for all $i$   \cite{Biggs1997Algebraic}.  Formally, we have from \eqref{row_elements_of_tranPrecisions_tT_trot_transpose} that
\begin{equation}
\tT \trot\transpose = \incMat(\directed{G}) \transpose
\begin{pmatrix}
 -\ttran_1\transpose \\
 \vdots \\
 -\ttran_n\transpose
\end{pmatrix},
\end{equation}
so that the columns of $\tT\trot\transpose$ lie in $\image(\incMat(\directed{G})\transpose)$.  It follows that $\tranPrecisions^{\frac{1}{2}} \tT \trot\transpose \in \image(\tranPrecisions^{\frac{1}{2}}\incMat(\directed{G})\transpose)$.  But $\image(\tranPrecisions^{\frac{1}{2}}\incMat(\directed{G})\transpose) \perp 
\ker(\incMat(\directed{G}) \tranPrecisions^{\frac{1}{2}})$ by the Fundamental Theorem of Linear Algebra, and therefore $\tranPrecisions^{\frac{1}{2}} \tT \trot\transpose$ lies in the kernel of the orthogonal projector $\OrthoProjMatrix$, as claimed.
\end{proof}

With the aid of Lemmas \ref{exact_rotational_connection_Laplacian_lemma} and \ref{exact_translational_projector_lemma}, it is now straightforward to show that Problem \ref{dual_semidefinite_relaxation_for_SE3_synchronization_problem} is always exact in the noiseless case:

\begin{thm}[Problem \ref{dual_semidefinite_relaxation_for_SE3_synchronization_problem} is exact in the noiseless case]
\label{exactness_of_relaxation_in_the_zero_noise_case_thm}
Let $\tQ$ be the data matrix of the form \eqref{Q_quadratic_form_definition} constructed using the true \emph{(}latent\emph{)} relative transforms $\tpose_{ij}$ in \eqref{probabilistic_generative_model_for_noisy_observations}.  Then $\Zopt = {\trot}\transpose \trot$ is the unique solution of the instance of Problem \ref{dual_semidefinite_relaxation_for_SE3_synchronization_problem} parameterized by $\tQ$.
\end{thm}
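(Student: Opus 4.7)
My plan is to apply Theorem \ref{Sufficient_conditions_for_exact_recovery_for_orthogonally_relaxed_MLE_prop} directly, with the candidate minimizer of the orthogonal relaxation taken to be the true latent rotations $\trot$ themselves.  The strategy has three steps: (i) verify that $\trot$ is a minimizer of Problem \ref{Orthogonal_relaxation_of_the_MLE_problem} with associated Lagrange multiplier $\Lopt = 0$, so that the certificate matrix of Theorem \ref{Sufficient_conditions_for_exact_recovery_for_orthogonally_relaxed_MLE_prop} is simply $\certMat = \tQ$; (ii) verify $\tQ \succeq 0$; and (iii) verify $\rank(\tQ) = dn - d$.  All three follow by combining the decomposition $\tQ = \TrueRotConLap + \tQtran$ coming from \eqref{Q_quadratic_form_definition}--\eqref{nQtran_alternative_form} with Lemmas \ref{exact_rotational_connection_Laplacian_lemma} and \ref{exact_translational_projector_lemma}.

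First I would observe that both summands of $\tQ$ are positive semidefinite: $\tQtran = \tT\transpose \tranPrecisions^{\frac{1}{2}} \OrthoProjMatrix \tranPrecisions^{\frac{1}{2}} \tT$ is manifestly PSD since $\OrthoProjMatrix$ is an orthogonal projection, and $\TrueRotConLap$ is PSD by Lemma \ref{exact_rotational_connection_Laplacian_lemma}(i)--(ii), which exhibits its spectrum as $d$ copies of $\lambda(\LapRotW) \subset \RnonNeg$.  Hence $\tQ \succeq 0$, which takes care of step (ii) and also implies $\tr(\tQ R\transpose R) = \langle \tQ, R\transpose R \rangle_F \ge 0$ for every $R \in \Orthogonal(d)^n$.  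Next, Lemma \ref{exact_rotational_connection_Laplacian_lemma}(iii) gives $\TrueRotConLap \trot\transpose = 0$, while Lemma \ref{exact_translational_projector_lemma} gives $\OrthoProjMatrix \tranPrecisions^{\frac{1}{2}} \tT \trot\transpose = 0$ and hence $\tQtran \trot\transpose = 0$; summing yields $\tQ \trot\transpose = 0$.  In particular $\tr(\tQ \trot\transpose \trot) = 0$, so $\trot$ attains the lower bound and is a minimizer of Problem \ref{Orthogonal_relaxation_of_the_MLE_problem}.  Substituting into \eqref{closed_form_solution_for_Lambda_star} of Lemma \ref{KKT_conditions_for_primal_semidefinite_relaxation_lemma} then gives $\Lopt = \SymBlockDiag_d(\tQ \trot\transpose \trot) = 0$, so $\certMat = \tQ - \Lopt = \tQ$, completing step (i).

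For step (iii), since $\tQ$ is a sum of two PSD matrices, $\ker(\tQ) = \ker(\TrueRotConLap) \cap \ker(\tQtran)$.  By Lemma \ref{exact_rotational_connection_Laplacian_lemma}(iii), $\ker(\TrueRotConLap) = \image(\trot\transpose)$ is exactly $d$-dimensional, and the computation just carried out shows $\image(\trot\transpose) \subseteq \ker(\tQtran)$.  Therefore $\ker(\tQ) = \image(\trot\transpose)$ and $\rank(\tQ) = dn - d$.  With $\certMat = \tQ \succeq 0$ and $\rank(\certMat) = dn - d$, Theorem \ref{Sufficient_conditions_for_exact_recovery_for_orthogonally_relaxed_MLE_prop} immediately implies that $\Zopt = \trot\transpose \trot$ is the unique minimizer of Problem \ref{dual_semidefinite_relaxation_for_SE3_synchronization_problem}.

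The argument is essentially mechanical once the two auxiliary lemmas are in hand; no single step poses a real obstacle.  The only point requiring any thought is the kernel computation for $\tQ$, where one must exploit the fact that the kernel of a sum of PSD matrices is the intersection of their kernels (so that the containment $\image(\trot\transpose) \subseteq \ker(\tQ)$, which is easy, automatically becomes an equality thanks to the tight upper bound $\dim \ker(\TrueRotConLap) = d$ supplied by Lemma \ref{exact_rotational_connection_Laplacian_lemma}).
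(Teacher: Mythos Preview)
Your proposal is correct and follows essentially the same approach as the paper: both arguments use Lemmas \ref{exact_rotational_connection_Laplacian_lemma} and \ref{exact_translational_projector_lemma} to show $\tQ \succeq 0$ and $\tQ\trot\transpose = 0$, conclude that $\trot$ minimizes Problem \ref{Orthogonal_relaxation_of_the_MLE_problem} with $\Lopt = 0$ and hence $\certMat = \tQ$, and then invoke Theorem \ref{Sufficient_conditions_for_exact_recovery_for_orthogonally_relaxed_MLE_prop}. The only cosmetic difference is in the rank computation: the paper argues $\rank(\tQ) \ge dn - d$ via the Loewner ordering $\tQ \succeq \TrueRotConLap$ (which has $dn - d$ positive eigenvalues by Lemma \ref{exact_rotational_connection_Laplacian_lemma}), whereas you argue $\ker(\tQ) = \ker(\TrueRotConLap) \cap \ker(\tQtran) = \image(\trot\transpose)$ directly; these are equivalent observations.
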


\begin{proof}
Since $\TrueRotConLap \succeq 0$ by Lemma \ref{exact_rotational_connection_Laplacian_lemma}(i) and $\tQtran \succeq 0$ (immediate from the definition \eqref{nQtran_alternative_form}), $\tQ = \TrueRotConLap + \tQtran \succeq 0$ as well, and therefore the optimal value of Problem \ref{Orthogonal_relaxation_of_the_MLE_problem} satisfies $\OMLEval \ge 0$.  Furthermore, $\trot\transpose \in \ker(\TrueRotConLap), \ker(\tQtran)$ by Lemmas \ref{exact_rotational_connection_Laplacian_lemma}(iii) and \ref{exact_translational_projector_lemma}, respectively, so $\trot\transpose \in \ker(\tQ)$ as well.  This implies that  $\tr(\tQ \trot\transpose \trot) = 0$, and we conclude that $\trot$ is an optimal solution of the noiseless version of Problem \ref{Orthogonal_relaxation_of_the_MLE_problem} (since it is a feasible point that attains the lower bound of $0$ for Problem \ref{Orthogonal_relaxation_of_the_MLE_problem}'s optimal value $\OMLEval$).  This also implies  $\rank(\tQ) = dn - d$, since $\tQ \succeq \TrueRotConLap$ (and $\TrueRotConLap$ has $dn - d$ positive eigenvalues by Lemma \ref{exact_rotational_connection_Laplacian_lemma}(i)) and $\image(\trot \transpose) \subseteq \ker(\tQ)$ with $\dim ( \image(\trot\transpose) ) = d$.  Finally, a straightforward computation using equations \eqref{closed_form_solution_for_Lambda_star} and \eqref{candidate_certificate_matrix_definition} shows that the candidate certificate matrix corresponding to the optimal solution $\trot$ is $\true{\certMat} = \tQ$. The claim then follows from an application of Theorem  \ref{Sufficient_conditions_for_exact_recovery_for_orthogonally_relaxed_MLE_prop}.
\end{proof}

In addition to providing a useful sanity check on the feasibility of our overall strategy by showing that it will at least succeed under ideal conditions, the proof of Theorem \ref{exactness_of_relaxation_in_the_zero_noise_case_thm} also points the way towards a proof of the more general Proposition \ref{A_sufficient_condition_for_exact_recovery_prop}, as we now describe.  Observe that in the noiseless case, the certificate matrix $\certMat = \tQ = \TrueRotConLap + \tQtran$ corresponding to the optimal solution $\trot$ has a spectrum consisting of $d$ copies of $0$ and $dn - d$ strictly positive eigenvalues that are lower-bounded by $\lambda_2(\LapRotW) > 0$.  Now in the more general (noisy) case, both the data matrix $\nQ$ \emph{and} the minimizer $\Ropt$ of Problem \ref{Orthogonal_relaxation_of_the_MLE_problem} will vary as a function of the noise added to the measurements $\npose_{ij}$ in \eqref{probabilistic_generative_model_for_noisy_observations}, and in consequence so will the matrix $\certMat$.  However the first-order condition \eqref{primal_semidefinite_relaxation_first_order_necessary_condition_eq} appearing in Lemma \ref{KKT_conditions_for_primal_semidefinite_relaxation_lemma} can alternatively be read as $\certMat {\Ropt}\transpose = 0$, which guarantees that $\certMat$ always has at least $d$ eigenvalues fixed to $0$; furthermore, in general the eigenvalues of a matrix $X$ are continuous functions of $X$, and equations \eqref{closed_form_solution_for_Lambda_star} and \eqref{candidate_certificate_matrix_definition} show that $\certMat$ is a continuous function of $\nQ$ and $\Ropt$.  Consequently, if we can bound the magnitude of the estimation error $\Oorbdist(\trot, \Ropt)$ for a minimizer $\Ropt$ of Problem \ref{Orthogonal_relaxation_of_the_MLE_problem} as a function of the magnitude of the noise $\dQ = \nQ - \tQ$ corrupting the data matrix $\nQ$, then by controlling $\dQ$ we can in turn ensure (via continuity) that the eigenvalues of the matrix $\certMat$ constructed at the minimizer $\Ropt$ remain nonnegative, and hence Problem \ref{dual_semidefinite_relaxation_for_SE3_synchronization_problem} will remain exact.

\subsection{An upper bound for the estimation error in Problem \ref{Orthogonal_relaxation_of_the_MLE_problem}} 
\label{Upper_bound_for_estimation_error_subsection}

In this subsection we derive an upper bound on the estimation error $\Oorbdist(\trot, \Ropt)$ of a minimizer $\Ropt$ of Problem \ref{Orthogonal_relaxation_of_the_MLE_problem} as a function of the noise $\dQ \triangleq \nQ - \tQ$ corrupting the data matrix $\nQ$.  To simplify the derivation,  in the sequel we will assume (without loss of generality) that $\Ropt$ is an element of its orbit \eqref{Oorbit_definition} attaining the orbit distance $\Oorbdist(\trot, \Ropt)$ defined in \eqref{Oorbit_distance_definition}.

To begin, the optimality of $\Ropt$ implies that:
\begin{equation}
 \label{traces_of_RotMLEtranspose_RotMLE_for_global_error_bound}
 \begin{split}
\tr(\nQ \trot\transpose \trot ) &= \tr\left(\dQ \trot\transpose \trot \right) + \tr\left(\tQ \trot\transpose \trot \right) \\
&\ge \tr\left(\dQ {\Ropt}\transpose \Ropt \right) + \tr\left(\tQ {\Ropt}\transpose \Ropt \right) = \tr\left(\nQ {\Ropt}\transpose \Ropt \right).
\end{split}
\end{equation}
Now $\tr(\tQ\trot\transpose \trot) = 0$ since we showed in the previous subsection that $\image(\trot\transpose) = \ker(\tQ)$, and the identity $\tr(\dQ \trot\transpose \trot) = \vect(\trot)\transpose (\dQ \otimes I_d) \vect(\trot)$ together with the submultiplicativity of the spectral norm shows that
\begin{equation}
\label{upper_bound_on_trace_of_product_with_error_matrix}
\lvert \tr(\dQ \trot\transpose \trot) \rvert \le \left \lVert \dQ \otimes I_d \right \rVert_2 \left \lVert  \vect(\trot) \right \rVert_2^2 = \left \lVert \dQ \right \rVert_2 \left \lVert  \trot \right \rVert_F^2 = dn \lVert \dQ \rVert_2
\end{equation}
(and similarly for $\lvert \tr(\dQ {\Ropt}\transpose \Ropt) \rvert$); consequently, \eqref{traces_of_RotMLEtranspose_RotMLE_for_global_error_bound} in turn implies:
\begin{equation}
\label{upper_bound_on_tQ_Ropt_squared}
 2dn \lVert \dQ \rVert_2 \ge \tr\left(\tQ {\Ropt}\transpose \Ropt \right).
\end{equation}

We will now lower-bound the right-hand side of \eqref{upper_bound_on_tQ_Ropt_squared} as a function of the estimation error $\Oorbdist(\trot, \Ropt)$, thereby enabling us to upper-bound this error by controlling $\lVert \dQ \rVert_2$.  To do so, we make use of the following:

\begin{lem}
\label{orthogonal_projection_onto_imageRt_lemma}
 Fix $R \in \Orthogonal(d)^n \subset \R^{d \times dn}$, and let $M = \lbrace W R \mid W \in \R^{d \times d} \rbrace \subset \R^{d \times dn}$ denote the subspace of matrices whose rows are contained in $\image(R\transpose)$.  Then

\begin{equation}
\label{orthogonal_projection_operator_onto_image_of_Ztranspose}
 \begin{aligned}
&\proj_{V} \colon \R^{dn} \to \image(R\transpose) \\
&\proj_{V}(x) = \frac{1}{n} R\transpose Rx
 \end{aligned}
\end{equation}
is the orthogonal projection operator onto $\image(R\transpose)$ with respect to the usual $\ell_2$ inner product on $\R^{dn}$, and the mapping
\begin{equation}
 \begin{aligned}
&\proj_M \colon \R^{d \times dn} \to M \\
&\proj_M(X) = \frac{1}{n} XR\transpose R
 \end{aligned}
\end{equation}
that applies $\proj_{V}$ to each row of $X$ is the orthogonal projection operator onto $M$ with respect to the Frobenius inner product on $\R^{d \times dn}$. 
 \end{lem}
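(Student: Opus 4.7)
My plan is to verify the two claims directly by checking that each candidate map is (i) idempotent, (ii) self-adjoint with respect to the relevant inner product, and (iii) has the correct image; these three properties characterize orthogonal projection.

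The key algebraic fact that makes everything work is that, because $R = (R_1, \dotsc, R_n)$ with $R_i \in \Orthogonal(d)$, the $(d \times d)$-block expansion gives
\begin{equation*}
 RR\transpose = \sum_{i=1}^n R_i R_i\transpose = \sum_{i=1}^n I_d = n I_d.
\end{equation*}
First I would treat $\proj_V$. Setting $P \triangleq \tfrac{1}{n} R\transpose R \in \R^{dn \times dn}$, idempotency follows from $P^2 = \tfrac{1}{n^2} R\transpose (R R\transpose) R = \tfrac{1}{n^2} R\transpose (nI_d) R = P$, and self-adjointness from $P\transpose = \tfrac{1}{n}(R\transpose R)\transpose = P$. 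For the image identity, the inclusion $\image(P) \subseteq \image(R\transpose)$ is immediate from the factorization, and for the reverse inclusion any $y = R\transpose v$ satisfies $Py = \tfrac{1}{n} R\transpose (RR\transpose) v = R\transpose v = y$, so $\image(R\transpose) \subseteq \image(P)$. Together these show that $P$ is the orthogonal projector onto $\image(R\transpose)$ with respect to the standard $\ell_2$ inner product on $\R^{dn}$.

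For the matrix statement, I would argue that right-multiplication by $P$ amounts to applying $P$ to each row of $X$ individually. Writing $X$ in terms of its rows $x_1\transpose, \dotsc, x_d\transpose \in \R^{dn}$, the $i$-th row of $\proj_M(X) = \tfrac{1}{n} X R\transpose R$ equals $(Px_i)\transpose$. Since the Frobenius inner product on $\R^{d \times dn}$ splits as a sum of the standard $\ell_2$ inner products on the rows, and the subspace $M$ is characterized by the condition that each row lies in $\image(R\transpose)$, the row-wise application of an $\ell_2$-orthogonal projector is exactly the Frobenius-orthogonal projector onto $M$. The idempotency, self-adjointness, and image-identification for $\proj_M$ then follow immediately from the corresponding properties of $P$ established above. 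I do not anticipate any real obstacle here; the only substantive content is the identity $RR\transpose = nI_d$, after which everything reduces to a short verification.
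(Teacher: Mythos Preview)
Your proposal is correct and proceeds along essentially the same lines as the paper: both arguments rest on the single algebraic identity $RR\transpose = nI_d$, and then verify the defining properties of orthogonal projection (you via idempotency, symmetry, and image identification; the paper via fixing $\image(R\transpose)$ pointwise and checking $\image(\proj_V)\perp\ker(\proj_V)$ directly). The row-wise extension to $\proj_M$ is handled identically.
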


 \begin{proof}
  If $x \in \image(R\transpose)$, then $x = R\transpose v$ for some $v \in \R^{d}$, and 
  \begin{equation}
   \proj_{V}(x) = \frac{1}{n} R\transpose R (R\transpose v) = R\transpose v = x,
  \end{equation}
since $R R\transpose = n I_d$ as $R \in \Orthogonal(d)^n$ by hypothesis; this shows that $\proj_V$ is a projection onto $\image(R\transpose)$.  To show that $\proj_V$ is \emph{orthogonal} projection with respect to the $\ell_2$ inner product on $\R^{dn}$, it suffices to show that $\image(\proj_V) \perp \ker(\proj_V)$.  To that end, let $x, y \in \R^{dn}$, and observe that
\begin{equation}
\begin{split}
 \langle \proj_V(x), y - \proj_V(y) \rangle &= \left \langle \frac{1}{n} R\transpose R x, y - \frac{1}{n} R\transpose R y \right \rangle_2 \\
 &=\frac{1}{n} \left \langle R\transpose R x, y  \right \rangle_2 - \frac{1}{n^2} \left \langle R\transpose R x, R\transpose R y \right \rangle_2 \\
 &= \frac{1}{n} x\transpose R\transpose R y - \frac{1}{n^2} x\transpose R\transpose RR\transpose R y \\
 &= 0.
 \end{split}
\end{equation}

Next, let $X \in \R^{d \times dn}$ and observe that $\proj_M(X) = \frac{1}{n} XR\transpose R$ is the matrix obtained by applying the projection $\proj_V$ to each row of $X$; this immediately implies that $\proj_M$ is itself a projection onto $M$.  Furthermore, given $X, Y \in \R^{d \times dn}$, we observe that
\begin{equation}
 \begin{split}
\left \langle \proj_M(X), Y - \proj_M(Y) \right \rangle_F &= \left \langle \proj_M(X)\transpose, Y\transpose - \proj_M(Y)\transpose \right \rangle_F \\
&= \left \langle \vect\left(\proj_M(X)\transpose \right), \vect\left(Y\transpose - \proj_M(Y)\transpose \right) \right \rangle_2 \\
&= 0,
 \end{split}
\end{equation}
since we have already established that $\proj_M$ acts row-wise by $\proj_V$, which is orthogonal projection with respect to the $\ell_2$ inner product.
\end{proof}

Since $\ker(\tQ) = \image(\trot\transpose)$ and $\dim(\image(\trot\transpose) ) = d$, it follows from Lemma \ref{orthogonal_projection_onto_imageRt_lemma} that
\begin{equation}
\label{lower_bound_on_tQ_Ropt_squared_in_terms_of_normal_component_to_kernel}
\tr\left(\tQ {\Ropt}\transpose \Ropt \right) \ge \lambda_{d+1}(\tQ) \lVert P \rVert_F^2,
\end{equation}
where
\begin{equation}
\label{orthogonal_decomposition_of_Ropt}
\begin{split}
\Ropt &= K + P, \\
K &= \proj_M(\Ropt) = \frac{1}{n}  \Ropt \trot\transpose \trot, \\
P &= \Ropt - \proj_M(\Ropt) = \Ropt -  \frac{1}{n} \Ropt \trot\transpose \trot
\end{split}
\end{equation}
is an orthogonal decomposition of $\Ropt$ with respect to the Frobenius inner product on $\R^{d \times dn}$, and the rows of $P$ are contained in $\image(R\transpose)^{\perp} = \ker(\tQ)^{\perp}$.  Using \eqref{orthogonal_decomposition_of_Ropt}, we compute:
\begin{equation}
 \lVert K \rVert_F^2 = \frac{1}{n^2} \tr \left(\trot\transpose \trot {\Ropt}\transpose \Ropt \trot\transpose \trot \right) = \frac{1}{n} \tr \left( \trot {\Ropt}\transpose \Ropt \trot\transpose \right) = \frac{1}{n} \left \lVert \trot {\Ropt}\transpose \right \rVert_F^2
\end{equation}
where we have used the cyclic property of the trace and the fact that $\trot\trot\transpose = nI_d$.  Since \eqref{orthogonal_decomposition_of_Ropt} is an orthogonal decomposition, it follows that
\begin{equation}
\label{normal_component_of_Ropt}
 \lVert P \rVert_F^2 = \left\lVert \Ropt \right \rVert_F^2 - \lVert K \rVert_F^2 = dn - \frac{1}{n} \left \lVert \trot {\Ropt}\transpose \right \rVert_F^2.
\end{equation}
We may therefore lower-bound $\lVert P \rVert_F^2$ by upper-bounding $ \lVert \trot {\Ropt}\transpose \rVert_F^2$ as functions of $\Oorbdist(\trot, \Ropt)$.  To that end, recall that $\Ropt$ is by hypothesis a representative of its orbit \eqref{Oorbit_definition} that attains the orbit distance \eqref{Oorbit_distance_definition}; Theorem \ref{computing_the_orbit_distance_theorem} then implies that
\begin{equation}
\label{estimation_error_of_minimizer_in_terms_of_singular_values}
\Oorbdist(\trot, \Ropt)^2 = \lVert \trot - \Ropt \rVert_F^2 = 2dn - 2\sum_{i = 1}^d \sigma_i,
\end{equation}
where
\begin{equation}
\label{singular_value_decomposition_for_upper_bounding_P_squared_norm}
\trot {\Ropt}\transpose =  U \Diag(\sigma_1, \dotsc, \sigma_d) V\transpose
\end{equation}
is a singular value decomposition of $\trot {\Ropt}\transpose$.  It follows from \eqref{singular_value_decomposition_for_upper_bounding_P_squared_norm} and the orthogonal invariance of the Frobenius inner product that 
\begin{equation}
\label{norm_of_trot_Ropt_squared_in_terms_of_singular_values}
 \left \lVert \trot {\Ropt}\transpose \right \rVert_F^2 = \left \lVert \Diag(\sigma_1, \dotsc, \sigma_d) \right \rVert_F^2 = \sum_{i = 1}^d \sigma_i^2,
\end{equation}
and therefore \eqref{estimation_error_of_minimizer_in_terms_of_singular_values} and \eqref{norm_of_trot_Ropt_squared_in_terms_of_singular_values} imply that we may obtain an upper bound $\epsilon^2$ for $\lVert \trot {\Ropt}\transpose  \rVert_F^2$ in terms of $\delta^2 = \Oorbdist(\trot, \Ropt)^2$ as the optimal value of:
\begin{equation}
\label{maximization_problem_for_squared_Frobenius_norm_of_RotMLEtrot_for_Problem4}
\begin{split}
 \epsilon^2 &= \max_{\sigma_i \ge 0}  \sum_{i = 1}^d \sigma_i^2 \\
 \st &2dn - 2\sum_{i = 1}^d \sigma_i = \delta^2.
 \end{split}
\end{equation}
The first-order necessary optimality condition for \eqref{maximization_problem_for_squared_Frobenius_norm_of_RotMLEtrot_for_Problem4} is
\begin{equation}
 2\sigma_i = -2 \lambda
\end{equation}
for all $i \in [d]$, where $\lambda \in \R$ is a Lagrange multiplier, and therefore $\sigma_1 = \dotsb = \sigma_d = \sigma$ for some $\sigma \in \R$.  Solving the constraint in \eqref{maximization_problem_for_squared_Frobenius_norm_of_RotMLEtrot_for_Problem4} for $\sigma$
shows that
\begin{equation}
\label{optimal_value_of_sigma_for_upper_bound_on_squared_Frobenius_norm_of_RotMLEtrot_for_Problem4}
 \sigma = n - \frac{\delta^2}{2d},
\end{equation}
and therefore the optimal value of the objective in \eqref{maximization_problem_for_squared_Frobenius_norm_of_RotMLEtrot_for_Problem4} is 
\begin{equation}
\label{optimal_value_of_squared_Frobenius_norm_of_trot_RotEst_transpose_as_a_function_of_delta2_for_Problem_4}
 \epsilon^2 = d\left( n - \frac{\delta^2}{2d} \right)^2.
\end{equation}
Recalling the original definitions of $\epsilon^2$ and $\delta^2$, we conclude from \eqref{optimal_value_of_squared_Frobenius_norm_of_trot_RotEst_transpose_as_a_function_of_delta2_for_Problem_4} and \eqref{normal_component_of_Ropt} that 
\begin{equation}
\label{lower_bound_on_normal_component_of_RotEst_for_Problem_4}
 \lVert P \rVert_F^2 \ge dn - \frac{d}{n} \left( n - \frac{\Oorbdist(\trot, \Ropt)^2}{2d} \right)^2  = \Oorbdist(\trot, \Ropt)^2 - \frac{\Oorbdist(\trot, \Ropt)^4}{4dn}.
\end{equation}
Applying the inequality $\Oorbdist(\trot, \Ropt)^2 \le 2dn$ (which follows immediately from the nonnegativity of the nuclear norm in \eqref{closed_form_Oorbdist_computation}), we may in turn lower-bound the right-hand side of \eqref{lower_bound_on_normal_component_of_RotEst_for_Problem_4} as:
\begin{equation}
\label{reduction_to_quadratic_function_of_estimation_error}
 \Oorbdist(\trot, \Ropt)^2 - \frac{\Oorbdist(\trot, \Ropt)^4}{4dn} = \left(1 - \frac{\Oorbdist(\trot, \Ropt)^2}{4dn}\right) \Oorbdist(\trot, \Ropt)^2 \ge \frac{1}{2} \Oorbdist(\trot, \Ropt)^2.
\end{equation}
Finally, combining inequalities \eqref{upper_bound_on_tQ_Ropt_squared}, \eqref{lower_bound_on_tQ_Ropt_squared_in_terms_of_normal_component_to_kernel}, \eqref{lower_bound_on_normal_component_of_RotEst_for_Problem_4}, and \eqref{reduction_to_quadratic_function_of_estimation_error}, we obtain the following:

\begin{thm}[An upper bound for the estimation error in Problem \ref{Orthogonal_relaxation_of_the_MLE_problem}]
\label{An_upper_bound_on_the_estimation_error_in_Problem_5_Theorem}
Let $\tQ$ be the data matrix of the form \eqref{Q_quadratic_form_definition} constructed using the true \emph{(}latent\emph{)} relative transforms $\tpose_{ij} = (\ttran_{ij}, \trot_{ij})$ in \eqref{probabilistic_generative_model_for_noisy_observations}, $\trot \in \SO(d)^n$ the matrix composed of the true \emph{(}latent\emph{)} rotational states, and $\Ropt \in \Orthogonal(d)^n$ an estimate of $\trot$ obtained as a minimizer of Problem \ref{Orthogonal_relaxation_of_the_MLE_problem}. Then the estimation error $\Oorbdist(\trot, \Ropt)$ admits the following upper bound:
\begin{equation}
 \sqrt{\frac{4dn \lVert \nQ - \tQ \rVert_2}{\lambda_{d+1}(\tQ)}} \ge \Oorbdist(\trot, \Ropt).
\end{equation}
\end{thm}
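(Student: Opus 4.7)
The plan is to combine the optimality of $\Ropt$ as a minimizer of Problem \ref{Orthogonal_relaxation_of_the_MLE_problem} with two pieces of structural information about the exact data matrix $\tQ$: first, that its kernel is exactly $\image(\trot\transpose)$ (Lemma \ref{exact_rotational_connection_Laplacian_lemma} together with Lemma \ref{exact_translational_projector_lemma}), and second, that its smallest nonzero eigenvalue equals $\lambda_{d+1}(\tQ)$. Decomposing $\nQ = \tQ + \dQ$, optimality gives $\tr(\nQ\trot\transpose\trot) \ge \tr(\nQ{\Ropt}\transpose\Ropt)$. Since $\trot\transpose \in \ker(\tQ)$, the $\tQ$-term on the left vanishes, and the crude bound $\lvert \tr(\dQ X\transpose X)\rvert \le dn\lVert\dQ\rVert_2$ applied to both $X = \trot$ and $X = \Ropt$ (using the identity $\tr(\dQ X\transpose X) = \vect(X)\transpose(\dQ \otimes I_d)\vect(X)$ and $\lVert X\rVert_F^2 = dn$) produces the key one-sided control
\begin{equation*}
2dn\lVert \dQ\rVert_2 \;\ge\; \tr(\tQ{\Ropt}\transpose\Ropt).
\end{equation*}

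Next, I would lower-bound the right-hand side by the squared distance of $\Ropt$'s rows from $\ker(\tQ)$. Orthogonally split $\Ropt = K + P$ with respect to the Frobenius inner product, where $K$ has rows in $\image(\trot\transpose)$ (explicitly $K = \tfrac1n \Ropt\trot\transpose\trot$ by Lemma \ref{orthogonal_projection_onto_imageRt_lemma}) and $P$ has rows orthogonal to $\ker(\tQ)$. Since $\tQ K = 0$, we get $\tr(\tQ{\Ropt}\transpose\Ropt) = \tr(\tQ P\transpose P)$, and the spectral bound on the restriction of $\tQ$ to $\ker(\tQ)^{\perp}$ yields $\tr(\tQ P\transpose P) \ge \lambda_{d+1}(\tQ)\lVert P\rVert_F^2$.

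The crux is then to relate $\lVert P\rVert_F^2$ to the orbit distance $\Oorbdist(\trot,\Ropt)$; this is the step where I expect the most work, since one has to exploit the freedom in choosing a representative in the orbit of $\Ropt$. Taking $\Ropt$ to be a representative attaining the minimum in \eqref{Oorbit_distance_definition} (permissible by gauge invariance of the objective in Problem \ref{Orthogonal_relaxation_of_the_MLE_problem}), Theorem \ref{computing_the_orbit_distance_theorem} gives $\Oorbdist(\trot,\Ropt)^2 = 2dn - 2\sum_{i=1}^d \sigma_i$ with $\sigma_i$ the singular values of $\trot{\Ropt}\transpose$. Since $\lVert P\rVert_F^2 = dn - \tfrac1n\lVert\trot{\Ropt}\transpose\rVert_F^2 = dn - \tfrac1n\sum \sigma_i^2$, maximizing $\sum\sigma_i^2$ subject to the linear constraint fixed by $\Oorbdist$ is a one-line Lagrange-multiplier argument which forces all $\sigma_i$ equal, producing $\lVert P\rVert_F^2 \ge \Oorbdist(\trot,\Ropt)^2 - \Oorbdist(\trot,\Ropt)^4/(4dn)$. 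The universal bound $\Oorbdist(\trot,\Ropt)^2 \le 2dn$ (from nonnegativity of the nuclear norm in \eqref{closed_form_Oorbdist_computation}) then gives $\lVert P\rVert_F^2 \ge \tfrac12 \Oorbdist(\trot,\Ropt)^2$.

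Finally, chaining the three inequalities
\begin{equation*}
2dn\lVert\dQ\rVert_2 \;\ge\; \lambda_{d+1}(\tQ)\lVert P\rVert_F^2 \;\ge\; \tfrac12\lambda_{d+1}(\tQ)\,\Oorbdist(\trot,\Ropt)^2
\end{equation*}
and solving for $\Oorbdist(\trot,\Ropt)$ yields the advertised bound. The delicate step is the SVD-plus-Lagrangian argument of the third paragraph: it requires a careful choice of orbit representative and the mildly nonobvious reduction of the bound on $\lVert P\rVert_F^2$ from a quartic to a quadratic in $\Oorbdist(\trot,\Ropt)$ via the \emph{a priori} bound $\Oorbdist(\trot,\Ropt)^2 \le 2dn$.
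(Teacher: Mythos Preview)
Your proposal is correct and follows essentially the same route as the paper's own proof: the same optimality inequality, the same spectral-norm bound on the $\dQ$ terms, the same orthogonal decomposition $\Ropt = K + P$ via Lemma \ref{orthogonal_projection_onto_imageRt_lemma}, the same Lagrange-multiplier maximization of $\sum\sigma_i^2$ under the linear constraint from Theorem \ref{computing_the_orbit_distance_theorem}, and the same reduction from the quartic to the quadratic bound using $\Oorbdist(\trot,\Ropt)^2\le 2dn$. One tiny slip: you write ``$\tQ K = 0$,'' but dimensionally this should be $\tQ K\transpose = 0$ (equivalently $K\tQ = 0$ by symmetry of $\tQ$); the argument is unaffected.
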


\subsection{Finishing the proof}

Finally, we complete the proof of Proposition \ref{A_sufficient_condition_for_exact_recovery_prop} with the aid of Theorems \ref{Sufficient_conditions_for_exact_recovery_for_orthogonally_relaxed_MLE_prop} and \ref{An_upper_bound_on_the_estimation_error_in_Problem_5_Theorem}.

\begin{proof}[Proof of Proposition \ref{A_sufficient_condition_for_exact_recovery_prop}]
Let $\trot \in \SO(d)^n$ be the matrix of true (latent) rotations, $\Ropt \in \Orthogonal(d)^n$ an estimate of $\trot$ obtained as a minimizer of Problem \ref{Orthogonal_relaxation_of_the_MLE_problem}, and assume without loss of generality that $\Ropt$ is an element of its orbit \eqref{Oorbit_definition} attaining the orbit distance $\Oorbdist(\trot, \Ropt)$ defined in \eqref{Oorbit_distance_definition}.  Set $\dQ \triangleq \nQ - \tQ$ and $\dR \triangleq \Ropt - \trot$, and consider the following decomposition of the certificate matrix $\certMat$ defined in  \eqref{candidate_certificate_matrix_definition} and \eqref{closed_form_solution_for_Lambda_star}:
\begin{equation}
   \label{decomposition_of_certificate_matrix}
 \begin{split}
\certMat &= \nQ - \SymBlockDiag_d\left(\nQ {\Ropt}\transpose \Ropt \right) \\
&= \left(\tQ + \dQ \right) - \SymBlockDiag_d\left( \left(\tQ + \dQ \right) (\trot + \dR)\transpose (\trot + \dR) \right) \\
&= \tQ + \dQ - \SymBlockDiag_d\left(
\begin{split}
&\tQ \trot \transpose \trot + \dQ \trot\transpose\trot + \tQ \trot\transpose\dR + \tQ\dR\transpose \trot \\
&\quad + \dQ\trot\transpose \dR + \dQ\dR\transpose \trot + \tQ\dR\transpose \dR + \dQ \dR\transpose \dR
 \end{split}\right) \\
 &= \tQ + \underbrace{\dQ - \SymBlockDiag_d\left(
\begin{split}
& \dQ \trot\transpose\trot + \tQ\dR\transpose \trot + \dQ\trot\transpose \dR \\
&\quad + \dQ\dR\transpose \trot + \tQ\dR\transpose \dR + \dQ \dR\transpose \dR
 \end{split}\right)}_{\Delta \certMat },
 \end{split}
\end{equation}
where we have used the fact that $\image(\trot\transpose) = \ker(\tQ)$ in passing from lines 2 to 3 above (cf. Lemmas \ref{exact_rotational_connection_Laplacian_lemma} and \ref{exact_translational_projector_lemma}).  Observe that the term labeled $\Delta \certMat$ in \eqref{decomposition_of_certificate_matrix} depends continuously upon $\dQ$ and $\dR$, with $\Delta \certMat = 0$ for $(\dQ, \dR) = (0,0)$; furthermore, $\dQ \to 0$ implies $\dR \to 0$  by Theorem \ref{An_upper_bound_on_the_estimation_error_in_Problem_5_Theorem}.  It therefore follows from continuity that there exists some $\beta_1 > 0$ such that $\lVert \Delta \certMat \rVert_2 < \lambda_{d+1}(\tQ)$ for all $\lVert \dQ \rVert_2 < \beta_1$.  Moreover, if $\lVert \Delta \certMat \rVert_2 < \lambda_{d+1}(\tQ)$, it follows from \eqref{decomposition_of_certificate_matrix} that 
\begin{equation}
 \lambda_i(\certMat) \ge \lambda_i(\tQ) - \lVert \Delta \certMat \rVert_2 > \lambda_i(\tQ) - \lambda_{d+1}(\tQ),
\end{equation}
and therefore $\lambda_i(\certMat) > 0$ for $i \ge d+1$; i.e., $\certMat$ has at least $dn - d$ strictly positive eigenvalues.  Furthermore, Lemma \ref{KKT_conditions_for_primal_semidefinite_relaxation_lemma} shows that $\certMat {\Ropt}\transpose = 0$, which implies that $\ker(\certMat) \supseteq \image({\Ropt}\transpose)$; since $\dim(\image({\Ropt}\transpose)) = d$, this in turn implies that $\certMat$ has at least $d$ eigenvalues equal to $0$.  Since this exhausts $\certMat$'s $dn$ eigenvalues, we conclude that $\certMat \succeq 0$ and $\rank(\certMat) = dn - d$, and consequently Theorem \ref{Sufficient_conditions_for_exact_recovery_for_orthogonally_relaxed_MLE_prop} guarantees that $\Zopt = {\Ropt}\transpose \Ropt$ is the unique minimizer of Problem \ref{dual_semidefinite_relaxation_for_SE3_synchronization_problem}.

Now suppose further that $\lVert \dQ \rVert_2 < \beta_2$ with $\beta_2 \triangleq \lambda_{d+1}(\tQ) / 2dn$.  Then Theorem \ref{An_upper_bound_on_the_estimation_error_in_Problem_5_Theorem} implies $\Oorbdist(\trot, \Ropt) = \lVert \trot - \Ropt \rVert_F < \sqrt{2}$, and therefore in particular that $\lVert \trot_i - \Ropt_i \rVert_F < \sqrt{2}$ for all $i \in [n]$.  But the $+1$ and $-1$ components of $\Orthogonal(d)$ are separated by a distance $\sqrt{2}$ under the Frobenius norm, so $\trot_i \in \SO(d)$ and $\lVert \trot_i - \Ropt_i \rVert_F < \sqrt{2}$ for all $i \in [n]$ together imply that $\Ropt \in \SO(d)^n$, and therefore that $\Ropt$ is in fact an optimal solution of Problem \ref{Simplified_maximum_likelihood_estimation_for_SE3_synchronization} as well.

Proposition \ref{A_sufficient_condition_for_exact_recovery_prop} then follows from the preceding paragraphs by taking $\beta \triangleq \min \lbrace \beta_1, \beta_2 \rbrace > 0$.
\end{proof}

\bibliographystyle{plainnat}
\bibliography{references}

\end{document}